\newtheorem{assumption}{Assumption}
\newtheorem{claim}{Claim}
\newtheorem{theorem}{Theorem}
\newtheorem{lemma}{Lemma}
\icmltitlerunning{Combining Parametric and Nonparametric Models for Off-Policy Evaluation}
\begin{document}

\twocolumn[
\icmltitle{Combining Parametric and Nonparametric Models for Off-Policy Evaluation}

\begin{icmlauthorlist}
\icmlauthor{Omer Gottesman}{seas}
\icmlauthor{Yao Liu}{stanford}
\icmlauthor{Scott Sussex}{seas}
\icmlauthor{Emma Brunskill}{stanford}
\icmlauthor{Finale Doshi-Velez}{seas}
\end{icmlauthorlist}

\icmlaffiliation{seas}{Harvard University}
\icmlaffiliation{stanford}{Stanford University}

\icmlcorrespondingauthor{Omer Gottesman}{gottesman@fas.harvard.edu}
\icmlkeywords{Machine Learning, ICML}

\vskip 0.3in
]
\printAffiliationsAndNotice{}

\begin{abstract}
We consider a model-based approach to perform batch off-policy evaluation in reinforcement learning. Our method takes a mixture-of-experts approach to combine parametric and non-parametric models of the environment such that the final value estimate has the least expected error. We do so by 
first estimating the local accuracy of each model and then using a planner to select which model to use at every time step as to minimize the return error estimate along entire trajectories. Across a variety of domains, our mixture-based approach outperforms the individual models alone as well as state-of-the-art importance sampling-based estimators.
\end{abstract}

\section{Introduction}

In the context of reinforcement learning (RL), off-policy evaluation (OPE) refers to the task of evaluating how good a given \emph{evaluation policy} is, using data collected under a different \emph{behavior policy}. This is in contrast with the much simpler problem of on-policy evaluation, where the behavior and evaluation policies are identical, and the value of the policy can be estimated simply by taking the average rewards accumulated over the observed trajectories. The most common set of approaches to OPE derive from importance sampling (e.g. \citet{precup2000eligibility, jiang2016doubly}); while unbiased, they tend to have prohibitively high variance unless a very large amount of data is available.  

To reduce variance, another set of approaches first use the data collected under behavior policy to learn a parametric model to approximate the environment's dynamics, and then use that model to simulate trajectories under the evaluation policy (e.g. \citet{chow2015robust}).  Unfortunately, poor model specification can lead to poor generalization and model bias even if the amount of data is infinite.  \citet{fonteneau2013batch} circumvent this issue by simulating trajectories under the evaluation policy via stitching together actual transitions observed in the data. This nonparametric approach stays closer to the data, and is thus less likely to suffer from generalization errors; unlike the parametric approach, it will be consistent in the limit of infinite data. However, in a finite batch, the required transitions to use this method may not be available in the observed data if there is a large discrepancy between the behavior and evaluation policy.

Our main contribution is to note that the parametric and nonparametric approaches above have complementary strengths: the nonparametric approach can be very accurate where data are abundant, while the parametric approach can often generalize better in situations which are not frequently observed.  We therefore propose a mixture-of-experts (MoE) approach for generating trajectories which switches between sampling transitions from a parametric and nonpaprametric model.  We treat the OPE as a planning problem: at every transition, we choose the model---parametric or nonparametric---to minimize the overall value estimate error.  We derive estimators of local errors for each model, and across a variety of domains, demonstrate that our approach produces more accurate value estimates than a myopic strategy (that does not optimize for error in the long-term), modeling using either parametric or nonparamtric approaches alone, as well as state-of-the-art importance sampling methods.

\section{Background and Notation}

We denote a Markov Decision Process (MDP) by $\langle \mathcal{X}, \mathcal{A}, \gamma, f_t, f_r, p_0, \rangle$, where $\mathcal{X}$, $\mathcal{A}$ and $\gamma$ are the state space, action space, and reward discount, respectively. For this work, we assume the state transition and rewards are deterministic functions of the current state and action such that $x_{t+1}=f_t(x_t, a_t)$ and $r_t=f_r(x_t, a_t)$. The MoE algorithm we present in this paper can be applied to the stochastic case as well, but the model error estimators we develop and use only apply to the deterministic case. $p_0(x)$ denotes the initial states distribution.

A history is a sequence $H^{(i)}\coloneqq( x_0^{(i)}, a_0^{(i)} ,r_0^{(i)} ,... ,x_T^{(i)})$ where $x_0 \sim p_0$ and the actions are chosen according to a policy $\pi$ such that $a_t \sim \pi(a_t|x_t)$.  Finally, let $\Delta(x,x')$ be a distance metric over the space $\mathcal{X}$. Throughout this paper we use the Euclidean distance as the metric over the state space, but discuss how the choice of the metric could impact the performance of the algorithm.

The value of a policy is the expected sum of discounted rewards collected by following the policy, $v^\pi \coloneqq \mathrm{E} [ g_{T} | a_t \sim \pi]$, where we defined the total return of a history as $g_{T} \coloneqq \sum_{t=0}^T \gamma^t r_t$. In off-policy evaluation, our goal is to estimate the value of an \emph{evaluation} policy, $\pi_e$, using data collected using a different \emph{behavior} policy $\pi_b$.

\section{Related Work}
\label{sec:related_work}
One common approach to OPE is to perform evaluation using importance sampling (IS) (e.g. \citet{precup2000eligibility, jiang2016doubly, thomas2016data}), where the value of the evaluation policy is estimated as a weighted average of the returns of individual trajectories, properly weighted to account for the discrepancy between the evaluation and behavior policy. This is in contrast with the nonparametric approach used in \citet{fonteneau2010model, fonteneau2013batch} where the observed data is used to simulate trajectories.

Another approach to OPE first builds parametric models of the environment given the batch data.  The value of the evaluation policy is estimated by simulating trajectories according to the built model (e.g. \citet{chow2015robust, hanna2017bootstrapping, paduraru2012off, liu2018representation}).  With this approach, care must be taken to minimize the bias of the models due to the lack of counterfactual data which may be important for predicting the dynamics under the evaluation policy \citep{johansson2016learning, shalit2016estimating, liu2018representation}.  Our approach mitigates these concerns by only using the parametric model when there exist no similar transitions in the data.

Several recent works have combined IS-based estimators and model-based estimators to produce better off-policy value estimates. The most common approach uses models as part of doubly-robust methods to reduce the variance of IS-based estimators (e.g. \citet{jiang2016doubly,thomas2016data, farajtabar2018more}). \citet{thomas2016data} go further, switching from an IS-based estimate for the initial part of a trajectory to a model-based estimate for the latter part. In contrast to their work, which only switches once from data to model, our method can switch multiple times depending on which sequence of approaches will result in the most accurate value estimate.

More broadly, the general idea of switching between data and models appears in several places in the RL optimization---rather than off-policy-evaluation---literature.  Monte-Carlo Tree Search (MCTS)~\citep{browne2012survey, coulom2006efficient} and TD(N)~\citep{watkins1989learning} evaluate policies by making several prediction steps into the future using data before switching to a model.  More recently,
\citet{doya2002multiple,parbhoo2017combining, parbhoo2018improving, peng2018improving} optimized trajectories with systems modeled by multiple experts. However, to our knowledge, these kinds of approaches have been used to optimize the value of a policy (often online) but not optimize off-policy evaluation error.

\section{Method}
\label{sec:method}

\begin{algorithm}[tb]
   \caption{MoE simulator}
   \label{alg:moe_simulator}
\begin{algorithmic}
   \STATE {\bfseries Input:} Parametric model --- $(\hat{f}_{t,p}, \hat{f}_{r,p})$; Nonparametric model --- $(\hat{f}_{t,np}, \hat{f}_{r,np})$; Initial state distribution estimate --- $\hat{p}_0$; Number of trajectories to simulate --- $N_s$; evaluation policy --- $\pi_e$.
   \\[1\baselineskip]
   \FOR{$n=1$ {\bfseries to} $N_s$}
       \STATE $x_0^{(n)} \leftarrow x_0^{(n)} \sim \hat{p}_0(x)$
       \FOR{$t=0$ {\bfseries to} $T$}
           \STATE $a_t^{(n)} \leftarrow a_t^{(n)} \sim \pi_e(a|x_t^{(n)})$
           \STATE Model $\leftarrow$ ChooseModel$(x_t, a_t)$
           \STATE $\hat{f}_{t,MoE}, \hat{f}_{r,MoE} \leftarrow$ $\hat{f}_{t,\text{Model}}, \hat{f}_{r,\text{Model}}$
           \STATE $x_{t+1}^{(n)} \leftarrow \hat{f}_{t,MoE}(x_t^{(n)}, a_t^{(n)})$
           \STATE $r_t^{(n)} \leftarrow \hat{f}_{r,MoE}(x_t^{(n)}, a_t^{(n)})$
       \ENDFOR
       \STATE $g_T^{(n)} \leftarrow \sum_{t=0}^T \gamma^t r_t$
   \ENDFOR
   \STATE {\bfseries return} $\frac{1}{N}\sum_{n=0}^{N_s} g_T^{(n)}$
\end{algorithmic}
\end{algorithm}

We now introduce our mixture-of-experts (MoE) approach for choosing between parametric and nonparametric models. Algorithm \ref{alg:moe_simulator} presents the pseudo-code for our MoE simulator. In a high-level, our approach generates $N_s$ trajectories, using the evaluation policy to provide actions and the MoE model to provide transitions, and averages their returns. Specifically, each trajectory begins by sampling an initial state from the empirical distribution in the data. For every step of simulation, we first sample an action from the evaluation policy. Next, we choose between the two models by either greedily choosing the model with the smaller estimated transition error (Algorithm \ref{alg:greedy_model_selection} in Appendix \ref{appendix:model_selection_algorithms}) or using the planning method described in Section \ref{sec:planning_method} and Algorithm \ref{alg:mcts_model_selection} in Appendix \ref{appendix:model_selection_algorithms}. We continue sampling the trajectory until some termination condition or maximum trajectory length is reached. The estimated value of the evaluation policy is given by the mean return collected over simulated trajectories.

A core contribution of this work is introducing a way to locally compare the transition prediction error for the parametric and nonparametric models. Accurate estimates are crucial to making sure that the sampled trajectories are as realistic as possible, which is essential to accurately estimate the value of the evaluation policy. In Sections \ref{sec:estimating_errors_non_parametric} and \ref{sec:estimating_errors_parametric} we introduce and motivate these error estimators, and in Appendix \ref{sec:error_estimators_empirical_evaluation} we empirically evaluate the quality of these estimators and their ability to locally select the more accurate of the two models.

\subsection{Planning to optimize the policy value error bound}
\label{sec:planning_method}

We first motivate why we might wish to use a planner to minimize the value estimation error, rather than simply choosing the model that is most accurate for the current state-action pair.  Imagine a situation where one of our models is very accurate for a transition in the current state-action pair, but the next state lies in a region where both our models perform very poorly. In such a situation, we may be willing to accept some error in estimating the immediate transition, in order to continue simulating trajectories in regions of the state space where we have high confidence in our models. 

Below, we quantify this trade-off by computing the bound on the error for the reward collected over an entire trajectory, and use a planning algorithm to select the model at each time step to minimize that bound. The derivation of the bound is closely related to the derivations in \citet{asadi2018lipschitz}, with the distinction that we assume the magnitude of model error changes across different regions in the state action space, and we consider the case of deterministic transition and reward functions rather than stochastic ones. Furthermore, we consider modeling errors of both transition and reward functions, rather than only the transition function.

We first bound the state estimation error at a given time step, $\delta(t) \coloneqq \Delta(x_t, \hat{x}_t)$ where  $\hat{x}_t$ is the state at time $t$ given that the entire trajectory was simulated using the MoE model (i.e --- $\hat{x}_t=\hat{f}_t(\hat{x}_{t-1})=\hat{f}_t(\hat{f}_t(\hat{x}_{t-2}))=...$).

\begin{lemma}
\label{lemma:state_error}
Let $\varepsilon_t(t)$ be the transition estimation error bound for the chosen model at time-step $t$,
\begin{equation}
\varepsilon_t(t) \geq \Delta(\hat{x}_{t+1}, f_t(\hat{x}_{t}, a_{t}))
\end{equation}
The state error at time-step $t$ is: 
\begin{equation}
    \delta(t) \coloneqq \Delta(x_t, \hat{x}_t) \leq \sum_{t'=0}^{t-1} (L_t)^{t'} \varepsilon_t(t-t'-1)
\end{equation}
where $L_t$ is the Lipschitz constant of the transition function, $f_t$.
\end{lemma}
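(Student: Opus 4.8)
The plan is to prove the bound by induction on $t$, using the triangle inequality together with the Lipschitz property of $f_t$ at each step; this is the standard ``simulation lemma'' telescoping argument, adapted here to the local, time-varying one-step error bounds $\varepsilon_t(\cdot)$.

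For the base case $t=0$, both the true comparison trajectory and the simulated trajectory start from the same state, so $\delta(0)=\Delta(x_0,\hat{x}_0)=0$, while the right-hand side is an empty sum and hence also $0$; the inequality holds. I would also note here the implicit convention that the comparison trajectory $\{x_t\}$ and the simulated trajectory $\{\hat{x}_t\}$ share the same action sequence $\{a_t\}$ (the one drawn inside the simulator), so that $x_{t+1}=f_t(x_t,a_t)$ while $\hat{x}_{t+1}$ is the output of the MoE model at $(\hat{x}_t,a_t)$. This is what makes $\varepsilon_t(t)$, as defined in the statement, the right quantity to compare against in the inductive step.

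For the inductive step, assume the bound holds at time $t$. Write $x_{t+1}=f_t(x_t,a_t)$ and insert the hybrid state $f_t(\hat{x}_t,a_t)$:
\[
\delta(t+1)=\Delta\bigl(f_t(x_t,a_t),\hat{x}_{t+1}\bigr)\le \Delta\bigl(f_t(x_t,a_t),f_t(\hat{x}_t,a_t)\bigr)+\Delta\bigl(f_t(\hat{x}_t,a_t),\hat{x}_{t+1}\bigr).
\]
The first term is at most $L_t\,\Delta(x_t,\hat{x}_t)=L_t\,\delta(t)$ by the definition of the Lipschitz constant of $f_t$, and the second term is at most $\varepsilon_t(t)$ by the definition of the transition error bound in the statement. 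Substituting the inductive hypothesis for $\delta(t)$ gives $\delta(t+1)\le L_t\sum_{t'=0}^{t-1}(L_t)^{t'}\varepsilon_t(t-t'-1)+\varepsilon_t(t)$; absorbing the extra factor of $L_t$ into the power ($L_t\cdot(L_t)^{t'}=(L_t)^{t'+1}$) and re-indexing, with the new term $\varepsilon_t(t)$ becoming the $t'=0$ summand, collapses this to $\sum_{t'=0}^{t}(L_t)^{t'}\varepsilon_t(t-t')$, which is exactly the claimed bound at time $t+1$.

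The argument is essentially routine; the only places needing care are (i) being explicit about the shared-action convention so the triangle-inequality split is legitimate, and (ii) the bookkeeping in the re-indexing, making sure the exponents of $L_t$ and the arguments of $\varepsilon_t$ line up after the shift. I do not expect a genuine obstacle beyond these. The one point I would want to confirm against the authors' setup is whether $\{x_t\}$ is instead meant to be the true on-policy trajectory under $\pi_e$ (actions drawn from $\pi_e(\cdot\mid x_t)$ rather than $\pi_e(\cdot\mid\hat{x}_t)$); if so, one would additionally need a continuity assumption on $\pi_e$ and an extra error term per step, and I would flag that as the place the statement should be tightened.
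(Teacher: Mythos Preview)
Your proposal is correct and follows essentially the same route as the paper: both argue by induction, use the triangle inequality to split $\Delta(x_{t+1},\hat{x}_{t+1})$ into a Lipschitz term $L_t\,\delta(t)$ and the one-step model error $\varepsilon_t(t)$, and then unroll the resulting recursion. Your write-up is in fact slightly more careful than the paper's (you make the shared-action convention explicit and verify the re-indexing), and your closing caveat about whether $\{x_t\}$ uses actions drawn at $x_t$ versus $\hat{x}_t$ is a reasonable point to flag, though the paper implicitly adopts the shared-action convention you assumed.
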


\emph{Proof.} The proof of Lemma \ref{lemma:state_error} is presented in Appendix \ref{sec:proof_of_state_error_lemma}.

Next we compute the bound on the total return error for a particular trajectory.

\begin{theorem}
\label{theorem:total_return_error}
Let $\varepsilon_r(t)$ be the reward estimation error bound for the chosen model at time-step $t$
\begin{equation}
\varepsilon_r(t) \geq |f_r(\hat{x}_t, a_t) - \hat{f}_r(\hat{x}_t, a_t)|
\end{equation}
The total return error for a trajectory is bounded by:
\begin{align}
    \label{eq:return_error_bound}
    \delta_g & \coloneqq |g_T - \hat{g}_T| \\ \nonumber
    & \leq \sum_{t=0}^T \gamma^t [ L_r \delta(t) + \varepsilon_r(t)] \\ \nonumber
    & \leq L_r \sum_{t=0}^T \gamma^t \sum_{t'=0}^{t-1} (L_t)^{t'} \varepsilon_t(t-t'-1) + \sum_{t=0}^T \gamma^t \varepsilon_r(t),
\end{align}
where $L_r$ is the Lipschitz constant of the reward function, $f_r$.
\end{theorem}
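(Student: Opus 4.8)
The plan is to reduce the whole statement to the triangle inequality plus the Lipschitz property of $f_r$, and then substitute the state-error bound already established in Lemma~\ref{lemma:state_error}. First I would expand the return error as $\delta_g = \bigl|\sum_{t=0}^T \gamma^t (r_t - \hat r_t)\bigr|$, where $r_t = f_r(x_t,a_t)$ is the true reward along the real trajectory and $\hat r_t = \hat f_r(\hat x_t,a_t)$ is the reward produced by the MoE simulator along the fully-simulated trajectory (with the same action sequence $a_t$, as in Lemma~\ref{lemma:state_error}). Since $\gamma^t \geq 0$, pulling the absolute value inside the sum gives $\delta_g \leq \sum_{t=0}^T \gamma^t |r_t - \hat r_t|$.

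Next I would bound each per-step reward error. The key move is to insert the hybrid term $f_r(\hat x_t,a_t)$ — the \emph{true} reward function evaluated at the \emph{simulated} state — and apply the triangle inequality:
\[
|r_t - \hat r_t| \;=\; |f_r(x_t,a_t) - \hat f_r(\hat x_t,a_t)| \;\leq\; |f_r(x_t,a_t) - f_r(\hat x_t,a_t)| \;+\; |f_r(\hat x_t,a_t) - \hat f_r(\hat x_t,a_t)|.
\]
The first term is at most $L_r\,\Delta(x_t,\hat x_t) = L_r\,\delta(t)$ by the Lipschitz constant of $f_r$; the second is at most $\varepsilon_r(t)$ by the hypothesis of the theorem. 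Summing against $\gamma^t$ yields the first inequality of \eqref{eq:return_error_bound}, namely $\delta_g \leq \sum_{t=0}^T \gamma^t [L_r\,\delta(t) + \varepsilon_r(t)]$. Substituting $\delta(t) \leq \sum_{t'=0}^{t-1} (L_t)^{t'}\varepsilon_t(t-t'-1)$ from Lemma~\ref{lemma:state_error} and distributing $L_r$ and $\gamma^t$ across the double sum gives the final bound.

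The argument is essentially mechanical, so there is no deep obstacle; the only thing that requires care is bookkeeping about \emph{where} each function is evaluated — the true trajectory state $x_t$ versus the fully-simulated state $\hat x_t$ — since the definitions of $\varepsilon_r(t)$ and $\varepsilon_t(t)$ are phrased in terms of $\hat x_t$, and the Lipschitz step must be applied to the true $f_r$ (not $\hat f_r$) in order to bridge the gap between $x_t$ and $\hat x_t$. A secondary point worth noting is that the $t=0$ term behaves correctly: the inner sum in Lemma~\ref{lemma:state_error} is empty at $t=0$, so $\delta(0)=0$ (consistent with $\hat x_0 = x_0$), and the outer sum contributes only $\varepsilon_r(0)$ at $t=0$, as expected.
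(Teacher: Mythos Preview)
Your proposal is correct and follows essentially the same route as the paper's proof: insert the hybrid term $f_r(\hat x_t,a_t)$, apply the triangle inequality to split $|r_t-\hat r_t|$ into a Lipschitz piece $L_r\,\delta(t)$ and a model-error piece $\varepsilon_r(t)$, sum against $\gamma^t$, and substitute Lemma~\ref{lemma:state_error}. Your write-up is in fact more explicit than the paper's about the $|g_T-\hat g_T|\le\sum_t\gamma^t|r_t-\hat r_t|$ step and the $t=0$ boundary case, but the argument is the same.
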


\emph{Proof.} From the triangle inequality we have that the bound on the reward for a given time step is

\begin{align}
    |r_t - \hat{r}_t| & \leq |f_r(x_t, a_t) - f_r(\hat{x}_t, a_t)| \\ \nonumber
    &+ |\hat{f}_r(\hat{x}_t, a_t) - f_r(\hat{x}_t, a_t)| \\ \nonumber
    & \leq L_r \delta(t) + \varepsilon_r(t).
\end{align}

The proof is completed by summing over all time steps and substituting in Lemma \ref{lemma:state_error}. We note that for $L_t>1$ the return error bound grows exponentially with the planning horizon, reflecting the compounding error phenomenon which is common to planning with imperfect models.

In order to simulate trajectories which minimize this bound we use a Monte-Carlo Tree Search algorithm (MCTS) \citep{coulom2006efficient, browne2012survey} to select the model to simulate the next transition with at each time step. Pseudo-code for the MCTS implemetation of the model selection algorithm is presented in Algorithm \ref{alg:mcts_model_selection} in Appendix \ref{appendix:model_selection_algorithms}. We emphasize that the domain over which the MCTS algorithm plans is \emph{not} the same domain the RL agent operates on. For the MCTS algorithm, a ``state'' is a state-action pair from the RL domain, possible ``actions'' are a choice between the parametric and nonparametric model, and the return is the right hand side of the bound above.

In practice, the one-step transition and reward errors, $\varepsilon_t$ and $\varepsilon_r$ in Equation \ref{eq:return_error_bound}, are unknown. In the next sections we will introduce methods for estimating upper bounds for $\varepsilon_t$ and $\varepsilon_r$ for both the parametric and nonparametric model.

\subsection{Estimating the nonparametric model error}
\label{sec:estimating_errors_non_parametric}

The nonparametric model chooses transitions from transitions that have been actually observed in the data.  Specifically, given a state $x$ and action $a \sim \pi_e(a|x)$ the nonparametric model predicts as the next state and reward the corresponding features for the transition whose starting state, $x^*_t$, is closest to $x$, and whose action, $a^*_t$, equals $a$.

The transition prediction error for the nonparametric model is thus
\begin{equation}
\label{eq:non_parametric_error}
\varepsilon_{\text{t,np}} = \Delta(f_t(x, a), x^*_{t+1}) = \Delta(f_t(x, a), f_t(x^*_t, a)).
\end{equation}
which can be bounded using the Lipschitz constant of the transition function $L_t$:
\begin{equation}
\label{eq:non_parametric_error_bound}
\varepsilon_{\text{t,np}} \leq L_t \Delta(x,x^*_t).
\end{equation}

The Lipschitz constant $L_t$ can be estimated by computing
\begin{equation}
\label{eq:Lipschitz_est}
\hat{L}_t = \max_{i \neq j} \frac{\Delta(x^{(i)}_{t'+1},x^{(j)}_{t''+1})}{\Delta(x^{(i)}_{t'},x^{(j)}_{t''})}.
\end{equation}
over all pairs of states $x^{(i)}, x^{(j)}$ in the data \citep{wood1996estimation}.

However, in practice we expect that this estimate will be too conservative, as we wish to estimate the error locally and for a specific action. For a more realistic estimate of the nonparametric error, we can use Equation (\ref{eq:Lipschitz_est}), but only consider transition pairs from within a given neighborhood of radius $C$.  (We will use the same radius for estimating the parametric error in Section \ref{sec:estimating_errors_parametric}; we will describe how to choose $C$ in Section~\ref{sec:choosing_c}.)  Our final estimate will then be 
\begin{equation}
\label{eq:non_parametric_err_est}
\varepsilon_{\text{t,np}} \approx \hat{L_t} \Delta(x,x^*_t).
\end{equation}
where $\hat{L_t}$ is estimated by using transitions starting within $C$ of $x$ in Equation \ref{eq:Lipschitz_est}.

We can similarly estimate the reward error as

\begin{equation}
\label{eq:non_parametric_reward_err_est}
\varepsilon_{\text{r,np}} \approx \hat{L_r} \Delta(x,x^*_t)
\end{equation}

\begin{equation}
\label{eq:reward_Lipschitz_const}
\hat{L}_r = \max_{i \neq j} \frac{|r^{(i)}_{t'} - r^{(j)}_{t''} |}{\Delta(x^{(i)}_{t'},x^{(j)}_{t''})}.
\end{equation}

\subsection{Estimating the parametric model error}
\label{sec:estimating_errors_parametric}

The parametric model error we wish to estimate is
\begin{equation}
\label{eq:parametric_error}
\varepsilon_{\text{t,p}} = \Delta(f_t(x, a), \hat{f}_t(x, a)),
\end{equation}
where $\hat{f}_t(x, a)$ is the parametric model prediction for the next state given $(x, a)$. We estimate the value of $\varepsilon_{\text{p}}$ as the maximum error over all transitions whose initial state is within distance $C$ of the state of interest $x$ and whose action is $a$:
\begin{equation}
\label{eq:parametric_err_est}
\hat{\varepsilon}_{\text{t,p}} = \max \Delta \left( \hat{f}_t(x_{t'}^{(i)},a), x_{t'+1}^{(i)} \right).
\end{equation}

We use the same neighborhood radius for estimating $\hat{\varepsilon}_{\text{p}}$ as we do for estimating $\hat{L}$ for the non-parametric error bound introduced in Section \ref{sec:estimating_errors_non_parametric}.

Similarly we can estimate the parametric reward error as
\begin{equation}
\label{eq:parametric_reward_err_est}
\hat{\varepsilon}_{\text{r,p}} = \max |\hat{f}_r(x_{t'}^{(i)},a) - r_{t'}^{(i)} |.
\end{equation}

\subsection{Choosing $C$}
\label{sec:choosing_c}
It remains to choose the radius $C$.  A large choice of $C$ may be too conservative, smoothing out variation in the data, whereas a small $C$ will result in high variance estimates as there will be few pairs near the desired point $x$.  Here, we discuss one natural choice for specifying the distance radius $C$: Let $\hat{\varepsilon}_{\text{p}}^g$ and $\hat{L}_t^g$ be the global average parametric dynamics model error and Lipschitz constant for the transition function respectively, computed using all transitions in the data. Then $C \hat{L}_t^g$ is an estimate of the nonparametric model error if the closest point is at distance $C$. We therefore set our radius $C$ to be when this nonparametric model error equals the global average parametric model error:
\begin{eqnarray}
\label{eq:R_value}
C\hat{L_t}^g = \hat{\varepsilon}_{\text{p}}^g \Rightarrow
C = \frac{\hat{\varepsilon}_{\text{p}}^g}{\hat{L_t}^g},
\end{eqnarray}
This choice states that for any distance greater than $C$, the nonparametric error estimate will exceed the global average model error.  

Finally, for any choice of $C$ that is smaller than the diameter of the data set, we may encounter situations in which there exist no observed transitions within the defined neighborhood $C$ of the current state $x$.  Then there will be no pairs available to estimate the Lipschitz constant $\hat{L}_t$, and no way to accurately estimate the nonparametric error.  In this situation, we default to the parametric model assuming that it will likely extrapolate more gracefully than the nonparametric model.

\subsection{Consistency of the MCTS-MoE simulator}
Our primary contribution is to develop an estimator that 
provides improved empirical performance in limited data settings 
(which often necessitates model based off policy evaluation approaches). 
However consistency, the property of converging asymptotically to the 
correct true value, is a highly desirable property for an estimator 
and provides a nice reassurance of fundamental soundness. Our MoE algorithm for estimating the value $v^{\pi_e}$ of the desired evaluation policy $\pi_e$ is consistent under some assumptions.

\begin{theorem} 
\label{theorem:consistency}
Under the assumptions \ref{assm:behavior_converage}, \ref{assm:C_converage}, \ref{assm:parametric_model_Lipschitz} in Appendix \ref{sec:consistency}, assuming planning error $\epsilon_{\text{planning}} = o(1)$, the MoE simulator with MCTS model selection is a consistent estimator of policy value of $\pi_e$.

\end{theorem}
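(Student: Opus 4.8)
The plan is to show that as the dataset size $n\to\infty$ (with the number of simulated trajectories $N_s\to\infty$ as well), every source of error in Algorithm~\ref{alg:moe_simulator} vanishes, so that the returned mean return converges to $v^{\pi_e}$. The argument has three layers: first, the nearest-neighbour distances used by the nonparametric model shrink to zero on the region of $\mathcal{X}\times\mathcal{A}$ that the simulated trajectories can reach; second, as a consequence the \emph{true} one-step errors $\varepsilon_t(\cdot),\varepsilon_r(\cdot)$ of whichever model MCTS selects vanish, so by Theorem~\ref{theorem:total_return_error} the per-trajectory return error $\delta_g\to0$; third, averaging over the $N_s$ simulated trajectories, together with convergence of the empirical initial-state distribution $\hat p_0$ to $p_0$, removes the remaining Monte-Carlo error.

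For the first layer I would argue by induction on the time step $t=0,1,\dots,T$ along a simulated trajectory. The base case uses that states drawn from $\hat p_0$ lie, with probability tending to one, in the closure of the support of $p_0$, which by Assumption~\ref{assm:behavior_converage} is covered by the behavior data. For the inductive step, suppose $\hat x_t$ is within $o(1)$ of some state reachable under $\pi_e$; by the coverage assumption this state, and hence $\hat x_t$, lies in a region where, for the sampled action $a_t$, the data contains observed transitions whose starting state is arbitrarily close to $\hat x_t$ as $n\to\infty$. Then $\Delta(\hat x_t,x_t^*)\to0$, so the nonparametric transition-error estimate $\hat L_t\,\Delta(\hat x_t,x_t^*)\to0$ and likewise the reward estimate $\to0$; here Assumption~\ref{assm:C_converage} guarantees that the local Lipschitz estimates $\hat L_t,\hat L_r$ computed on the radius-$C$ neighbourhood stay bounded and that the ``no transitions within $C$'' fallback to the parametric model triggers with vanishing probability on the visited region. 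Since MCTS plans to minimise the bound of Theorem~\ref{theorem:total_return_error} up to the $o(1)$ planning error, the per-step estimated error of the model it picks is at most that nonparametric estimate plus $o(1)$, hence $o(1)$. If MCTS nonetheless picks the parametric model at $\hat x_t$, then $\hat\varepsilon_{\text{t,p}}=\max_i\Delta(\hat f_t(x_{t'}^{(i)},a_t),x_{t'+1}^{(i)})$ over observed transitions near $\hat x_t$ is also $o(1)$, and combining this with the Lipschitz constants of $f_t$ and of $\hat f_t$ (the latter from Assumption~\ref{assm:parametric_model_Lipschitz}) through the triangle inequality shows the parametric model's \emph{true} error at $\hat x_t$ is $o(1)$ as well. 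Either way the true one-step error of the chosen model at every visited step is $o(1)$, and feeding this through Lemma~\ref{lemma:state_error} keeps $\hat x_{t+1}$ within $o(1)$ of the true reachable state $x_{t+1}$, closing the induction.

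The second layer is then immediate: substituting the vanishing $\varepsilon_t(\cdot),\varepsilon_r(\cdot)$ into the finite double sum of Equation~\ref{eq:return_error_bound} (finite since $T<\infty$ and $L_t,L_r<\infty$) gives $\delta_g=|g_T-\hat g_T|\to0$ in probability for each simulated trajectory, so the expected simulated return under $(\hat p_0,\pi_e)$ differs from $\mathrm{E}_{\hat p_0,\pi_e}[g_T]$ by $o(1)$; and since $\hat p_0\to p_0$ with $g_T$ bounded, $\mathrm{E}_{\hat p_0,\pi_e}[g_T]\to\mathrm{E}_{p_0,\pi_e}[g_T]=v^{\pi_e}$. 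For the third layer I would invoke a law of large numbers over the $N_s$ conditionally-i.i.d.\ simulated trajectories, concluding that the output $\frac{1}{N_s}\sum_n g_T^{(n)}\to v^{\pi_e}$.

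The main obstacle is the self-referential nature of the first layer: the model used for simulation is itself learned from the data, and the states the simulation visits depend on that model, so one must rule out the simulated trajectory drifting into an uncovered region where the nonparametric error estimate becomes unreliable. This is precisely why Assumption~\ref{assm:behavior_converage} must cover a neighbourhood of the $\pi_e$-reachable set rather than only that set itself, and why the induction is phrased in terms of $\hat x_t$ being merely \emph{close} to a reachable state; making the neighbourhood margin quantitative, so that the $o(1)$ drift accumulated over the $T$ steps never exceeds it, together with ensuring that $\hat L_t,\hat L_r$ converge from above (or remain uniformly bounded) over the visited region, is the delicate bookkeeping left to the formal proof in Appendix~\ref{sec:consistency}.
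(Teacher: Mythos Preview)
Your proposal is correct and follows essentially the same route as the paper: compare the MCTS-selected model to the always-nonparametric baseline (whose estimated errors are $O(\text{rad}_n)$ by Lemma~\ref{lemma:nonparametric_consistency}), use the nonnegativity of the terms in the MCTS objective to extract a per-step bound on the \emph{estimated} error of the chosen model, then invoke the Lipschitz property of the parametric class (Assumption~\ref{assm:parametric_model_Lipschitz}) via a triangle inequality---this is the paper's Lemma~\ref{lemma:parametric_bounded}---to pass from estimated to \emph{true} one-step errors, and finally feed everything through Theorem~\ref{theorem:total_return_error}.

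Two minor differences are worth noting. First, your inductive ``drift'' argument, guarding against the simulated trajectory leaving the covered region, is more cautious than the paper needs: Assumption~\ref{assm:behavior_converage} is stated globally (``any given state $x$ and action $a$''), so $\text{rad}_n\to0$ uniformly and no tracking of $\hat x_t$ relative to the reachable set is required. Second, you include a third layer---Monte-Carlo averaging over the $N_s$ simulated trajectories and convergence of $\hat p_0$ to $p_0$---that the paper's proof omits entirely; the paper stops once the per-trajectory return error is $o(1)$. Your addition is a genuine completeness improvement, though trivial given bounded returns.
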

Due to space limitations, the proof and details on the assumptions are 
deferred to the Appendix \ref{sec:consistency}. Furthermore, in Appendix \ref{sec:consistency_alternate_model_selection} we provide proof that the assumption of planning error going to zero is not required for consistency if the model error for the reward is also included in the greedy MoE model selection.

\section{Demonstrations in Synthetic Domains}
\label{sec:toy_examples}

\subsection{Motivating example for mixtures}
\label{sec:2d_gridworld}
\begin{figure}[t]
\centering
\subcaptionbox{\label{fig:simple_example_domain}}
{\includegraphics[width=0.15\textwidth]{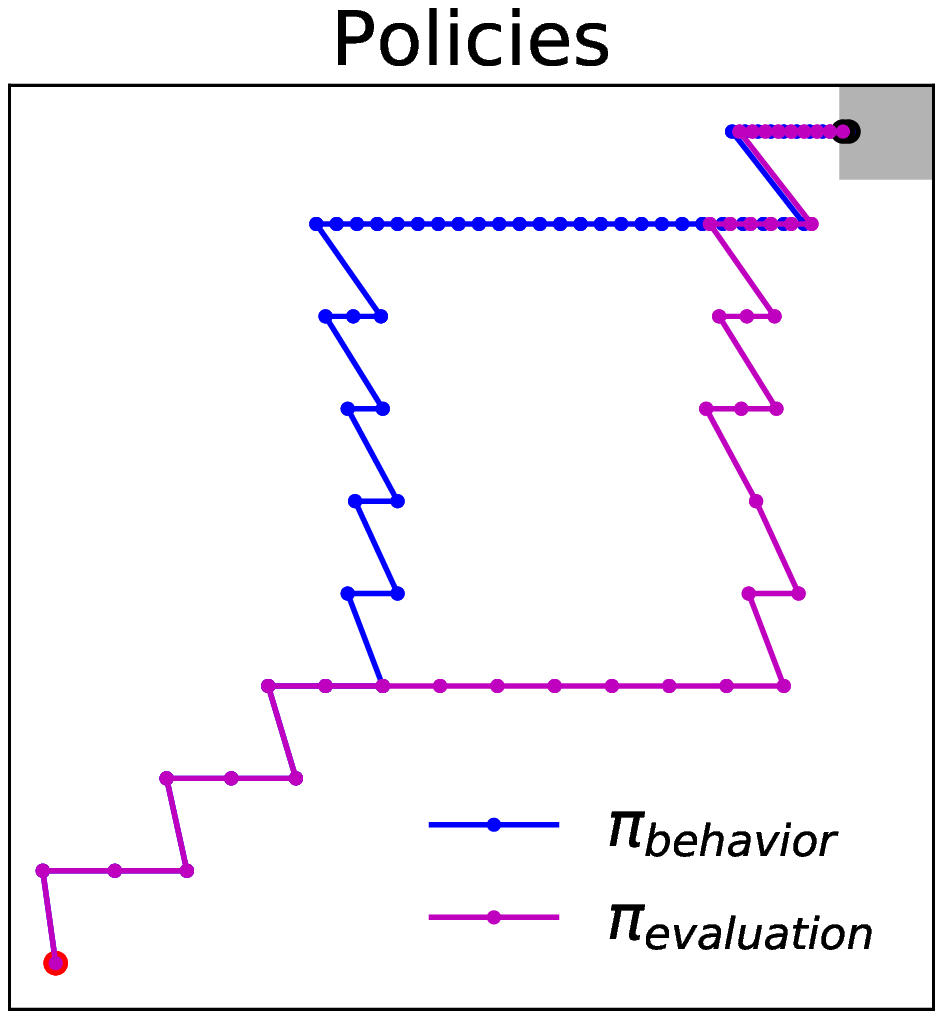}}
\subcaptionbox{\label{fig:simple_example_true_wind}}
{\includegraphics[width=0.15\textwidth]{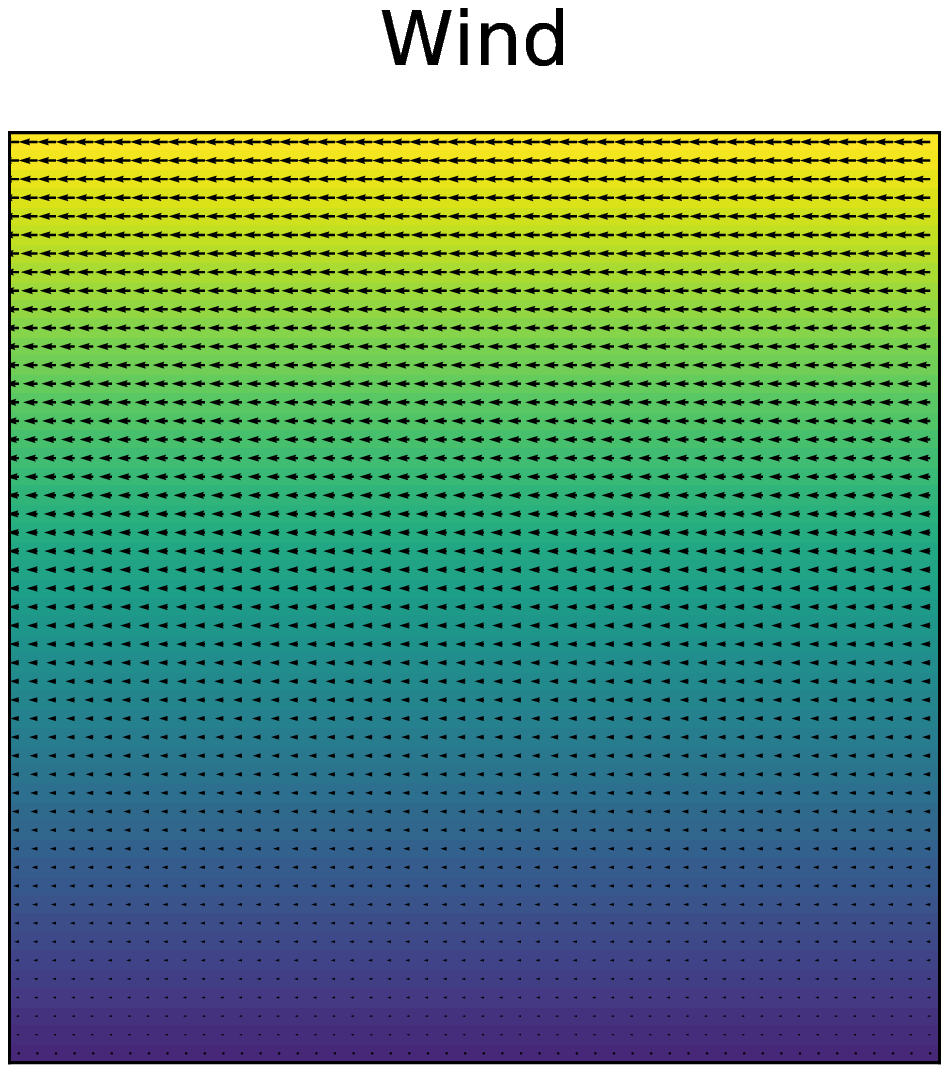}}
\subcaptionbox{\label{fig:simple_example_parametric_model}}
{\includegraphics[width=0.15\textwidth]{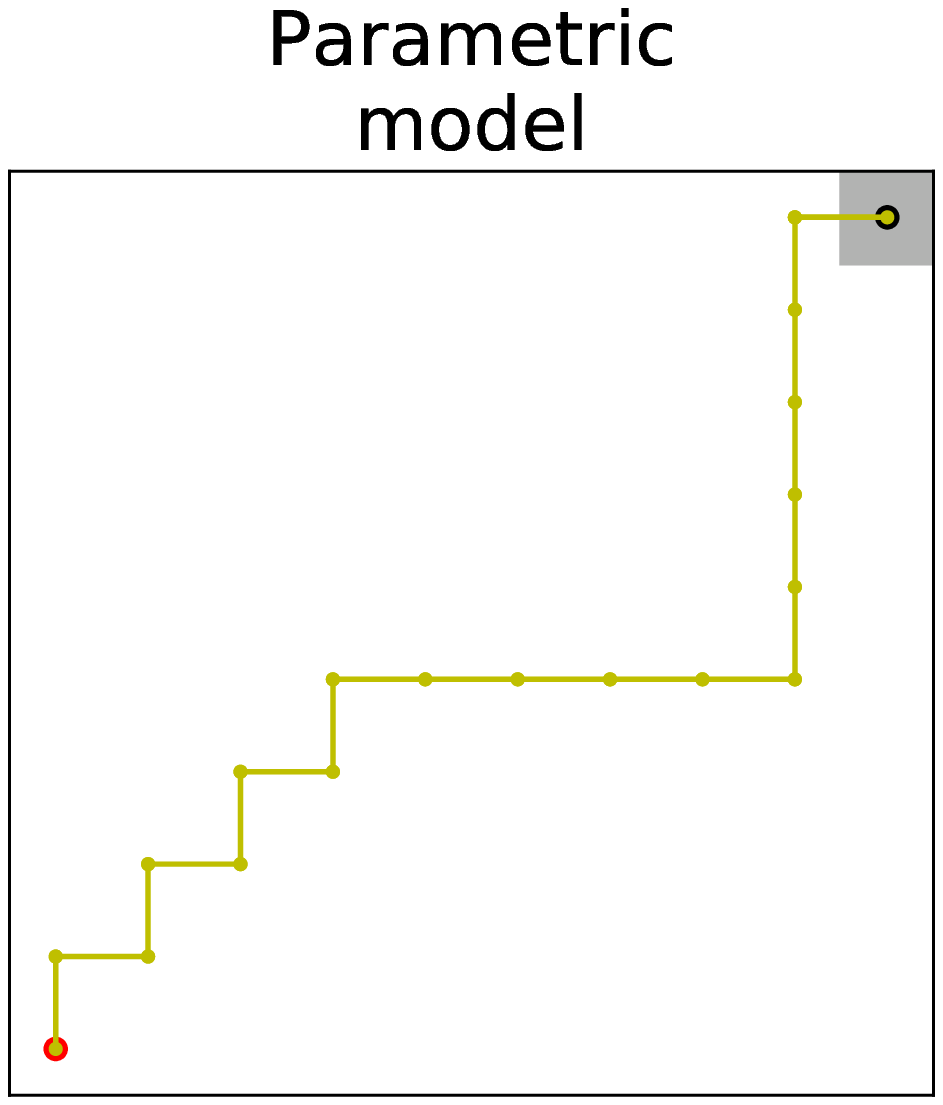}}
\subcaptionbox{\label{fig:simple_example_non_parametric_model}}
{\includegraphics[width=0.15\textwidth]{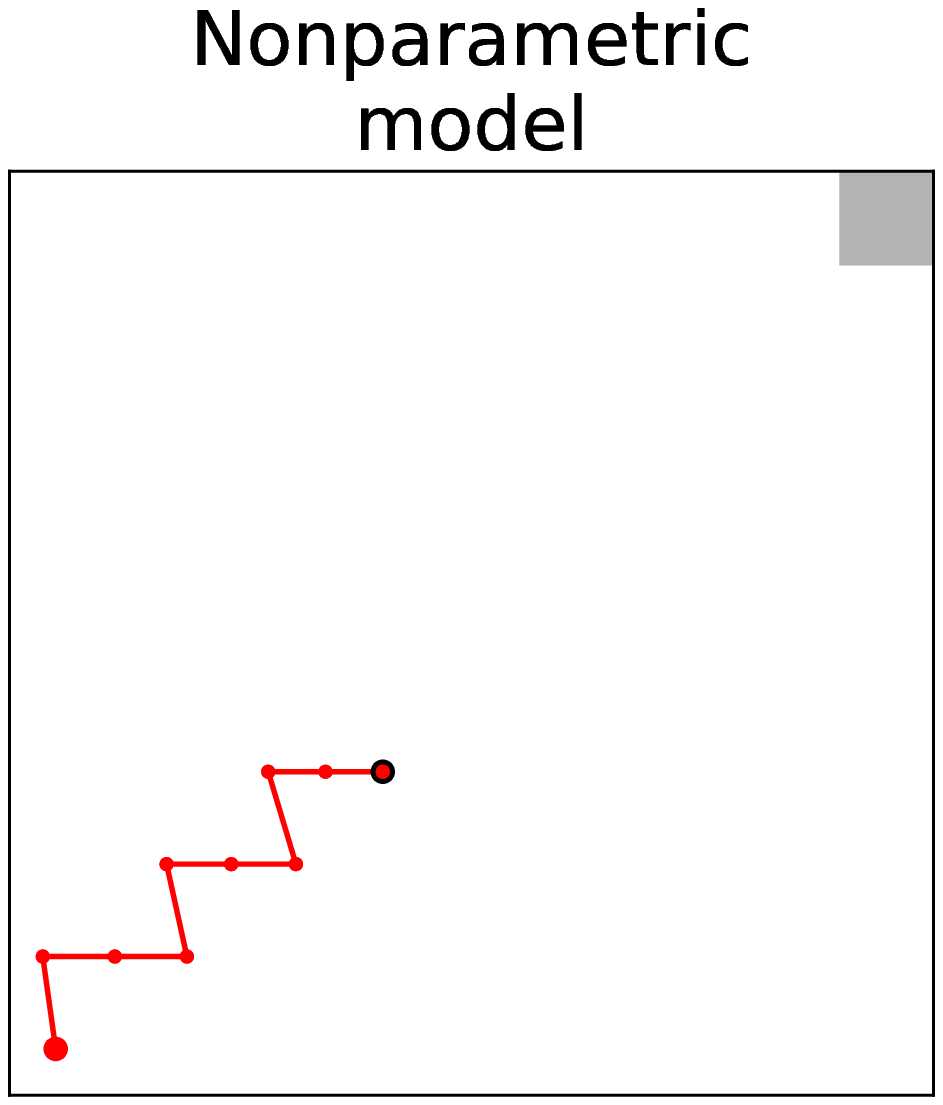}}
\subcaptionbox{\label{fig:simple_example_moe_model}}
{\includegraphics[width=0.15\textwidth]{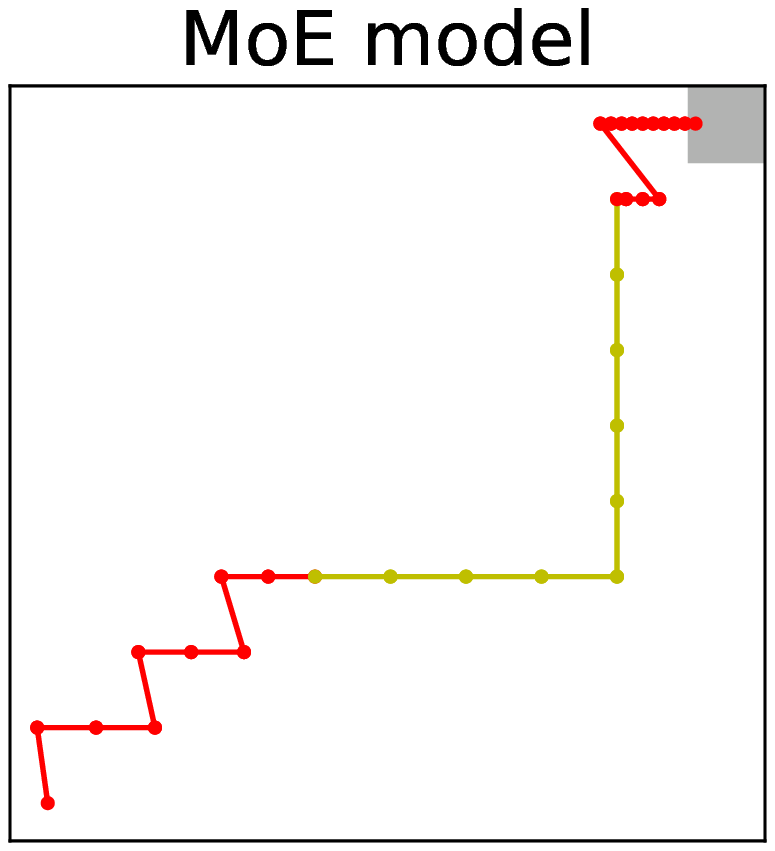}}
\caption{\textbf{Demonstration in a continuous 2D map.} The MoE model switches from using the non-parametric to the parametric model in regions where no transitions are observed.}\label{fig:simple_example}
\end{figure}

\begin{table}
\vspace{-0.5em}
\caption{Value estimates in 2D motivating example}
\label{table:simple_example}
\small
\centering
\begin{tabular}{cc|ccc}
\toprule
\multicolumn{1}{c}{$v^{\pi_e}$} & 
\multicolumn{1}{c|}{$v^{\pi_b}$} &
\multicolumn{1}{c}{$\hat{v}_{M_{\text{p}}}^{\pi_e}$} &
\multicolumn{1}{c}{$\hat{v}_{M_{\text{np}}}^{\pi_e}$} &
\multicolumn{1}{c}{$\hat{v}_{M_{\text{MoE}}}^{\pi_e}$} \\
\midrule
-40 &-53 &-18 &-$\infty$ &\textbf{-31} \\
\bottomrule
\end{tabular}
\end{table}

In this section we illustrate the advantages of of using mixtures in a very simple setting of a myopic planner.  We consider a two-dimensional navigation domain. The agent's state is represented by a $x \in \mathbb{R}^2$ coordinate in space. The action space is discrete with 4 actions: up, down, left, right. The transition to the next state is given by
\begin{equation}
x_{t+1} = x_t + \Delta_{ss} \cdot a_t + w(x_t)
\end{equation}
where $\Delta_{ss}$ is a constant which determines the step size, $a_t$ is the chosen action represented as a unit vector in $\mathbb{R}^2$, and $w(x_t)$ is a state dependent ``wind" that pushes the agent away from the expected direction.

The agent follows a policy starting from its initial state until it reaches a goal region. The reward is $r = -1$ for all non-goal states and the discount factor is 1. Thus, the value of a policy is minus the expected number of steps required to reach the goal region.  In Figure \ref{fig:simple_example_domain} we show trajectories in which the agent starts at the bottom left of the domain and attempts to reach the gray area at the top right. The wind increases linearly with the y-coordinate and is directed in the negative direction of the x-coordinate (Figure \ref{fig:simple_example_true_wind}). Figure \ref{fig:simple_example_domain} shows trajectories generated under the behavior and evaluation policy. Because the wind is stronger near the top of the domain, the value of the evaluation policy is higher than the behavior's (Table \ref{table:simple_example}).

\paragraph{Advantages of each individual model.} As baselines we consider the performance of the parametric and nonparametric models separately. The parametric model has access to the direction and step size of each action, but does not model the wind. Because the parametric model does not take into account the wind which slows the agent down, it overestimates $\hat{v}_{M_{\text{p}}}^{\pi_e}$ (Table \ref{table:simple_example}), as can be observed in Figure \ref{fig:simple_example_parametric_model}. The non-parametric model does a better job of generating a realistic trajectory in the lower left region of the space where the evaluation and behavior policies are similar, but is unable to simulate a trajectory which continues past the region where the policies deviate from each other (Figure \ref{fig:simple_example_non_parametric_model}). Because the nonparametric model is unable to generate trajectories which reach the goal state, it estimates $\hat{v}_{M_{\text{p}}}^{\pi_e}$ to be $-\infty$.

\paragraph{Combining the strengths of both models using the MoE model.} The MoE model manages to utilize the best of both models (Figure \ref{fig:simple_example_moe_model}). It generates a realistic trajectory using the nonparametric model where the behavior and evaluation policies match, and switches to using the parametric model where they don't and observed transitions are not available. Note that the MoE model switches back to using the nonparametric model near the goal where the number of observed transitions is once again high. As shown in Table \ref{table:simple_example}, the MoE model generates the closest estimate for $v^{\pi_e}$. Note that even for the MoE model, the estimator cannot converge to the true value of the policy, as it must use the overly optimistic parametric model where no data is collected.

\subsection{Motivating examples for Planning with Mixtures}

\begin{figure*}
\centering
\subcaptionbox{\label{fig:planning_toy_1}}
{\includegraphics[width=0.32\textwidth]{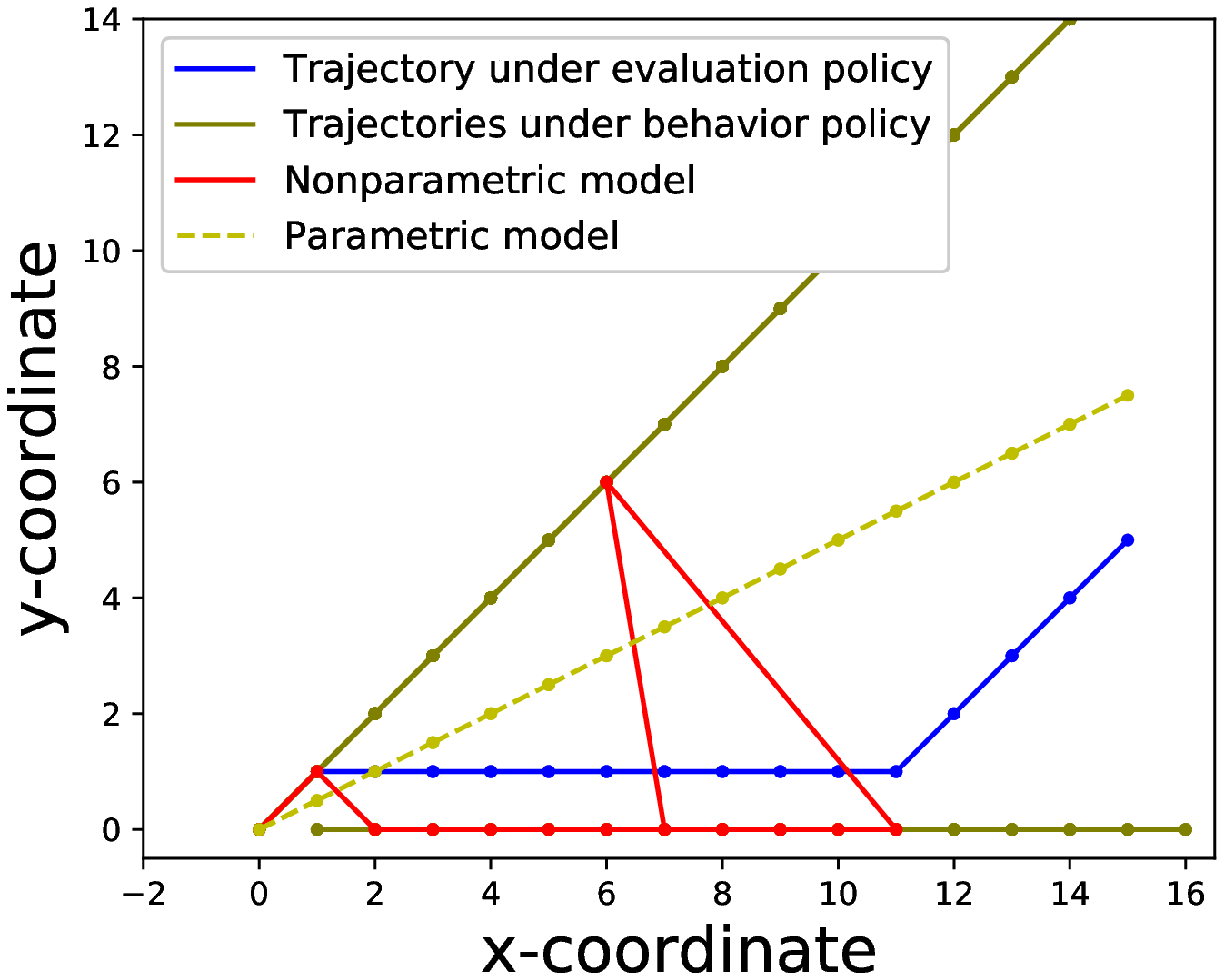}}
\subcaptionbox{\label{fig:planning_toy_2}}
{\includegraphics[width=0.32\textwidth]{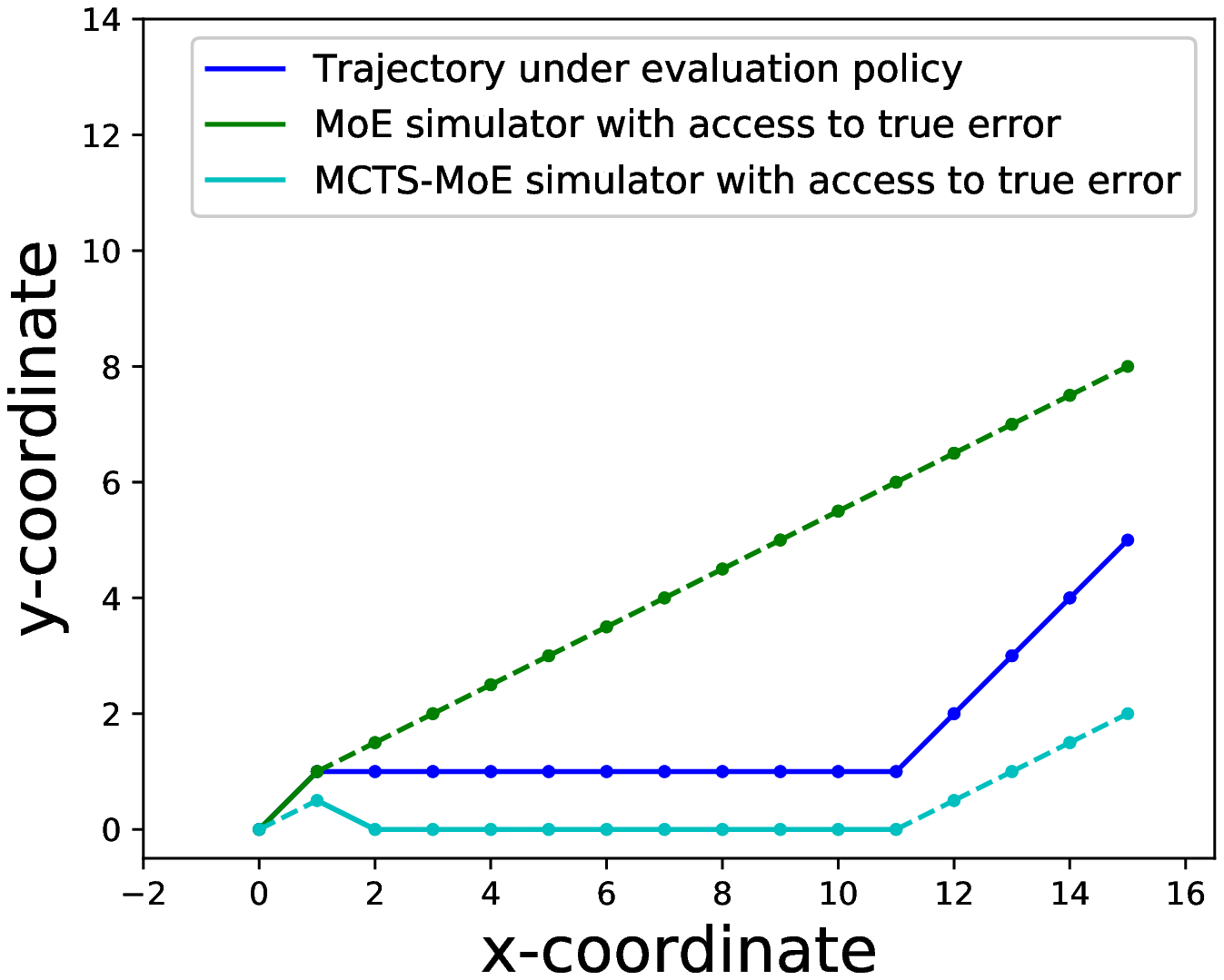}}
\subcaptionbox{\label{fig:planning_toy_3}}
{\includegraphics[width=0.32\textwidth]{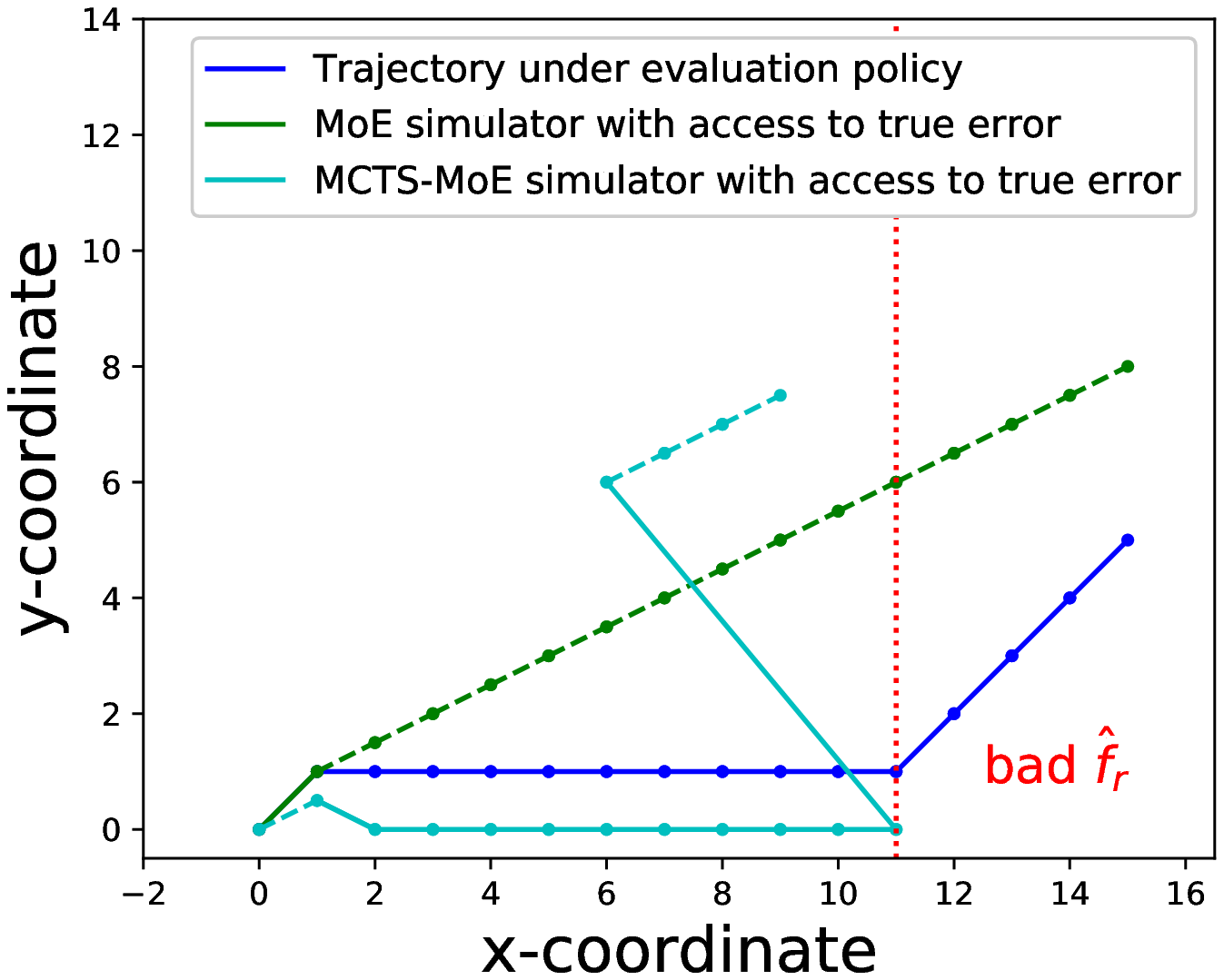}}
\caption{\textbf{Demonstration of planning in a 2D navigation domain.} By using planning to minimize the return estimation error over entire trajectories the MCTS-MoE simulator can incur immediate transition error to decrease long term trajectory simulation error (b) and avoid simulating trajectories where the reward estimation error is too high (c).}
\label{fig:planning_toy}
\end{figure*}

The previous example demonstrated how the MoE simulator, even with a myopic policy, can capitalize on the complementary strengths of parametric and nonparametric models. In this section we demonstrate how further accuracy can be gained by performing the model selection using planning to minimize the errors accumulated over entire trajectories.

This domain is also a 2D navigational domain where the state is defined as $x=(x_{(1)}, x_{(2)})$. There are two possible actions --- ``right (r)'' for which $f_t(x,a=``r")=(x_{(1)}+1, x_{(2)})$, and 
``diagonal (d)'' for which $f_t(x,a=``d")=(x_{(1)}+1, x_{(2)}+1)$. The evaluation and behavior policies are given by

\begin{equation}
\pi_e(a|x)=
\begin{cases}
``r" & 1 \leq x_{(1)} \leq 11 \\
``d" & \text{otherwise}
\end{cases}
\end{equation}

\begin{equation}
\pi_b(a|x)=
\begin{cases}
``r" & x_{(1)} > 0 \text{ and } x_{(2)} = 0 \\
``d" & \text{otherwise}
\end{cases}
\end{equation}

Initial states for collecting observed trajectories under the behavior policy are $(0,0)$ and $(1,0)$, and we wish to evaluate the value of the evaluation policy given the initial state $x_0=(0,0)$. The trajectories under the behavior and evaluation policies are shown in Figure \ref{fig:planning_toy_1}. Also shown in Figure \ref{fig:planning_toy_1} are the trajectories generated using the nonparametric and parametric models, where the parametric model for the transitions predicts $\hat{f}_t(x,a)= f_t(x)= (x_{(1)}+1,x_{(2)}+0.5)$ regardless of the action (i.e. a model which is not accurate enough to distinguish between actions, but predicts the average of the transitions for each action).

\paragraph{Planning with a perfect reward model.} The reward for this domain is $f_r(x,a)= f_r(x) = x_{(1)} + x_{(2)}$. In Figure \ref{fig:planning_toy_2} and \ref{fig:planning_toy_3} we demonstrate the trajectories simulated by the basic MoE simulator and MCTS-MoE simulator, where solid and dashed lines represent transitions simulated using the nonparametric and parametric models, respectively. In the example presented in Figure \ref{fig:planning_toy_2}, the parametric model predicts the true reward at a given state $(f_r=\hat{f}_r)$. Because there is no error associated with predicting the reward, the only error in evaluating the evaluation policy stems from errors in predicting the states visited. In this section, we give the MCTS-MoE model access to the true errors of both the parametric and nonparametric reward as our goal is to demonstrate the effect of planning on value estimation, rather than investigate the quality of our error estimators, as we do in Appendix \ref{sec:error_estimators_empirical_evaluation}.

At the first time-step the basic MoE simulator uses the nonparametric model to simulate a transition with zero error, and then uses the parametric model to simulate the following transitions as they incur smaller transition errors than the nonparametric model. However, while the error at each time-step is smaller using the parametric error, by greedily choosing the model which minimizes the immediate transition error the MoE simulator generates a trajectory which over time becomes very different from the true trajectory under the evaluation policy. In contrast, the MCTS-MoE simulator can look into the future and realize that incurring a small transition error in the first time step will lead it to the bottom region of the state space where it can generate transitions with no error by using the nonparametric model. In the first row in Table \ref{table:planning_toy_results} we present the value estimation error for each of the models and see that the MCTS-MoE outperforms both parametric and nonparametric models, as well as the basic MoE simulator. It is worth mentioning that in this case the basic MoE model performs even worse than the individual models, since greedily choosing the more accurate model in the short term leads it to generate very unrealistic trajectories in the long run --- the planning aspect of the MCTS-MoE model is designed to avoid exactly this problem.

\paragraph{Balancing reward and transition errors.} So far we considered the case in which the parametric model for the reward is accurate, and therefore minimizing the value estimation error is equivalent to minimizing the states estimation error. Next, we consider the case where the parametric model for the reward may be inaccurate as well. In Figure \ref{fig:planning_toy_3} we plot the MCTS-MoE simulated trajectory where $\hat{f}_{r,p}=f_r$ for $x<11$ and $\hat{f}_{r,p}=-1$ for $x \geq 11$. In this situation the reward estimation error for simulating a transition using the parametric model is so high for $x \geq 11$, that the MCTS-MoE simulator prefers to incur a large transition estimation error, which is balanced by avoiding the large reward estimation error. In the second row of Table \ref{table:planning_toy_results} we show that in this case as well the MCTS-MoE simulator outperforms all other simulators.

\begin{table}
\vspace{-0.5em}
\caption{Value estimation errors in the planning toy example}
\label{table:planning_toy_results}
\small
\centering
\begin{tabular}{c|cccc}
\toprule
\multicolumn{1}{c}{} & 
\multicolumn{1}{c}{$M_{\text{p}}$} &
\multicolumn{1}{c}{$M_{\text{np}}$} &
\multicolumn{1}{c}{$M_{\text{MoE}}$} &
\multicolumn{1}{c}{$M_{\text{MCTS-MoE}}$} \\
\midrule
Accurate $f_{r,p}$ & 32.5 & 39 & 39.5 &\textbf{17} \\
Inaccurate $f_{r,p}$ & 46.5 & 39 & 41.5 &\textbf{19.5} \\
\bottomrule
\end{tabular}
\end{table}

\section{Experimental Results}
\subsection{Conceptual demonstration --- Acrobot}

\paragraph{Domain details.} As previously discussed, our MoE simulator will offer the biggest advantages in situations where some regions in state space which are expected to be visited by the evaluation policy are not observed under the behavior policy. In these cases the parametric model can be used to generalize and simulate the dynamics in the unobserved regions, while the nonparametric model can offer more accurate predictions in the data rich regions.

To demonstrate this property we use the Acrobot environment from the control literature \citep{sutton1996generalization}. The environment simulates two links and two joints, where the joint between the two links can be actuated. The objective of a policy is to control the actuation of the middle joint such that the end of the bottom joint rises above a certain height as quickly as possible. The reward for every time step is $-1$, and so the value of a policy is the average time it takes to reach the goal height. We train a near-optimal policy with an expectation time of 70 time-steps for the completion of the task. We generate 100 observed trajectories, where trajectories differ from each other due to small perturbations of the initial states. To generate a lack of observed transitions in a particular region in space, we run several experiments, and in each experiment we choose a maximal observable height, and remove from the dataset all observed transitions which start above that height.

We then train a parametric and nonparametric model on the data, and compare the performance of the two models and our greedy MoE model in predicting the expected time it would take for the desired height to be reached. The parametric model is trained as a feed-forward neural net with one layer of 64 hidden units with a $\tanh$ activation function.

\paragraph{Results}
In Figure \ref{fig:acrobot_results} we present the RMSE of $\hat{v}^{\pi_e}$ for the different models as a function of the maximal observable height. For low maximal observable heights, the nonparametric model cannot simulate a trajectory in which the Acrobot reaches the desired height, as such transitions are not observed in the data. For such a situation the MoE model fully relies on the parametric model and matches its performance. As the maximal height is increased, the non-parametric model becomes more viable, and the MoE model combines transition predictions from both models and outperforms both individual models. For very high maximal observable heights (the goal height is 1), the nonparametric model becomes very accurate and outperforms the MoE model. This is likely due to errors in our estimate of the transition prediction error, which leads the MoE to select the parametric model in situations where the non-parametric would be more accurate.

\begin{figure}
\centering
\includegraphics[width=0.3\textwidth]{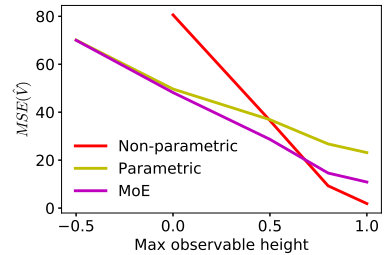}
\caption{\textbf{Acrobot.} The MoE model relies on the parametric models in reigmes when there is not sufficient data for the non-parametric model to be reliable, and combines the advantages of both models in regimes where the models complement each other.}
\label{fig:acrobot_results}
\end{figure}

\subsection{Medical simulators}
\label{sec:results_medical_simulators}

We compare our MoE simulator with different OPE estimators for two medical simulators: one for cancer \citep{ribba2012tumor} and one for HIV \citep{ernst2006clinical} patients. In Appendix \ref{sec:experimental_details} we provide details of the simulators and the evaluation policies used. For both domains, we use as behavior policies $\epsilon$-greedy policies of the evaluation policy.

\begin{figure}[t]
\centering
\subcaptionbox{Cancer\label{fig:cancer_traj_err}}
{\includegraphics[width=0.23\textwidth]{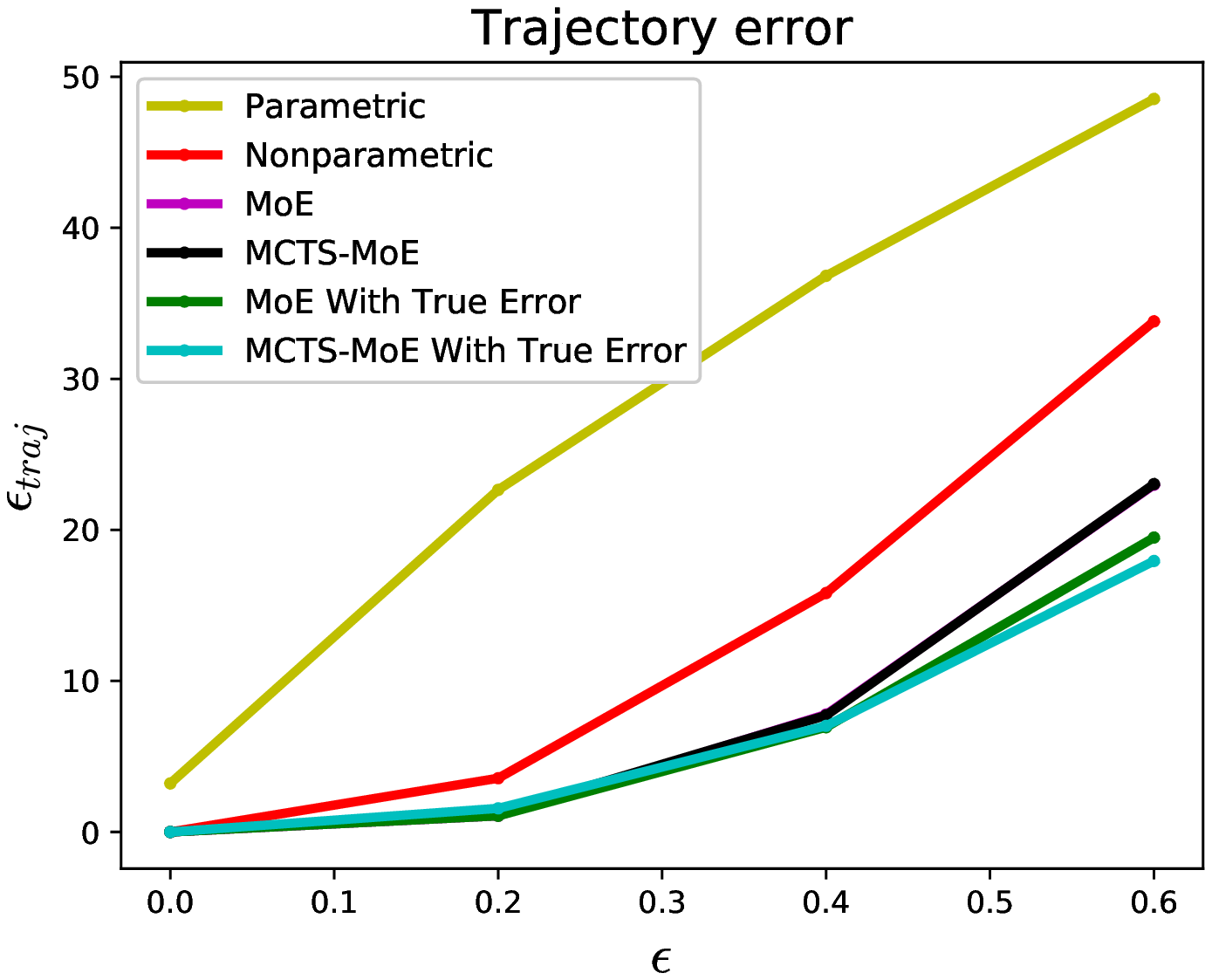}}
\subcaptionbox{Cancer\label{fig:cancer_eval_policy_value_est_err_squared}}
{\includegraphics[width=0.23\textwidth]{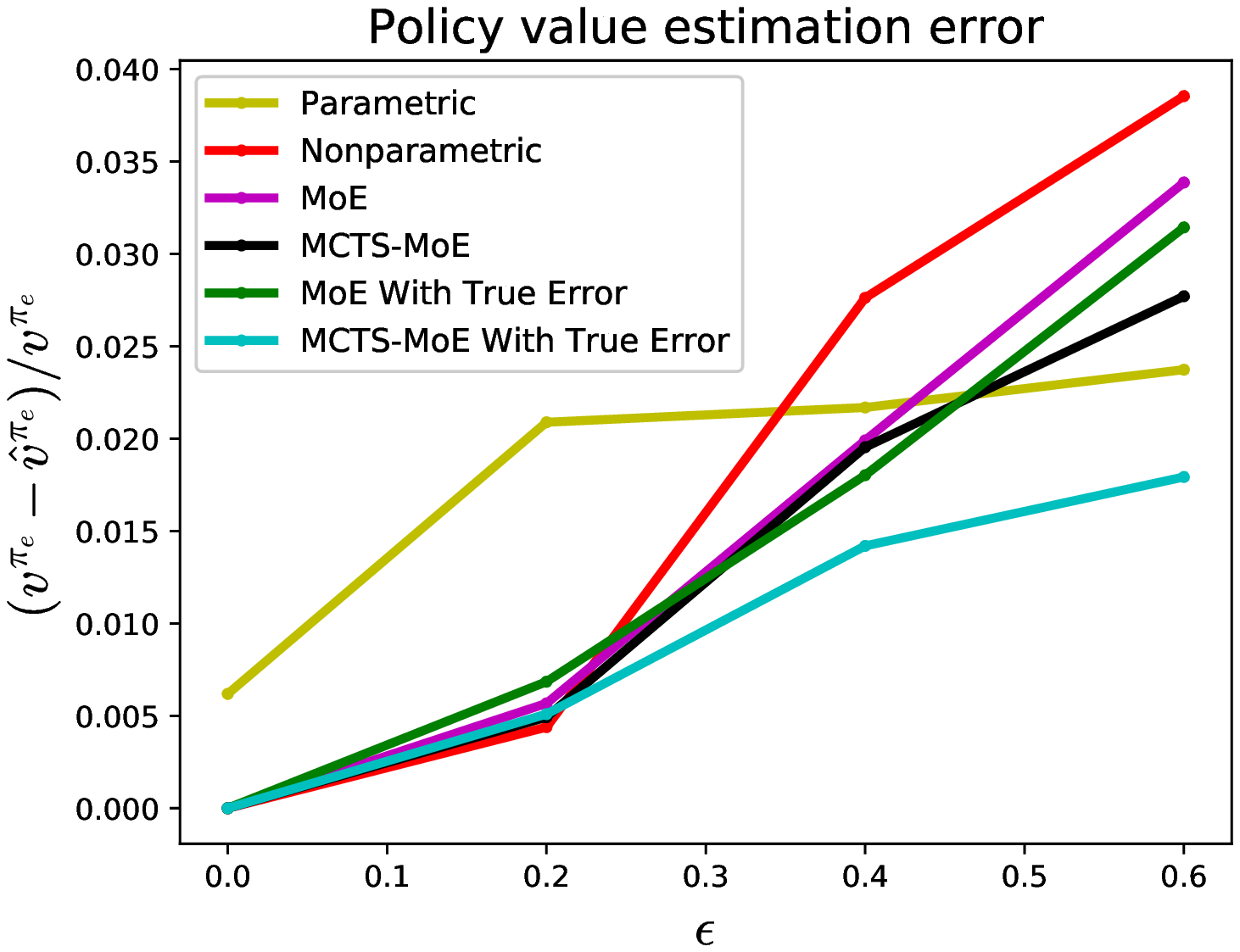}}
\subcaptionbox{HIV\label{fig:hiv_traj_err}}
{\includegraphics[width=0.23\textwidth]{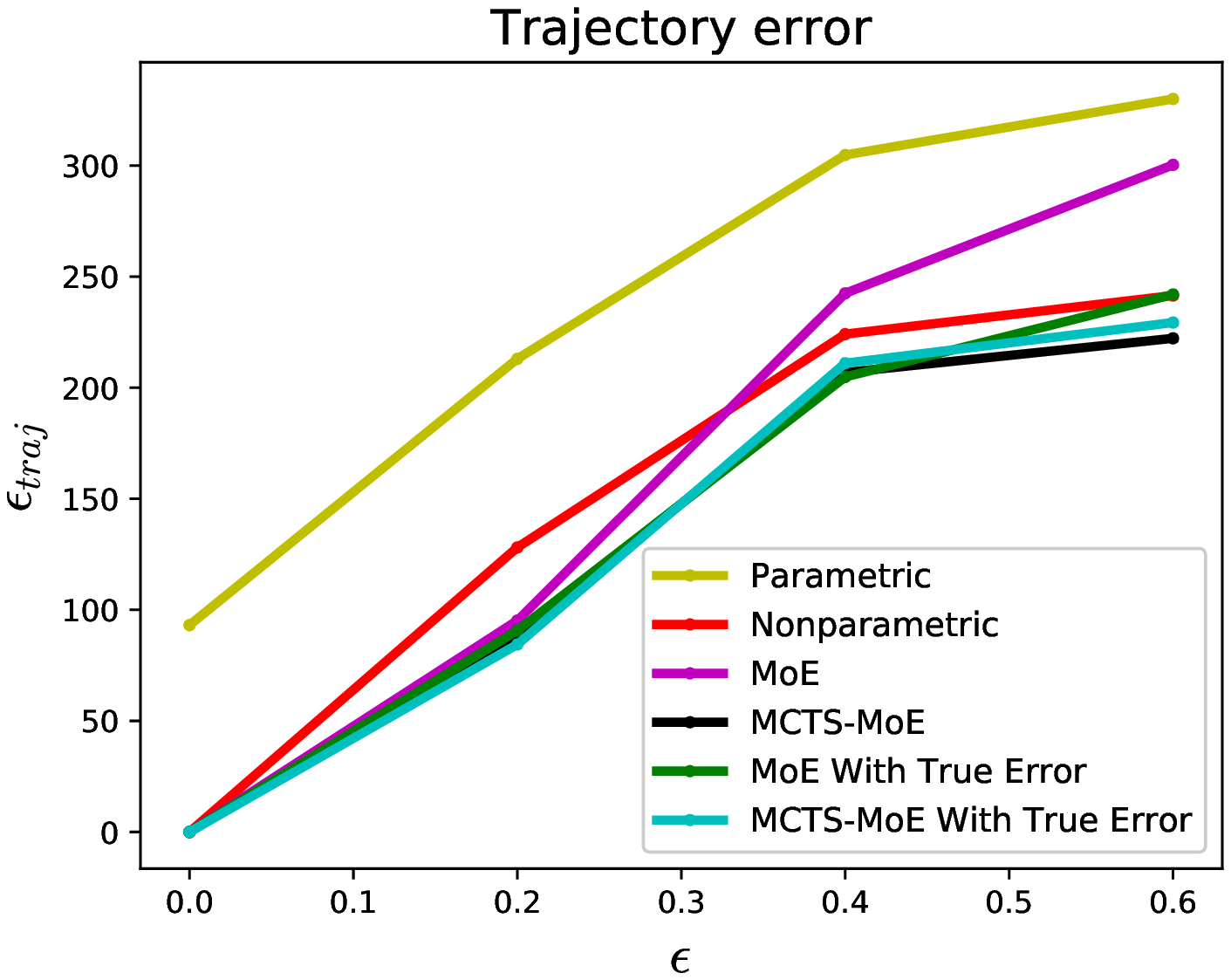}}
\subcaptionbox{HIV\label{fig:hiv_eval_policy_value_est_err_squared}}
{\includegraphics[width=0.23\textwidth]{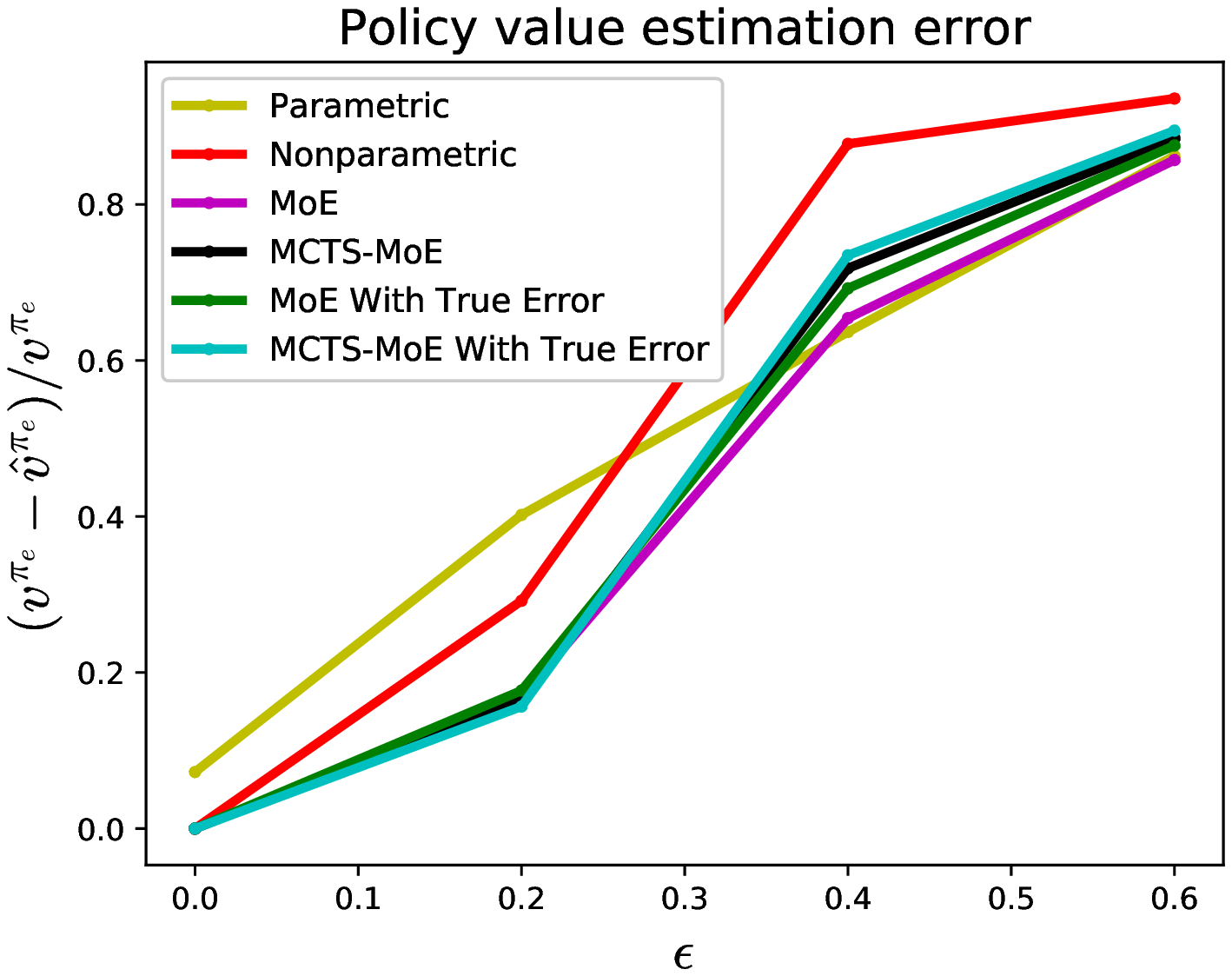}}
\caption{\textbf{Medical simulators.} By combining the advantages of the parametric and nonparametric models, the MoE model can outperform both individual models in terms of simulating accurate trajectories and estimating the evaluation policy value.}
\label{fig:benchmark_results}
\end{figure}

We test the performance of both the greedy MoE simulator and the planning simulator. For each of the two MoE simulators we test their performance both when using the estimates for the errors derived in Sections \ref{sec:estimating_errors_non_parametric} and \ref{sec:estimating_errors_parametric}, and their performance when they are given access to the true errors of each model. Access to the true error is unrealistic for real data, but it allows us to investigate the quality of our error estimates and how much the MoE simulator can potentially be improved by using better error estimates.

\paragraph{Metrics.}
For both domains we compared the MoE simulators with the parametric and nonparametric models using two metrics. The first is the difference between the trajectories simulated using the the models and the trajectory which under the true environment. We define the trajectory error as $\varepsilon_{\textrm{traj}} \coloneqq \sum_{t=0}^T \Delta(x_t,\hat{x}_t)$, where $\hat{x}_t$ is the state prediction at time $t$ using the tested model. The second is the RMSE for the evaluation policy value estimate.

\paragraph{Baselines.}
We compare the evaluation policy value prediction of the MoE simulators with the performance of both the parametric and nonparametric models individually, as well as common importance sampling based OPE methods.

\paragraph{Performance of the different models.}

We first compared the model based estimators with importance sampling based OPE methods. IS based estimators tend to perform poorly with limited data. Indeed, the value estimation errors for all IS based methods were at least an order of magnitude larger than all model based methods. In Table \ref{table:comparision_with_is} we present a comparison of the RMSE of the value estimation for both domains and behavior policy with $\epsilon=0.4$ to demonstrate that for these domains IS methods perform significantly worse than model based methods (comparison to additional IS estimators is presented in Appendix \ref{sec:IS_extended}). This huge difference in performance is consistent across all experimental parameters we tested, and we therefore focus the rest of the results in this section on comparing model based OPE methods only.

\begin{table}
\vspace{-0.5em}
\caption{Relative value estimation RMSE for medical simulators.}
\label{table:comparision_with_is}
\small
\centering
\begin{tabular}{c|cccccc}
\toprule
\multicolumn{1}{c}{} & 
\multicolumn{1}{c}{$M_{\text{p}}$} &
\multicolumn{1}{c}{$M_{\text{np}}$} &
\multicolumn{1}{c}{$M_{\text{MoE}}$} &
\multicolumn{1}{c}{$M_{\text{MCTS-MoE}}$} &
\multicolumn{1}{c}{IS} &
\multicolumn{1}{c}{WDR} \\
\midrule
Cancer & 0.021 & 0.027 & 0.020 & \textbf{0.019} & 1.0 & 0.22 \\
HIV & 0.65 & 0.88 & 0.64 & \textbf{0.63} & 1.0 & 0.99 \\
\bottomrule
\end{tabular}
\end{table}

In Figure \ref{fig:benchmark_results} we present the trajectory simulation error ($\varepsilon_{\textrm{traj}}$) for the different models, and the RMSE of the estimated evaluation policy value using the different models and evaluation methods.  For the cancer domain (Figures \ref{fig:cancer_traj_err} and \ref{fig:cancer_eval_policy_value_est_err_squared}), the MoE model generates more realistic trajectories (smaller trajectory error) which result in better policy value estimates. Introducing planning further improves the value estimation performance, especially when the MoE has access to the true errors.

For the HIV domain (Figures \ref{fig:hiv_traj_err}-\ref{fig:hiv_eval_policy_value_est_err_squared}), the MoE simulator achieves lower trajectory errors, except for high values of $\epsilon$ in the behavior policy. We believe this is due to the inaccuracy of the model error estimation, since the MoE simulator achieves lower trajectory error when given access to the true model errors. We note, however, that the introduction of planning allows the MoE simulator to obtain low trajectory error even without access to the true model errors. For this domain, we see that for large values of $\epsilon$, small trajectory error does not necessarily result in smaller policy evaluation error. This is because the Euclidean distance penalizes error in all state dimensions equally, while in this case one state dimension is much more relevant to the reward.  In Appendix \ref{sec:different_metric_for_hiv}, we demonstrate how choosing a domain-appropriate metric can further improve the value estimation prediction. Furthermore, in Appendix \ref{sec:consistency_experiments} we empirically test our simulators for consistency and show that the value estimation error for both domains decreases as the number of trajectories is increased.

\section{Discussion}

In this paper we demonstrated the effectiveness of a method for combining a parametric and nonparametric model for performing OPE.  Our method is consistent (under mild assumptions) in the limit of infinite data, while effectively choosing sequences of imperfect models to reduce the error in the value estimate with finite data.

Our methods take advantage of techniques used in planning for off-policy evaluation.  While MCTS worked well as a planner for our tasks, one can imagine substituting any future modern planner.  Similarly, we found that our approach for estimating the errors of the models allowed the planner to make choices for more accurate off-policy evaluation than the baselines.  That said, improving the quality of error estimates is an important direction for research.  A very related question is of what metric is appropriate for a particular domain.  We imagine that in many domains expert knowledge may be available; there are also interesting directions in optimizing that metric from data.

Finally, in this work we assumed that the transition and reward functions are deterministic.  We emphasize that our approach can be applied to stochastic domains without modification to the planning algorithm; the only change would be defining an appropriate error estimate---for example, rather than defining the transition error as a distance between the true and simulated next state, we might define it as the distance between the state distribution under the true environment and the one predicted by the models.  Producing accurate estimates in the stochastic setting is an interesting direction for future work.

\bibliography{bib}
\bibliographystyle{icml2019}

\clearpage

\begin{appendices}

\appendix
\section{Model Selection Algorithms}
\label{appendix:model_selection_algorithms}

\begin{algorithm}
   \caption{Greedy model selection}
   \label{alg:greedy_model_selection}
    \begin{algorithmic}
    \FUNCTION{GreedyMoeModelSelection$(s_t, a_t)$}
    \STATE $\hat{\varepsilon}_{\mathrm{t, np}} \leftarrow$ Eq. \ref{eq:non_parametric_err_est}
    \STATE $\hat{\varepsilon}_{\mathrm{t, p}}\leftarrow$ Eq. \ref{eq:parametric_err_est}
    \IF{$\hat{\varepsilon}_{\mathrm{np}} (x_t^{(n)},a_t^{(n)}) < \hat{\varepsilon}_{\mathrm{p}} (x_t^{(n)},a_t^{(n)})$}
        \STATE // Return nonparametric model
        \STATE {\bfseries Return} $(\hat{f}_{t,np},\hat{f}_{r,np})$
    \ELSE
        \STATE // Return parametric model
        \STATE {\bfseries Return} $(\hat{f}_{t,p},\hat{f}_{r,p})$
    \ENDIF
    \ENDFUNCTION
\end{algorithmic}
\end{algorithm}

\begin{algorithm*}
   \caption{MCTS-MoE model selection}
   \label{alg:mcts_model_selection}
\begin{multicols}{2}
\begin{algorithmic}
    \FUNCTION{MctsMoeModelSelection$(s_t, a_t)$}
        \STATE create root node $\nu_t$ with state $(s_t, a_t)$
        \STATE $\delta(\nu_t) \leftarrow 0$ // State error bound for node
        \STATE $\delta_g(\nu_t) \leftarrow 0$ // Return error bound for node
        \STATE $\tau(\nu) \leftarrow 0$ // Time-steps from root node
        \WHILE{within computational budget}
            \STATE $\nu_l \leftarrow$ TreePoicy$(\nu_t)$
            \STATE $V \leftarrow$ DefaultPolicy$(\nu_l)$
            \STATE Backup$(\nu_l, V)$
        \ENDWHILE
    \STATE {\bfseries Return} Model$(\underset{\nu' \in \text{children of } \nu_t}{\arg\max}\widetilde{Q}(\nu'))$
    \ENDFUNCTION
    \\[1\baselineskip]
    \FUNCTION{TreePolicy$(\nu)$}
        \WHILE{$\nu$ is not terminal}
            \IF{$\nu$ not fully expanded}
                \STATE {\bfseries Return} Expand$(\nu)$
            \ELSE
                \STATE $\nu \leftarrow \underset{\nu' \in \text{children of } \nu}{\arg\max} \frac{Q(\nu)}{N(\nu)} + c_e \sqrt{\frac{2 \ln{N(\nu)}}{N(\nu')}}$
            \ENDIF
            \STATE {\bfseries Return} $\nu$
        \ENDWHILE
    \ENDFUNCTION
    \\[1\baselineskip]
    \FUNCTION{Expand$(\nu)$}
        \STATE add a new child $\nu'$ to $\nu$
        \IF{$\nu$ has no children}
            \STATE Model$(\nu') \leftarrow$ GreedyMoeModelSelection$(s(\nu), a(\nu))$
        \ELSE
            \STATE Model$(\nu') \leftarrow$ model not yet tried in $\nu$
        \ENDIF
        \STATE $(s(\nu'), a(\nu')) \leftarrow (\hat{f}_{t,\text{Model}(\nu')}, \pi_e(\hat{f}_{t,\text{Model}(\nu')}))$
        \STATE $\varepsilon_t(\nu'), \varepsilon_r(\nu') \leftarrow$ ComputeErrors(Model$(\nu')$)
        \STATE $N(\nu') \leftarrow 0$ // Times node was visited
        \STATE $Q(\nu') \leftarrow 0$ // Total reward of all rollouts through node
        \STATE $\widetilde{Q}(\nu') \leftarrow 0$ // Rollout with highest reward for node
        \STATE $\tau(\nu') \leftarrow \tau(\nu) + 1$
        \STATE $\delta(\nu') \leftarrow L_t \cdot \delta(\nu) + \varepsilon_t(\nu')$
        \STATE $\delta_g(\nu') \leftarrow \delta_g(\nu) + \gamma^{\tau(\nu')} \left( \varepsilon_r(\nu') + L_t \cdot \delta(\nu') \right)$
        \STATE {\bfseries Return} $\nu'$
    \ENDFUNCTION
    \\[1\baselineskip]
    \FUNCTION{DefaultPolicy$(\nu)$}
        \STATE $(s^*, a^*) \leftarrow (s(\nu), a(\nu))$
        \STATE $\tau^* \leftarrow \tau(\nu)$
        \STATE $\delta^* \leftarrow \delta(\nu)$
        \STATE $\delta_g^* \leftarrow \delta_g(\nu)$
        \WHILE{s in not terminal}
            \STATE Model $\leftarrow$ GreedyMoeModelSelection$(s, a)$
            \STATE $s \leftarrow \hat{f}_{t,Model}(s,a)$
            \STATE $\varepsilon_t^*, \varepsilon_r^* \leftarrow$ ComputeErrors(Model)
            \STATE $a \leftarrow \pi_e(s)$
            \STATE $\tau^* \leftarrow \tau^* + 1$
            \STATE $\delta^* \leftarrow L_t \cdot \delta^* + \varepsilon_t^*$
            \STATE $\delta_g^* \leftarrow \delta_g^* + \gamma^{\tau^*} \left( \varepsilon_r^* + L_t \cdot \delta^* \right)$
        \ENDWHILE
        \STATE {\bfseries Return} $-\delta_g^*$
    \ENDFUNCTION
    \\[1\baselineskip]
    \FUNCTION{Backup$(\nu, V)$}
        \WHILE{$\nu$ is not null}
            \STATE $N(\nu) \leftarrow N(\nu)+1$
            \STATE $Q(\nu) \leftarrow Q(\nu)+V$
            \STATE $\widetilde{Q}(\nu) \leftarrow \max(\widetilde{Q}(\nu), V)$
            \STATE $\nu \leftarrow$ parent of $\nu$
        \ENDWHILE
    \ENDFUNCTION
    \\[1\baselineskip]
    \FUNCTION{ComputeErrors(Model)}
        \IF{Model = parametric}
            \STATE $\varepsilon_t \leftarrow$ Eq. \ref{eq:parametric_err_est}
            \STATE $\varepsilon_r \leftarrow$ Eq. \ref{eq:parametric_reward_err_est}
        \ELSE
            \STATE // Model = nonparametric
            \STATE $\varepsilon_t \leftarrow$ Eq. \ref{eq:non_parametric_err_est}
            \STATE $\varepsilon_r \leftarrow$ Eq. \ref{eq:non_parametric_reward_err_est}
        \ENDIF
        \STATE {\bfseries Return} $\varepsilon_t, \varepsilon_r$
    \ENDFUNCTION
\end{algorithmic}
\end{multicols}
\end{algorithm*}

In this section we provide the two algorithms used to choose the model in the MoE simulator. The functions \emph{GreedyMoeModelSelection} in Algorithm \ref{alg:greedy_model_selection} and \emph{MctsMoeModelSelection} in Algorithm \ref{alg:mcts_model_selection} can be substituted with \emph{ChooseModel} in Algrorithm \ref{alg:moe_simulator} in the main text.

Algorithm \ref{alg:greedy_model_selection} is straight forward and simply returns the model with the smaller immediate estimated transition error. This algorithm could also use a weighted sum of both the transition and reward error, but that choice would require choosing a tuning parameter which controls the relative importance of the transitions and rewards accuracy.

Algorithm \ref{alg:mcts_model_selection} is based on the standard upper confidence bound for trees (UCT) algorithm \cite{coulom2006efficient, browne2012survey}. We note once again that the domain over which the MCTS algorithm plans  is not the same  domain as the RL environment. The “states” for the  MCTS  algorithm are  state-action  pair in the RL domain, and the ”actions” are choosing either the parametric or nonparametric model.

The value of a rollout for the planner is minus the return error bound derived in Theorem \ref{theorem:total_return_error} in the main text, $-\delta_g$. Because of the compounding effect of the state error bound, $\delta(t)$, the value of $\delta$ for each node must be rolled forward for all nodes which results in the main modifications to the standard UCT algorithm in, mainly in functions \emph{Expand} and \emph{DefaultPolicy}.

A tuning parameter of the UCT algorithm is the exploration constant, $c_e$, which controls how frequently the algorithm should explore branches which appear not promising if they have not been explored enough. When the rewards are bounded between 0 and 1, a standard choice for $c_e$ is $1/\sqrt{2}$. Because we don't know a priori how large the errors might be, we continuously update the exploration parameter such that $c_e=\max{\hat{\varepsilon}_t}/\sqrt{2}$.

\section{Proof of Lemma \ref{lemma:state_error}}
\label{sec:proof_of_state_error_lemma}

We first restate Lemma \ref{lemma:state_error}.

\paragraph{Lemma \ref{lemma:state_error}}
Let $\varepsilon_t(t)$ be the transition estimation error bound for the chosen model at time-step $t$,
\begin{equation}
\varepsilon_t(t) \geq \Delta(\hat{x}_{t+1}, f_t(\hat{x}_{t}, a_{t}))
\end{equation}
The state error at time-step $t$ is: 
\begin{equation}
    \delta(t) \coloneqq \Delta(x_t, \hat{x}_t) \leq \sum_{t'=0}^{t-1} (L_t)^{t'} \varepsilon_t(t-t'-1)
\end{equation}
where $L_t$ is the Lipschitz constant of the transition function, $f_t$.

\emph{Proof.} We prove Lemma \ref{lemma:state_error} by induction. The state prediction error at time $t$ is bounded by:

\begin{align}
    \delta(t) & \coloneqq \Delta(x_t, \hat{x}_t) \\ \nonumber
    & \leq \Delta(x_t, f_t(\hat{x}_{t-1}, a_t)) + \Delta(f_t(\hat{x}_{t-1}, a_{t-1}), \hat{x}_t) \\ \nonumber
    & \leq L_t \delta(t-1) + \varepsilon_t(t-1),
\end{align}

Where the first inequality is a consequence of the triangle inequality. By definition, $\delta(1) \leq \varepsilon(0)$. Therefore

\begin{align}
    \delta(t) & \coloneqq \Delta(x_t, \hat{x}_t) \\ \nonumber
    & \leq L_t \delta(t-1) + \varepsilon_t(t-1) \\ \nonumber
    & \leq L_t (L_t \delta(t-2) + \varepsilon_t(t-2)) + \varepsilon(t-1) \\ \nonumber
    & ... \\ \nonumber
    & \leq \sum_{t'=0}^{t-1} (L_t)^{t'} \varepsilon_t(t-t'-1),
\end{align}

completing the proof.

\section{Proof of Consistency}
\label{sec:consistency}

In this section we are going to prove MoE simulator (Algorithm \ref{alg:moe_simulator}) with MCTS model selection is a consistent estimator i.e. the return error goes to zero when the number of samples collected from behavior policy goes to infinity. We assume the planning error of MCTS is bounded by $\epsilon_{\text{planning}}$ where the objective of planning is to maximize:
\begin{align}
\label{eq:mcts_objective}
     - L_r \sum_{t=0}^{T-t_0} \gamma^t \sum_{t'=0}^{t-1} (L_t)^{t'} \hat{\varepsilon}_{t} (t_0+t-t'-1) \nonumber \\
     - \sum_{t=0}^{T-t_0} \gamma^t \hat{\varepsilon}_{r}(t_0+t)
\end{align}
for any input state action pair $(s_{t_0},a_{t_0})$.

\begin{assumption}
\label{assm:behavior_converage}
(Coverage of behavior policy) For a data set $\mathcal{D}$ with n samples collected from behavior policy and any given state $x$ and action $a$, let $\text{rad}_n$ be $\min_{x_t^{(i)} \in \mathcal{D}, a_t^{(i)}=a} \Delta(x,x_t^{(i)})$. Then $\lim_{n\to \infty} \text{rad}_n = 0$.
\end{assumption}

\begin{assumption}
\label{assm:C_converage}
(Coverage of radius C) There exist an $N$ such that for any $n>N$, for any n sample collected from behavior policy and any state $x$ and action $a$, the chosen radius $C$ satisfy that there is at least one sample in data set is within distance $C$ of $x$ and matches the action $a$.
\end{assumption}

\begin{assumption}
\label{assm:parametric_model_Lipschitz}
(Lipschitz continuity of parametric model) Functions $\hat{f}_t$ and $\hat{f}_r$ in parametric model class are $L$-Lipschitz with $L < \infty$. 
\end{assumption}

\begin{lemma}
\label{lemma:nonparametric_consistency}
Under assumptions \ref{assm:behavior_converage} and \ref{assm:parametric_model_Lipschitz}, Let $n$ be the number of samples collected from behavior policy. For any x:
\begin{align*}
   \lim_{n \to \infty} \varepsilon_{t,np}(x) = 0, \quad \lim_{n \to \infty} \varepsilon_{r,np}(x) = 0\\
   \lim_{n \to \infty} \hat{\varepsilon}_{t,np}(x) = 0, \quad  \lim_{n \to \infty} \hat{\varepsilon}_{r,np}(x) = 0 
\end{align*}
\end{lemma}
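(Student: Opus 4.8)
The plan is to funnel all four limits through the single consequence of Assumption \ref{assm:behavior_converage}: that the nearest-neighbour distance $\text{rad}_n = \min_{x_t^{(i)}\in\mathcal{D},\, a_t^{(i)}=a}\Delta(x,x_t^{(i)})$ tends to $0$. Fix a state $x$ and an action $a$, and let $x^*$ be an observed state achieving this minimum, so that the nonparametric model predicts $(f_t(x^*,a), f_r(x^*,a))$ and $\Delta(x,x^*)=\text{rad}_n\to 0$.

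The two \emph{true} errors then follow at once from the bounds already recorded in Section \ref{sec:estimating_errors_non_parametric}. By the Lipschitz argument behind Eq. \ref{eq:non_parametric_error_bound} and its reward analogue,
\begin{equation*}
\varepsilon_{t,np}(x) = \Delta\!\big(f_t(x,a),f_t(x^*,a)\big) \le L_t\,\Delta(x,x^*), \qquad \varepsilon_{r,np}(x) = \big|f_r(x,a)-f_r(x^*,a)\big| \le L_r\,\Delta(x,x^*),
\end{equation*}
and since $L_t,L_r<\infty$ (the true transition and reward functions are Lipschitz) while $\Delta(x,x^*)\to 0$, both right-hand sides vanish.

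For the \emph{estimated} errors, recall $\hat{\varepsilon}_{t,np}(x)=\hat{L}_t\,\Delta(x,x^*)$ and $\hat{\varepsilon}_{r,np}(x)=\hat{L}_r\,\Delta(x,x^*)$ (Eqs. \ref{eq:non_parametric_err_est} and \ref{eq:non_parametric_reward_err_est}), where $\hat{L}_t,\hat{L}_r$ are the empirical difference-quotient maxima of Eqs. \ref{eq:Lipschitz_est} and \ref{eq:reward_Lipschitz_const} taken over pairs inside the radius $C$. The key point is that these empirical quantities cannot escape to infinity: every quotient entering the maximum is, by Lipschitz continuity of $f_t$ (resp.\ $f_r$), at most $L_t$ (resp.\ $L_r$), and restricting the maximum to within-$C$ pairs only decreases it, so $\hat{L}_t\le L_t$ and $\hat{L}_r\le L_r$ for every $n$. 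Hence $\hat{\varepsilon}_{t,np}(x)\le L_t\,\Delta(x,x^*)\to 0$ and $\hat{\varepsilon}_{r,np}(x)\le L_r\,\Delta(x,x^*)\to 0$. What still needs justifying is that these estimates are actually defined for large $n$, i.e.\ that at least two observed transitions with action $a$ lie within $C$ of $x$: here Assumption \ref{assm:parametric_model_Lipschitz} keeps the global parametric error $\hat{\varepsilon}_{\text{p}}^g$ finite so that the radius $C$ of Eq. \ref{eq:R_value} is a well-defined positive number, after which invoking Assumption \ref{assm:behavior_converage} at $x$ and at a second point near $x$ yields, for all large enough $n$, two distinct such transitions inside that ball.

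The step I expect to be the main obstacle is exactly the uniform (in $n$) boundedness of $\hat{L}_t$ and $\hat{L}_r$: being maxima over an ever-growing set of pairs they are monotone nondecreasing in $n$, so divergence must be explicitly excluded. The argument above settles this by a priori capping each quotient with the true global Lipschitz constant, but it presupposes that $f_t$ and $f_r$ are globally Lipschitz; with only a local/continuity-type hypothesis one would instead restrict to a ball around $x$ of radius comparable to $C$ and use the convergence of the local Lipschitz estimates there. A secondary subtlety is the coupling between the random, data-dependent radius $C$ and $\text{rad}_n$, but this is harmless since the conclusion requires nothing more than $\hat{L}_t,\hat{L}_r$ bounded and $\text{rad}_n\to 0$, both of which hold however $C$ behaves.
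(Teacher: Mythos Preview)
Your proposal is correct and follows essentially the same route as the paper: bound the true errors by $L_t\,\text{rad}_n$ and $L_r\,\text{rad}_n$ via Lipschitz continuity of $f_t,f_r$, and bound the estimated errors by observing that every difference quotient entering $\hat{L}_t$ (resp.\ $\hat{L}_r$) is already capped by $L_t$ (resp.\ $L_r$), so $\hat{L}_t\le L_t$ and $\hat{L}_r\le L_r$ uniformly in $n$. The paper's own proof is exactly this, written more tersely; your additional discussion of why the local Lipschitz estimate is well-defined and what role Assumption~\ref{assm:parametric_model_Lipschitz} plays in keeping $C$ positive is reasonable extra care that the paper does not make explicit.
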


\begin{proof}
Let $x_t^{(i)}$ be the state closest to $x$ whose action $a_t^{(i)}$ equals a.
\begin{align}
    \varepsilon_{t,np}(x) = \Delta(f_t(x,a), f_t(x_t^{(i)},a)) \\
    \le L_t \Delta(x,x_t^{(i)}) \le L_t \text{rad}_n \\
    \varepsilon_{r,np}(x) = \Delta(f_r(x,a), f_r(x_t^{(i)},a)) \\
    \le L_r \Delta(x,x_t^{(i)}) \le L_r \text{rad}_n
\end{align}
Thus $0 \le \lim_{n \to \infty} \varepsilon_{t,np}(x) \le L_t \lim_{n \to \infty} \text{rad}_n = 0$. So $\lim_{n \to \infty} \varepsilon_{t,np}(x) = 0$, similarly $\lim_{n \to \infty} \varepsilon_{r,np}(x) = 0$. For the estimated error:
\begin{align}
    \hat{\varepsilon}_{t,np}(x) =  \hat{L}_t \Delta(x,x_t^{(i)}) \le \hat{L}_t \text{rad}_n \\
    = \max_{i \neq j} \frac{\Delta(x_{t'+1}^{(i)}, x_{t''+1}^{(j)})}{\Delta(x_{t'}^{(i)}, x_{t''}^{(j)})} \text{rad}_n\\
    \le L_t \text{rad}_n
\end{align}
Similarly, we have $\lim_{n \to \infty} \hat{\varepsilon}_{t,np}(x) = 0$ and $\lim_{n \to \infty} \hat{\varepsilon}_{r,np}(x) = 0$
\end{proof}

A direct conclusion following from this claim and Theorem \ref{theorem:total_return_error} is that the non-parametric model is a consistent estimator. 

\begin{lemma}
\label{lemma:parametric_bounded}
Let $L_{\hat{f}_t}$ be the Lipschitz constant of the parametric model $\hat{f}_t$, and $L_{\hat{f}_r}$ be the Lipschitz constant of $\hat{f}_r$. 
\begin{align}
    \varepsilon_{t,p}(x) \le \hat{\varepsilon}_{t,p}(x) + L_t\text{rad}_n + L_{\hat{f}_t} \text{rad}_n \\
    \varepsilon_{r,p}(x) \le \hat{\varepsilon}_{r,p}(x) + L_r\text{rad}_n + L_{\hat{f}_r} \text{rad}_n
\end{align}
\end{lemma}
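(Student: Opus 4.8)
The plan is to bound the true parametric error at a query point $x$ by inserting the nearest logged transition with the matching action as a pivot and applying the triangle inequality. Fix the action $a$ and let $x_t^{(i)}$ be the data point with $a_t^{(i)}=a$ closest to $x$, so $\Delta(x,x_t^{(i)})=\text{rad}_n$; since the dynamics are deterministic, the recorded successor and reward satisfy $x_{t+1}^{(i)}=f_t(x_t^{(i)},a)$ and $r_t^{(i)}=f_r(x_t^{(i)},a)$. The decomposition I would use is
\begin{align*}
\varepsilon_{t,p}(x) = \Delta(f_t(x,a),\hat{f}_t(x,a))
 &\le \Delta(f_t(x,a), f_t(x_t^{(i)},a)) \\
 &\quad + \Delta(f_t(x_t^{(i)},a), \hat{f}_t(x_t^{(i)},a)) \\
 &\quad + \Delta(\hat{f}_t(x_t^{(i)},a), \hat{f}_t(x,a)).
\end{align*}

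I would then handle the three terms in turn. The first is at most $L_t\,\text{rad}_n$ by Lipschitz continuity of the true transition function $f_t$. The third is at most $L_{\hat{f}_t}\,\text{rad}_n$ by Lipschitz continuity of the parametric model, with $L_{\hat{f}_t}<\infty$ guaranteed by Assumption \ref{assm:parametric_model_Lipschitz}. The middle term equals $\Delta(\hat{f}_t(x_t^{(i)},a), x_{t+1}^{(i)})$, which is precisely one of the quantities inside the maximum defining $\hat{\varepsilon}_{t,p}(x)$ in Equation \ref{eq:parametric_err_est}. Summing the three bounds gives the claimed inequality for the transition error, and the reward inequality follows from the identical argument with $|\cdot|$ replacing $\Delta(\cdot,\cdot)$: the middle difference becomes $|\hat{f}_r(x_t^{(i)},a)-r_t^{(i)}|\le\hat{\varepsilon}_{r,p}(x)$ from Equation \ref{eq:parametric_reward_err_est}, and the outer differences are bounded by $L_r\,\text{rad}_n$ and $L_{\hat{f}_r}\,\text{rad}_n$.

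The one point requiring care — and the only real obstacle — is justifying that the nearest data point $x_t^{(i)}$ actually lies in the radius-$C$ neighborhood over which $\hat{\varepsilon}_{t,p}(x)$ is maximized, so that bounding the middle term by $\hat{\varepsilon}_{t,p}(x)$ is legitimate. This is where Assumption \ref{assm:C_converage} enters: for $n$ large enough it guarantees at least one sample within distance $C$ of $x$ with action $a$, hence $\text{rad}_n\le C$, so $x_t^{(i)}$ is within $C$ of $x$ and does contribute to the maximum. With this in place, together with the determinism identification $f_t(x_t^{(i)},a)=x_{t+1}^{(i)}$, $f_r(x_t^{(i)},a)=r_t^{(i)}$, the rest is routine application of the triangle inequality and the two Lipschitz properties, and no further machinery is needed.
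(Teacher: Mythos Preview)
Your proof is correct and follows essentially the same route as the paper: the identical three-term triangle-inequality decomposition through the nearest matching-action data point, the same Lipschitz bounds on the outer terms, and the same use of Assumption~\ref{assm:C_converage} to ensure $x_t^{(i)}$ lies in the radius-$C$ neighborhood so that the middle term is dominated by $\hat{\varepsilon}_{t,p}(x)$. Your justification of the $\text{rad}_n\le C$ step is in fact slightly more explicit than the paper's, but otherwise the arguments coincide.
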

\begin{proof}
Let $x_t^{(i)}$ be the state closest to $x$ whose action $a_t^{(i)}$ equals a.
\begin{eqnarray}
    && \varepsilon_{t,p}(x) = \Delta(f_t(x,a), \hat{f}_t(x,a)) \\
    &\le& \Delta(f_t(x,a), f_t(x_t^{(i)},a)) \nonumber \\ 
    && + \Delta(f_t(x_t^{(i)},a), \hat{f}_t(x_t^{(i)},a)) \nonumber \\
    && + \Delta(\hat{f}_t(x_t^{(i)},a), \hat{f}_t(x,a)) \\
    &\le& L_t\text{rad}_n + \Delta(f_t(x_t^{(i)},a), \hat{f}_t(x_t^{(i)},a)) \nonumber \\ && + L_{\hat{f}_t} \text{rad}_n 
\end{eqnarray}
Since the closest sample $x_t^{(i)}$ is within distance $C$ of the state of interest $x$ by Assumption \ref{assm:C_converage},
\begin{align}
    \Delta\left(f_t(x_t^{(i)},a), \hat{f}_t(x_t^{(i)},a)\right) = \Delta\left(\hat{f}_t(x_t^{(i)},a), x_{t+1}^{(i)}\right) \\
    \le \max \Delta \left(\hat{f}_t(x_{t'}^{(i)},a), x_{t'+1}^{(i)}\right) = \hat{\varepsilon}_{t,p}
\end{align}
So we finished the proof for $\varepsilon_{t,p}(x)$. Similarly we can show $\varepsilon_{r,p}(x) \le \hat{\varepsilon}_{r,p}(x) + L_r\text{rad}_n + L_{\hat{f}_r} \text{rad}_n$
\end{proof}

Now we are going to prove Theorem \ref{theorem:consistency}:

\setcounter{theorem}{1}
\begin{theorem} 
(Restated) Under the assumptions \ref{assm:behavior_converage}, \ref{assm:C_converage}, \ref{assm:parametric_model_Lipschitz} in our appendix, assuming planning error $\epsilon_{\text{planning}} = o(1)$, the MoE simulator with MCTS model selection is a consistent estimator of policy value of $\pi_e$.
\end{theorem}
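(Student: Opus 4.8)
The plan is to control the value-estimation error $|\hat{v}^{\pi_e} - v^{\pi_e}|$ by the worst-case per-trajectory return error $\delta_g = |g_T - \hat{g}_T|$ of Theorem~\ref{theorem:total_return_error}, and to show that $\delta_g \to 0$ as the behavior-policy sample size $n \to \infty$, uniformly over initial states in the support of $\hat{p}_0$. Since the horizon $T$ is finite and $L_t$, $L_r$, $\gamma$ are fixed, the bound in Equation~\ref{eq:return_error_bound} is a finite linear combination of one-step errors with bounded coefficients, so it suffices to show that the one-step transition and reward errors $\varepsilon_t(\cdot)$, $\varepsilon_r(\cdot)$ incurred along a trajectory the MCTS-MoE simulator actually follows all vanish as $n \to \infty$.

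First I would fix the reference model-selection policy $\pi_{np}$ that always picks the nonparametric model. By Lemma~\ref{lemma:nonparametric_consistency} and Assumption~\ref{assm:behavior_converage}, $\hat{\varepsilon}_{t,np}(x) \le \hat{L}_t\,\text{rad}_n \le L_t\,\text{rad}_n$ and $\hat{\varepsilon}_{r,np}(x) \le L_r\,\text{rad}_n$ with $\text{rad}_n \to 0$; since the states reachable within $T$ steps from the data initial states lie in a bounded region, this convergence can be taken uniform over the relevant states. Substituting into the planning objective~(\ref{eq:mcts_objective}) shows that the estimated return-error bound achieved by $\pi_{np}$ from any root state is $o(1)$. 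Because MCTS maximizes~(\ref{eq:mcts_objective}) up to the planning error $\epsilon_{\text{planning}} = o(1)$, the estimated return-error bound attained along the simulated trajectory is at most that of $\pi_{np}$ plus $\epsilon_{\text{planning}}$; since the planner is re-invoked at each of the $T$ steps, a short telescoping argument is needed, but $T\,\epsilon_{\text{planning}} = o(1)$, so the accumulated estimated return-error bound along the simulated trajectory is $o(1)$.

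Next I would pass from estimated to true one-step errors. At each step the chosen model is nonparametric or parametric. If nonparametric, Lemma~\ref{lemma:nonparametric_consistency} gives $\varepsilon_{t,np}(x) \le L_t\,\text{rad}_n \le \hat{\varepsilon}_{t,np}(x) + L_t\,\text{rad}_n$, and similarly for the reward. If parametric, Assumption~\ref{assm:C_converage} guarantees a data point within radius $C$ of $x$, so Lemma~\ref{lemma:parametric_bounded} gives $\varepsilon_{t,p}(x) \le \hat{\varepsilon}_{t,p}(x) + (L_t + L_{\hat{f}_t})\,\text{rad}_n$ and the analogous reward bound, with $L_{\hat{f}_t}$, $L_{\hat{f}_r}$ finite by Assumption~\ref{assm:parametric_model_Lipschitz}. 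In either case $\varepsilon_t(s) \le \hat{\varepsilon}_t(s) + c_1\,\text{rad}_n$ and $\varepsilon_r(s) \le \hat{\varepsilon}_r(s) + c_2\,\text{rad}_n$ for constants $c_1$, $c_2$. Substituting into Equation~\ref{eq:return_error_bound} bounds the true return-error bound by the accumulated estimated return-error bound (which is $o(1)$ by the previous step) plus a finite multiple of $\text{rad}_n$, so $\delta_g \to 0$, uniformly over the data initial states. Finally, since the estimator averages $\hat{g}_T$ over $N_s$ simulated trajectories, $|\hat{v}^{\pi_e} - \frac{1}{N_s}\sum_n g_T^{(n)}| \le \max_n \delta_g^{(n)} \to 0$, where $g_T^{(n)}$ is the true return under $\pi_e$ from the same initial state; and letting $N_s \to \infty$ (or passing to $\mathbb{E}[\hat{v}^{\pi_e}]$), $\frac{1}{N_s}\sum_n g_T^{(n)} \to \mathbb{E}_{x_0 \sim \hat{p}_0,\,\pi_e}[g_T]$, which tends to $v^{\pi_e}$ because $\hat{p}_0 \to p_0$. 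Combining the two bounds gives consistency.

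I expect the main obstacle to be the argument of the second paragraph, namely certifying that the estimated return-error bound the planner actually achieves vanishes. This rests on (a) strengthening the pointwise statement $\text{rad}_n \to 0$ of Assumption~\ref{assm:behavior_converage} to a form uniform enough to handle the $n$-dependent states visited along the simulated nonparametric rollout, and (b) amortizing the per-step planning error $\epsilon_{\text{planning}}$ over the horizon. By comparison, the parametric-versus-estimated error step (Lemma~\ref{lemma:parametric_bounded}) and the Monte-Carlo and initial-distribution arguments are routine.
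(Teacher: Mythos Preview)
Your proposal is correct and follows essentially the same route as the paper's proof: compare the MCTS planner to the all-nonparametric baseline (whose estimated error bound is $O(\text{rad}_n)$ by Lemma~\ref{lemma:nonparametric_consistency}), use the $\epsilon_{\text{planning}}$-suboptimality to bound the estimated one-step errors of the chosen model, invoke Lemmas~\ref{lemma:nonparametric_consistency} and~\ref{lemma:parametric_bounded} to pass from estimated to true one-step errors, and then apply Theorem~\ref{theorem:total_return_error}. The paper does the ``telescoping'' step you flag by extracting, from each MCTS call at time $t_0$, only the first-step terms $L_r\gamma\,\hat\varepsilon_{t,\text{MCTS}}(t_0)+\hat\varepsilon_{r,\text{MCTS}}(t_0)$ (nonnegativity of all summands in the objective makes this immediate), and then summing these first-step bounds over $t_0$; your plan is the same once made explicit. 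Your added remarks on uniformity of $\text{rad}_n$ and on the Monte Carlo / initial-distribution layer are points the paper leaves implicit.
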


\begin{proof}
By assuming the planning error of MCTS is bounded by $\epsilon_{\text{planning}}$, we have that the return of chosen node will be no less than the return of nonparametric model minus $\epsilon_{\text{planning}}$.
\begin{eqnarray}
    && \max_{\nu' \in \text{children of } \nu} \widetilde{Q}(\nu') \\
    &\ge& - L_r \sum_{t=0}^{T-t_0} \gamma^t \sum_{t'=0}^{t-1} (L_t)^{t'} \hat{\varepsilon}_{t, np} (t_0+t-t'-1) \nonumber \\
    && - \sum_{t=0}^{T-t_0} \gamma^t \hat{\varepsilon}_{r, np}(t_0+t) - \epsilon_{\text{planning}} \\
    &\ge& - K \text{rad}_n - \epsilon_{\text{planning}}
\end{eqnarray}
where $K$ is some constant independent of sample size n. By MCTS algorithm, we have that the return of chosen node is:
\begin{eqnarray}
    && \max_{\nu' \in \text{children of } \nu} \widetilde{Q}(\nu') \\
    &=& - L_r \sum_{t=0}^{T-t_0} \gamma^t \sum_{t'=0}^{t-1} (L_t)^{t'} \hat{\varepsilon}_{t, \text{MCTS}} (t_0+t-t'-1) \nonumber \\
    && - \sum_{t=0}^{T-t_0} \gamma^t \hat{\varepsilon}_{r,\text{MCTS}} (t_0+t) 
\end{eqnarray}
where $\varepsilon_{t, \text{MCTS}} (t)$ and $\varepsilon_{r, \text{MCTS}} (t)$ is the transition and reward error of the model selected by MCTS MoE model selection algorithm at each planning step. Thus 
\begin{align}
    L_r \gamma \hat{\varepsilon}_{t, \text{MCTS}} (t_0) + \hat{\varepsilon}_{r, \text{MCTS}} (t_0) \le -\widetilde{Q}(\nu') \nonumber \\ \le  K \text{rad}_n + \epsilon_{\text{planning}}
\end{align}

Then we can bound the estimated one step transition and reward error of the chosen model by
\begin{align}
    \hat{\varepsilon}_{t, \text{MCTS}}(t_0) \le K' (\text{rad}_n + \epsilon_{\text{planning}}) \\
    \hat{\varepsilon}_{r, \text{MCTS}}(t_0) \le K' (\text{rad}_n + \epsilon_{\text{planning}})
\end{align}
where $K'$ is some other constant independent with sample size $n$. Now we need to bound the true one step transition and reward error of the chosen model $\varepsilon_{t, \text{MCTS}}(t_0)$ and $\varepsilon_{t, \text{MCTS}}(t_0)$. By Lemma \ref{lemma:nonparametric_consistency} we know that 
we can bound it for non-parametric model for any state:
\begin{align}
\varepsilon_{t,np}(x) \le L_t \text{rad}_n, \quad \varepsilon_{r,np}(x) \le L_r \text{rad}_n
\end{align}
and Lemma \ref{lemma:parametric_bounded} show that 
\begin{align}
    \varepsilon_{t,p}(x) \le \hat{\varepsilon}_{t,p}(x) + L_t\text{rad}_n + L_{\hat{f}_t} \text{rad}_n \\
    \varepsilon_{r,p}(x) \le \hat{\varepsilon}_{r,p}(x) + L_r\text{rad}_n + L_{\hat{f}_r} \text{rad}_n
\end{align}

Then for both model we have that
\begin{align}
    \varepsilon_{t}(x) \le \hat{\varepsilon}_{t}(x) + K'' \text{rad}_n  \\
    \varepsilon_{r}(x) \le \hat{\varepsilon}_{r}(x) + K'' \text{rad}_n
\end{align}
for some constant $K''$. Therefore for the chosen model, we can bound its one step transition error and reward error.
\begin{align}
    \varepsilon_{t,\text{MCTS}}(x) &\le \hat{\varepsilon}_{t,\text{MCTS}}(x) + K'' \text{rad}_n \nonumber \\
    &= O(\text{rad}_n) + O(\epsilon_{\text{planning}}) \\
    \varepsilon_{r,\text{MCTS}}(x) &\le \hat{\varepsilon}_{r,\text{MCTS}}(x) + K'' \text{rad}_n \nonumber \\
    &= O(\text{rad}_n) + O(\epsilon_{\text{planning}})
\end{align}

Combining this with Theorem \ref{theorem:total_return_error}, we have that the total error of return could be bounded by $O(\text{rad}_n) + O(\epsilon_{\text{planning}})$. Thus, if 
$
    O(\epsilon_{\text{planning}}) = o(1)
$, the total return error will also be bounded by $o(1)$ and MoE simulator with MCTSmodel selection is a consistent estimator.
\end{proof}

\section{Consistency of MCTS-MoE Under Weaker Conditions}
\label{sec:consistency_alternate_model_selection}

In our proof of theorem \ref{theorem:consistency}, we assume that the planning error $\epsilon_{\text{planning}}$ will converge to zero. If that is not true, we can still prove the consistency result with a slightly different variant of Algorithm \ref{alg:greedy_model_selection}. Consider if the condition in line 4 of Algorithm \ref{alg:greedy_model_selection} changes to:
\begin{equation}
    \varepsilon_{t,p}(x) + \alpha_r \varepsilon_{r,p}(x) 
    \le \hat{\varepsilon}_{t,p}(x) + \alpha_r \hat{\varepsilon}_{r,p}(x),
\end{equation}
where the coefficient $\alpha_r$ is a constant factor just determined by the scale of reward and transition function. Then we can show a new theorem about Algorithm \ref{alg:moe_simulator} with both greedy and MCTS model selection are consistent estimators i.e. the return error goes to zero when the number of samples collected from behavior policy goes to infinity. We keep the same assumptions (Assumption \ref{assm:behavior_converage}, \ref{assm:C_converage}, \ref{assm:parametric_model_Lipschitz}) for other parts of algorithm as last section.
\begin{lemma}
\label{lemma:greedy_consistency}
MoE simulator with greedy model selection is a consistent estimator of the policy value of $\pi_e$.
\end{lemma}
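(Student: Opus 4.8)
The plan is to follow the skeleton of the proof of Theorem~\ref{theorem:consistency}, dropping the appeal to the MCTS planning guarantee and replacing it with a one-line consequence of the strengthened greedy rule. Concretely, it suffices to show that for every state $x$ visited during simulation, \emph{both} the true one-step transition error $\varepsilon_{t,\mathrm{MoE}}(x)$ and the true one-step reward error $\varepsilon_{r,\mathrm{MoE}}(x)$ of the model picked by the strengthened rule are $O(\text{rad}_n)$; substituting these into Theorem~\ref{theorem:total_return_error} then bounds the return error by $O(\text{rad}_n)$, which vanishes by Assumption~\ref{assm:behavior_converage}, since that bound is a finite ($t\le T$) sum whose every term is controlled by a one-step error.

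First I would bound the selected model's \emph{estimated} score. Lemma~\ref{lemma:nonparametric_consistency} gives $\hat{\varepsilon}_{t,np}(x)+\alpha_r\hat{\varepsilon}_{r,np}(x)\le(L_t+\alpha_r L_r)\,\text{rad}_n$. The strengthened rule selects whichever model minimizes the weighted sum $\hat{\varepsilon}_{t,\cdot}(x)+\alpha_r\hat{\varepsilon}_{r,\cdot}(x)$, so the chosen model obeys $\hat{\varepsilon}_{t,\mathrm{MoE}}(x)+\alpha_r\hat{\varepsilon}_{r,\mathrm{MoE}}(x)\le(L_t+\alpha_r L_r)\,\text{rad}_n$; since both summands are nonnegative and $\alpha_r>0$ is a fixed scale constant, each of $\hat{\varepsilon}_{t,\mathrm{MoE}}(x)$ and $\hat{\varepsilon}_{r,\mathrm{MoE}}(x)$ is separately $O(\text{rad}_n)$. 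This is exactly where the strengthening matters: the rule in Algorithm~\ref{alg:greedy_model_selection} constrains only the transition estimate $\hat{\varepsilon}_{t,\mathrm{MoE}}$, so under it the selected model could have reward error that does not decay, and then the $\sum_t\gamma^t\varepsilon_r(t)$ term in Theorem~\ref{theorem:total_return_error} would not go to zero.

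Next I would pass from estimated to true errors, exactly as in the proof of Theorem~\ref{theorem:consistency}. If the nonparametric model is chosen, Lemma~\ref{lemma:nonparametric_consistency} directly gives $\varepsilon_{t,np}(x)\le L_t\,\text{rad}_n$ and $\varepsilon_{r,np}(x)\le L_r\,\text{rad}_n$. If the parametric model is chosen, Lemma~\ref{lemma:parametric_bounded}---which relies on Assumption~\ref{assm:C_converage} to guarantee a nonempty radius-$C$ neighbourhood and on Assumption~\ref{assm:parametric_model_Lipschitz} for finite $L_{\hat{f}_t},L_{\hat{f}_r}$---gives $\varepsilon_{t,p}(x)\le\hat{\varepsilon}_{t,p}(x)+(L_t+L_{\hat{f}_t})\,\text{rad}_n$ and $\varepsilon_{r,p}(x)\le\hat{\varepsilon}_{r,p}(x)+(L_r+L_{\hat{f}_r})\,\text{rad}_n$. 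In either case, for some constant $K''$ independent of $n$, $\varepsilon_{t,\mathrm{MoE}}(x)\le\hat{\varepsilon}_{t,\mathrm{MoE}}(x)+K''\,\text{rad}_n=O(\text{rad}_n)$ and $\varepsilon_{r,\mathrm{MoE}}(x)\le\hat{\varepsilon}_{r,\mathrm{MoE}}(x)+K''\,\text{rad}_n=O(\text{rad}_n)$.

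Finally, plugging into Theorem~\ref{theorem:total_return_error} yields $\delta_g\le L_r\sum_{t=0}^{T}\gamma^t\sum_{t'=0}^{t-1}(L_t)^{t'}\varepsilon_{t,\mathrm{MoE}}(t-t'-1)+\sum_{t=0}^{T}\gamma^t\varepsilon_{r,\mathrm{MoE}}(t)=O(\text{rad}_n)\to 0$, so each simulated trajectory's return converges to the true return from the same initial state, and---averaging over the $N_s$ simulated trajectories and using $\hat{p}_0\to p_0$ as in the proof of Theorem~\ref{theorem:consistency}---the value estimate converges to $v^{\pi_e}$. The only real obstacle, and the sole step that differs from the MCTS argument, is the second paragraph: one must check that folding $\alpha_r\hat{\varepsilon}_r$ into the comparison genuinely forces both error components of the selected model to zero and that the fixed weight $\alpha_r$ cannot spoil this; everything after that is a verbatim reuse of Lemmas~\ref{lemma:nonparametric_consistency} and~\ref{lemma:parametric_bounded} together with Theorem~\ref{theorem:total_return_error}.
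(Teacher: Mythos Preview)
Your proposal is correct and follows essentially the same route as the paper's proof: both use the strengthened greedy rule to upper-bound the selected model's estimated score by the nonparametric score $\hat{\varepsilon}_{t,np}+\alpha_r\hat{\varepsilon}_{r,np}=O(\text{rad}_n)$, then invoke Lemmas~\ref{lemma:nonparametric_consistency} and~\ref{lemma:parametric_bounded} to pass to the true one-step errors, and finally apply Theorem~\ref{theorem:total_return_error}. The only cosmetic difference is that the paper bounds the weighted sum $\varepsilon_{t,p}+\alpha_r\varepsilon_{r,p}$ directly in one chain (when the parametric model is selected) and then reads off the componentwise $o(1)$ bounds, whereas you first separate the estimated components and then convert each to a true-error bound; the content is the same.
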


\begin{theorem}
\label{theorem:no_assumption_consistency}
MoE simulator with MCTS model selection is a consistent estimator of the policy value of $\pi_e$.
\end{theorem}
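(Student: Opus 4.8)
The plan is to reduce consistency of the MCTS--MoE simulator to the consistency of the modified greedy--MoE simulator already established in Lemma \ref{lemma:greedy_consistency}, by exploiting the fact that the tree search in Algorithm \ref{alg:mcts_model_selection} always performs at least one rollout that coincides with the greedy policy, so its chosen model can only be \emph{better} (in the sense of the error bound) than the greedy choice. Throughout, \emph{GreedyMoeModelSelection} inside \emph{Expand} and \emph{DefaultPolicy} is understood to use the weighted transition-plus-reward condition of this section, which is exactly the hypothesis under which Lemma \ref{lemma:greedy_consistency} holds.

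First I would record the structural observation about Algorithm \ref{alg:mcts_model_selection}: when the root $\nu_{t_0}$ is first expanded, \emph{Expand} sets the new child's model via \emph{GreedyMoeModelSelection}$(s_{t_0},a_{t_0})$, and the \emph{DefaultPolicy} rollout launched from that child again invokes \emph{GreedyMoeModelSelection} at every step, accumulating exactly the recursions for $\delta(\cdot)$ and $\delta_g(\cdot)$ that implement the bound of Theorem \ref{theorem:total_return_error}. Hence the value backed up to that child satisfies $\widetilde{Q}(\nu_{\mathrm{greedy}}) \ge -\delta_g^{\mathrm{greedy}}(\hat{x}_{t_0},a_{t_0})$, where $\delta_g^{\mathrm{greedy}}$ is the return-error bound of Theorem \ref{theorem:total_return_error} along the trajectory the greedy--MoE simulator would generate from the current simulated pair. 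Since the returned model is $\arg\max_{\nu'}\widetilde{Q}(\nu')$, the chosen child $\nu^\ast$ obeys $-\widetilde{Q}(\nu^\ast) \le -\widetilde{Q}(\nu_{\mathrm{greedy}}) \le \delta_g^{\mathrm{greedy}}(\hat{x}_{t_0},a_{t_0})$.

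Next I would push this to a bound on the one-step errors of whichever model MCTS actually returns. Every rollout through $\nu^\ast$ has a value $-\delta_g$ whose first-step contribution is $\gamma\bigl(\hat{\varepsilon}_{r}(\nu^\ast)+L_r\hat{\varepsilon}_{t}(\nu^\ast)\bigr)$ and whose remaining terms are non-negative; since $\widetilde{Q}(\nu^\ast)$ is the least-negative such value, $\gamma\bigl(\hat{\varepsilon}_{r,\mathrm{MCTS}}(t_0)+L_r\hat{\varepsilon}_{t,\mathrm{MCTS}}(t_0)\bigr)\le -\widetilde{Q}(\nu^\ast)\le \delta_g^{\mathrm{greedy}}$. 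By Lemma \ref{lemma:greedy_consistency}, and in particular the uniform-in-state rate $\delta_g^{\mathrm{greedy}}(\hat{x}_{t_0},a_{t_0})=O(\mathrm{rad}_n)$ obtained in its proof, we get $\hat{\varepsilon}_{t,\mathrm{MCTS}}(t_0)=O(\mathrm{rad}_n)$ and $\hat{\varepsilon}_{r,\mathrm{MCTS}}(t_0)=O(\mathrm{rad}_n)$. I would then convert estimated errors to true errors exactly as in the proof of Theorem \ref{theorem:consistency}: Lemma \ref{lemma:nonparametric_consistency} gives $\varepsilon_{t,np},\varepsilon_{r,np}=O(\mathrm{rad}_n)$, and Lemma \ref{lemma:parametric_bounded} gives $\varepsilon_{t,p}\le\hat{\varepsilon}_{t,p}+O(\mathrm{rad}_n)$ and $\varepsilon_{r,p}\le\hat{\varepsilon}_{r,p}+O(\mathrm{rad}_n)$, so regardless of which model is selected, the true one-step transition and reward errors at the current simulated state are $O(\mathrm{rad}_n)$.

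Finally I would chain the per-step bounds. Algorithm \ref{alg:moe_simulator} re-runs \emph{MctsMoeModelSelection} at every step $t_0=0,\dots,T$ rooted at the then-current simulated state, and the bound above is uniform in that state, so $\varepsilon_t(t)=O(\mathrm{rad}_n)$ and $\varepsilon_r(t)=O(\mathrm{rad}_n)$ for all $t\le T$. Substituting into Theorem \ref{theorem:total_return_error} and using that $T$ is finite (so the double sum with the $(L_t)^{t'}$ weights is a finite constant), the return error of each simulated trajectory is $O(\mathrm{rad}_n)$, which by Assumption \ref{assm:behavior_converage} tends to $0$; averaging over the $N_s$ trajectories preserves this, giving consistency. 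The main obstacle I anticipate is making the first step rigorous, namely that a fully greedy rollout is always taken from the root's greedy child and that the $\delta_g^\ast$ it accumulates coincides with the Theorem \ref{theorem:total_return_error} bound along the greedy-simulator trajectory, so that $\widetilde{Q}(\nu_{\mathrm{greedy}})\ge-\delta_g^{\mathrm{greedy}}$; this requires careful bookkeeping of \emph{Expand}, \emph{DefaultPolicy}, and \emph{Backup}, together with extracting the uniform rate $\delta_g^{\mathrm{greedy}}=O(\mathrm{rad}_n)$ from the proof of Lemma \ref{lemma:greedy_consistency} rather than mere convergence of the value estimate. The remainder is a routine repetition of the proof of Theorem \ref{theorem:consistency} with the role of $\epsilon_{\mathrm{planning}}$ played by $\delta_g^{\mathrm{greedy}}$, which now vanishes on its own.
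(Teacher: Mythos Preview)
Your proposal is correct and follows essentially the same route as the paper: you observe that the tree search always includes at least one greedy rollout from the root, use this to lower-bound $\widetilde{Q}(\nu^\ast)$ by $-\delta_g^{\mathrm{greedy}}=O(\mathrm{rad}_n)$, extract the first-step estimated errors of the chosen model from the nonnegative decomposition of $-\widetilde{Q}(\nu^\ast)$, convert estimated to true errors via Lemmas \ref{lemma:nonparametric_consistency} and \ref{lemma:parametric_bounded}, and finish with Theorem \ref{theorem:total_return_error}. The paper's proof is organized identically; the only cosmetic difference is that it bounds the greedy rollout directly by the nonparametric estimated errors rather than invoking the rate from the proof of Lemma \ref{lemma:greedy_consistency}, which amounts to the same thing.
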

Proof sketch: Notice that only when $\hat{\varepsilon}_{t,p}(x) + \alpha_r \hat{\varepsilon}_{r,p}(x) \le \hat{\varepsilon}_{t,np}(x) + \alpha_r \hat{\varepsilon}_{r,np}(x) $ we will select parametric model. Then Lemma \ref{lemma:greedy_consistency} can be proved by showing the greedy model is consistent since the nonparametric model is consistent. Thus we can further prove Theorem \ref{theorem:no_assumption_consistency} by show that the MCTS policy will always choose a model better than greedy selection since greedy selection is the default roll out policy and the environment is deterministic.

We now show the proofs formally. Proof of Lemma \ref{lemma:greedy_consistency}:
\begin{proof}
We are going to show that the error of the return goes to zero as the number of samples goes to infinity. According to Theorem \ref{theorem:total_return_error}, we only need to show that $\varepsilon_{t,\text{greedy}}(t)$ and $\varepsilon_{r,\text{greedy}}(t)$ goes to zero for any time $t$ where $\text{greedy} \in \{p, np\}$ is the model selected by greedy MoE model selection algorithm at time step $t$.

We showed in Lemma \ref{lemma:nonparametric_consistency} that the non-parametric model error $\varepsilon_{t, np}(x)$ and $\varepsilon_{r, np}(x)$ goes to zero when n goes to infinity. Now we are going to show that we will select a parametric model at a given state $x$ only if $\varepsilon_{t,p}(x) + \varepsilon_{r,p}(x)$ will also go to zero.

According to the greedy model selection algorithm, we will only select the parametric model when 
$$\hat{\varepsilon}_{t,p}(x) + \alpha_r \hat{\varepsilon}_{r,p}(x) \le \hat{\varepsilon}_{t,np}(x) + \alpha_r \hat{\varepsilon}_{r,np}(x) $$,
where the coefficient $\alpha_r$ is a constant factor determined by the scale of reward and transition function. According to Lemma \ref{lemma:parametric_bounded},
\begin{eqnarray}
    && \varepsilon_{t,p}(x) + \alpha_r \varepsilon_{r,p}(x) \\
    &\le& \hat{\varepsilon}_{t,p}(x) + \alpha_r \hat{\varepsilon}_{r,p}(x) + O(\text{rad}_n) \\
    &\le& \hat{\varepsilon}_{t,np}(x) + \alpha_r \hat{\varepsilon}_{r,np}(x) + O(\text{rad}_n) \\
    = O(\text{rad}_n)
\end{eqnarray}
Since $\lim_{n \to \infty} \text{rad}_n = 0$, for any chosen model at time step t, $\varepsilon_{t,\text{greedy}}(t)$ and $\varepsilon_{r,\text{greedy}}(t)$ is also $o(1)$. The proof follows from then applying Theorem \ref{theorem:total_return_error}
\end{proof}

Proof of Theorem \ref{theorem:no_assumption_consistency}
\begin{proof}
According to the MCTS MoE model selection algorithm, for any input $(s_{t_0},a_{t_0})$ we will at least have one roll-out trajectory following by the greedy MoE model selection. So the return of the chosen node is at least larger than this:
\begin{eqnarray}
    && \max_{\nu' \in \text{children of } \nu} \widetilde{Q}(\nu') \\
    &\ge& - L_r \sum_{t=0}^{T-t_0} \gamma^t \sum_{t'=0}^{t-1} (L_t)^{t'} \hat{\varepsilon}_{t, \text{greedy}} (t_0+t-t'-1) \nonumber \\
    && - \sum_{t=0}^{T-t_0} \gamma^t\hat{\varepsilon}_{r,\text{greedy}}(t_0+t) \\
    &\ge& - K \text{rad}_n
\end{eqnarray}
where $K$ is some constant independent of sample size n. This follows from the fact that the estimated error of greedy selected model can be bounded by the estimated error of non-parametric model, and further bounded by $O(\text{rad}_n)$. By the MCTS algorithm, we have that the return of chosen node can be expressed as:
\begin{eqnarray}
    && \max_{\nu' \in \text{children of } \nu} \widetilde{Q}(\nu') \\
     &=& - L_r \sum_{t=0}^{T-t_0} \gamma^t \sum_{t'=0}^{t-1} (L_t)^{t'} \hat{\varepsilon}_{t, \text{MCTS}} (t_0+t-t'-1) \nonumber \\
    && - \sum_{t=0}^{T-t_0} \gamma^t\hat{\varepsilon}_{r,\text{MCTS}}(t_0+t) 
\end{eqnarray}
where $\varepsilon_{t, \text{MCTS}} (t)$ and $\varepsilon_{r, \text{MCTS}} (t)$ are the transition and reward error of the model selected by MCTS MoE model selection algorithm. Thus 
\begin{align}
    L_r \gamma \hat{\varepsilon}_{t, \text{MCTS}} (t_0) + \hat{\varepsilon}_{r, \text{MCTS}} (t_0) \le -\widetilde{Q}(\nu') \le  K \cdot \text{rad}_n
\end{align}

Thus, there exist another constant $K'$ such that the one step transition and reward error of the chosen model satisfy that
\begin{align}
    \hat{\varepsilon}_{t, \text{MCTS}}(t_0) \le K' \text{rad}_n \\
    \hat{\varepsilon}_{r, \text{MCTS}}(t_0) \le K' \text{rad}_n
\end{align}
Now we need to bound the true one step transition and reward error of the chosen model $\varepsilon_{t, \text{MCTS}}(t_0)$ and $\varepsilon_{t, \text{MCTS}}(t_0)$. By Lemma \ref{lemma:nonparametric_consistency} we know that 
we can bound it for non-parametric model for any state:
\begin{align}
\varepsilon_{t,np}(x) \le L_t \text{rad}_n, \quad \varepsilon_{r,np}(x) \le L_r \text{rad}_n
\end{align}
and Lemma \ref{lemma:parametric_bounded} show that 
\begin{align}
    \varepsilon_{t,p}(x) \le \hat{\varepsilon}_{t,p}(x) + L_t\text{rad}_n + L_{\hat{f}_t} \text{rad}_n \\
    \varepsilon_{r,p}(x) \le \hat{\varepsilon}_{r,p}(x) + L_r\text{rad}_n + L_{\hat{f}_r} \text{rad}_n
\end{align}

Then for both model we have that
\begin{align}
    \varepsilon_{t}(x) \le \hat{\varepsilon}_{t}(x) + K'' \text{rad}_n  \\
    \varepsilon_{r}(x) \le \hat{\varepsilon}_{r}(x) + K'' \text{rad}_n
\end{align}
for some constant $K''$. Therefore for the chosen model, we can bound its one step transition error and reward error.
\begin{align}
    \varepsilon_{t,\text{MCTS}}(x) \le \hat{\varepsilon}_{t,\text{MCTS}}(x) + K'' \text{rad}_n = O(\text{rad}_n) \\
    \varepsilon_{r,\text{MCTS}}(x) \le \hat{\varepsilon}_{r,\text{MCTS}}(x) + K'' \text{rad}_n = O(\text{rad}_n)
\end{align}
Combining this with Theorem \ref{theorem:total_return_error}, we have that the total error of return could be bounded by $O(\text{rad}_n)$ and goes to zero as n goes to infinity.
\end{proof}

\section{Evaluation of Model Error Estimators}
\label{sec:error_estimators_empirical_evaluation}

In this section we empirically investigate the quality of the estimators we use for the error of the transition function, by analyzing their performance on the example presented in section \ref{sec:2d_gridworld}. In Figure \ref{fig:model_errors:ee_data_N_true} we plot the true error of the nonparametric model as a function of coordinate for the action "North", and compare it with the estimate from Equation \ref{eq:non_parametric_err_est} in the main text, shown in Figure \ref{fig:model_errors:ee_data_N}. Figures \ref{fig:model_errors:ee_param_N_true} and \ref{fig:model_errors:ee_param_N} are the equivalent figures for the parametric model. Comparing the errors shown in Figures \ref{fig:model_errors:ee_data_N_true} and \ref{fig:model_errors:ee_param_N_true} indicates whether the parametric or nonparametric model should be selected, and the correct selection based on the true errors is presented in Figure \ref{fig:model_errors:data_selected_true_N}. Similarly by comparing the errors presented in Figures \ref{fig:model_errors:ee_data_N} and \ref{fig:model_errors:ee_param_N}, we present in Figure \ref{fig:model_errors:data_selected_N} which model our MoE model would actually select. Finally, in Figure \ref{fig:model_errors:correct_selection_N} we compare Figures \ref{fig:model_errors:data_selected_true_N} and \ref{fig:model_errors:data_selected_N} to show if the MoE model would make the correct choice in which model to use. Similar analyses is presented on the right half of Figure \ref{fig:model_errors} for the "East" action.

We see that the nonparametric model has small error for the areas where trajectories in the data pass through, and the error increases with distance from clusters of observations. The simple parametric model, on the other hand, has errors which are uncorrelated with the density of observations (In all other domains we will present in this paper this will not be the case, as we will learn the parametric model from the data, and therefore expect the parametric model to be more accurate in regions where we have observations of transitions). Our estimates for the error follow this general trend, and more importantly they properly identify the model with the smaller error over most of the space (Figures \ref{fig:model_errors:correct_selection_N} and \ref{fig:model_errors:correct_selection_E}).

\begin{figure}[H]
\centering
\subcaptionbox{\label{fig:model_errors:ee_data_N_true}}
{\includegraphics[width=0.115\textwidth]{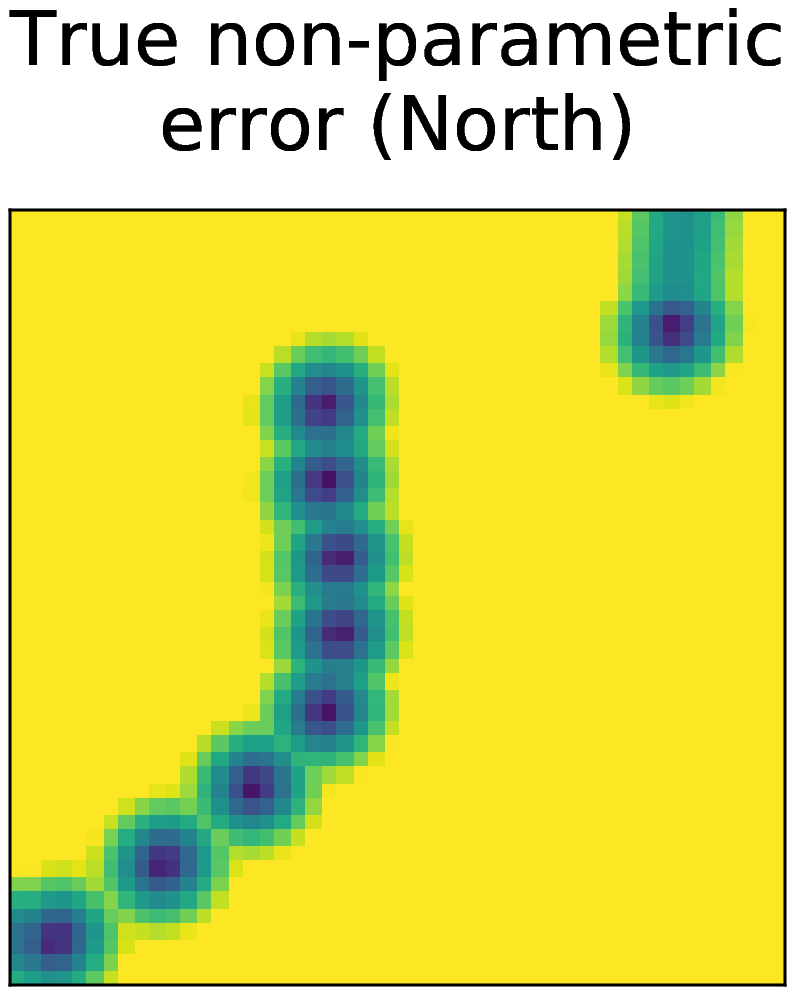}}
\subcaptionbox{\label{fig:model_errors:ee_data_N}}
{\includegraphics[width=0.115\textwidth]{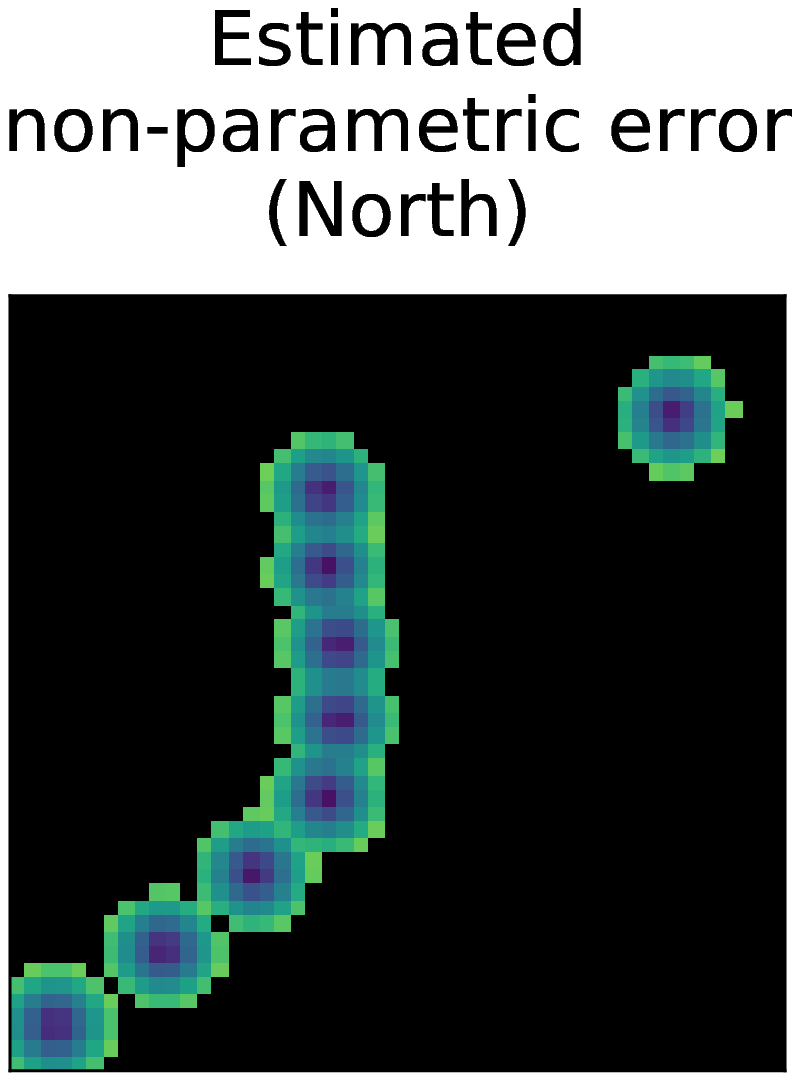}}
\subcaptionbox{\label{fig:model_errors:ee_data_E_true}}
{\includegraphics[width=0.115\textwidth]{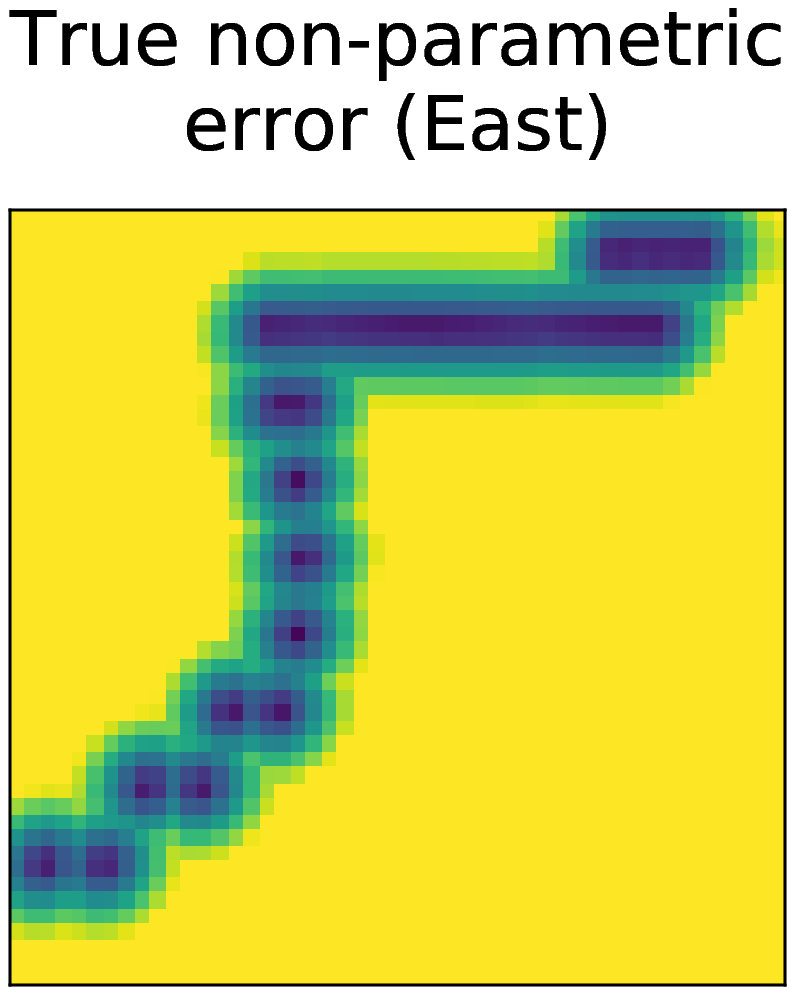}}
\subcaptionbox{\label{fig:model_errors:ee_data_E}}
{\includegraphics[width=0.115\textwidth]{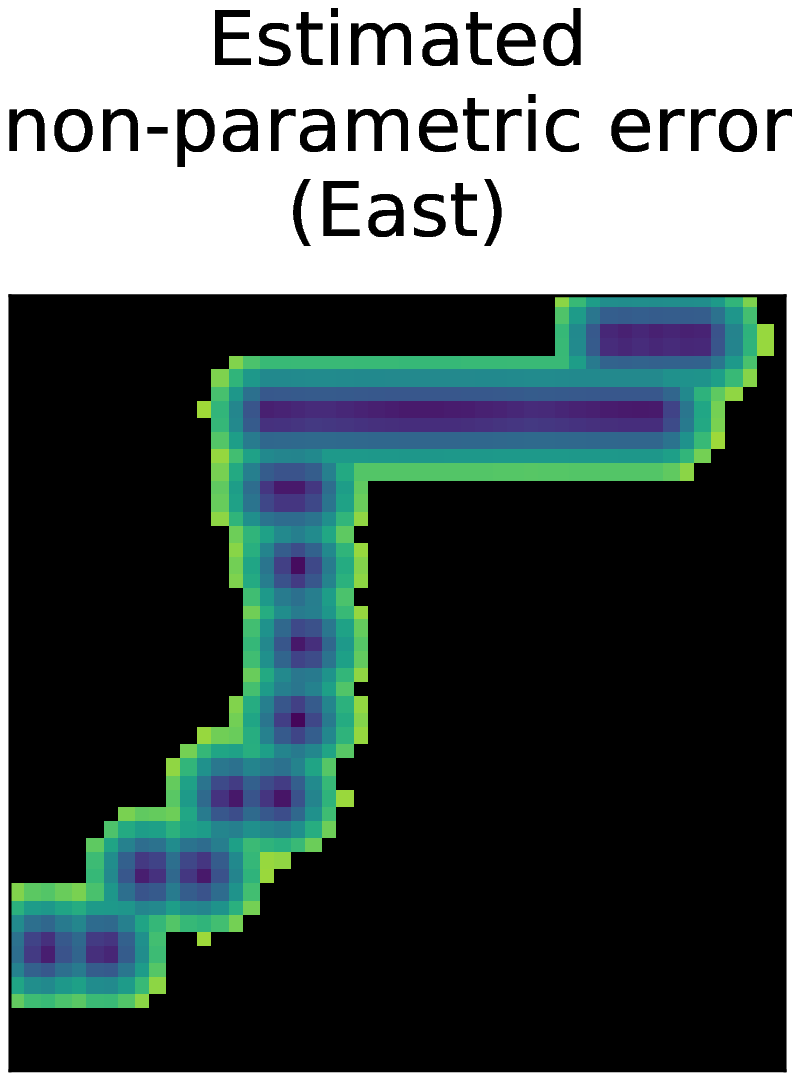}}
\subcaptionbox{\label{fig:model_errors:ee_param_N_true}}
{\includegraphics[width=0.115\textwidth]{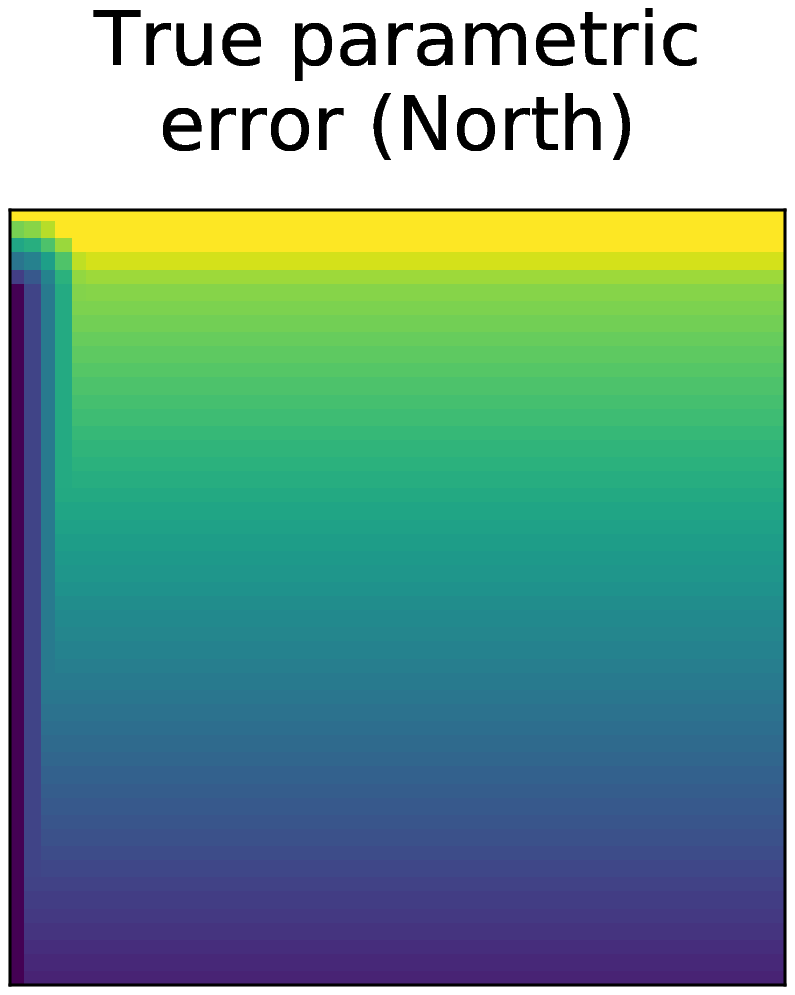}}
\subcaptionbox{\label{fig:model_errors:ee_param_N}}
{\includegraphics[width=0.115\textwidth]{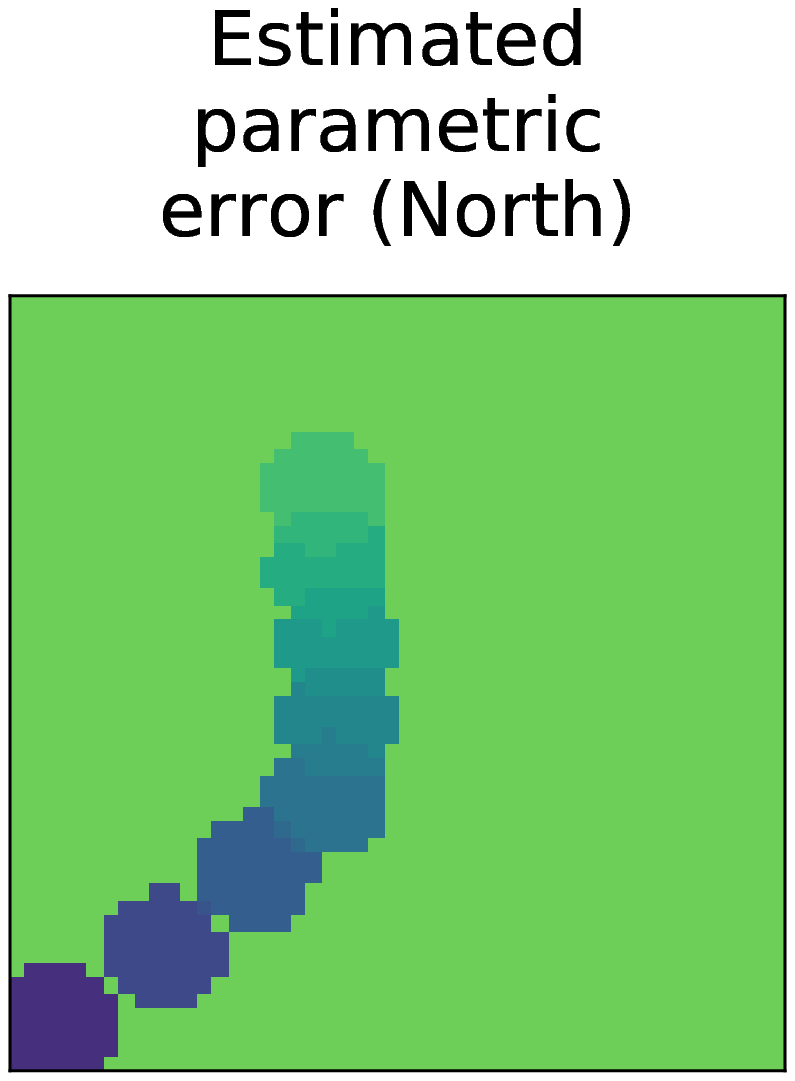}}
\subcaptionbox{\label{fig:model_errors:ee_param_E_true}}
{\includegraphics[width=0.115\textwidth]{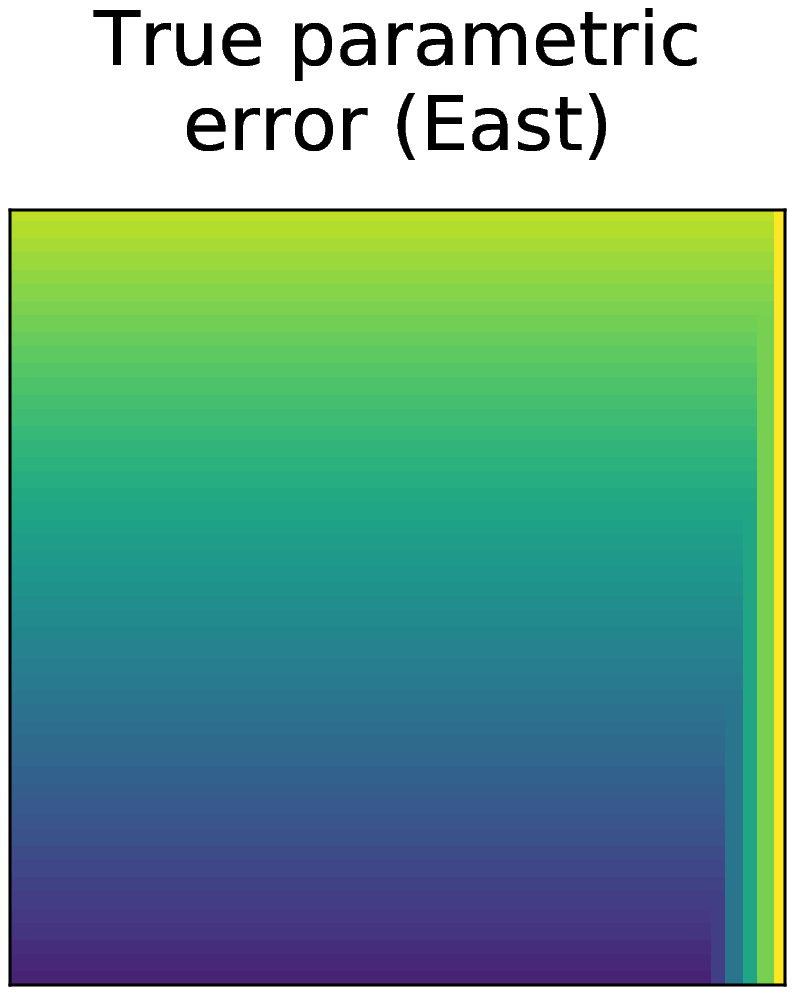}}
\subcaptionbox{\label{fig:model_errors:ee_param_E}}
{\includegraphics[width=0.115\textwidth]{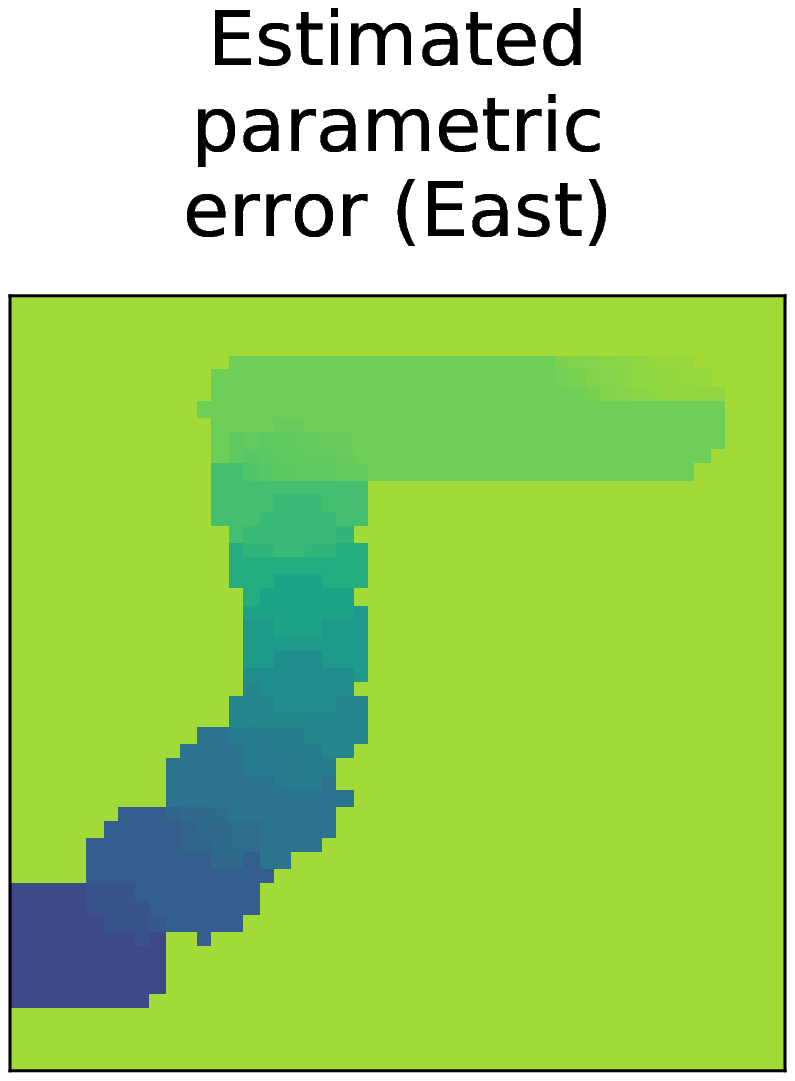}}
\subcaptionbox{\label{fig:model_errors:data_selected_true_N}}
{\includegraphics[width=0.115\textwidth]{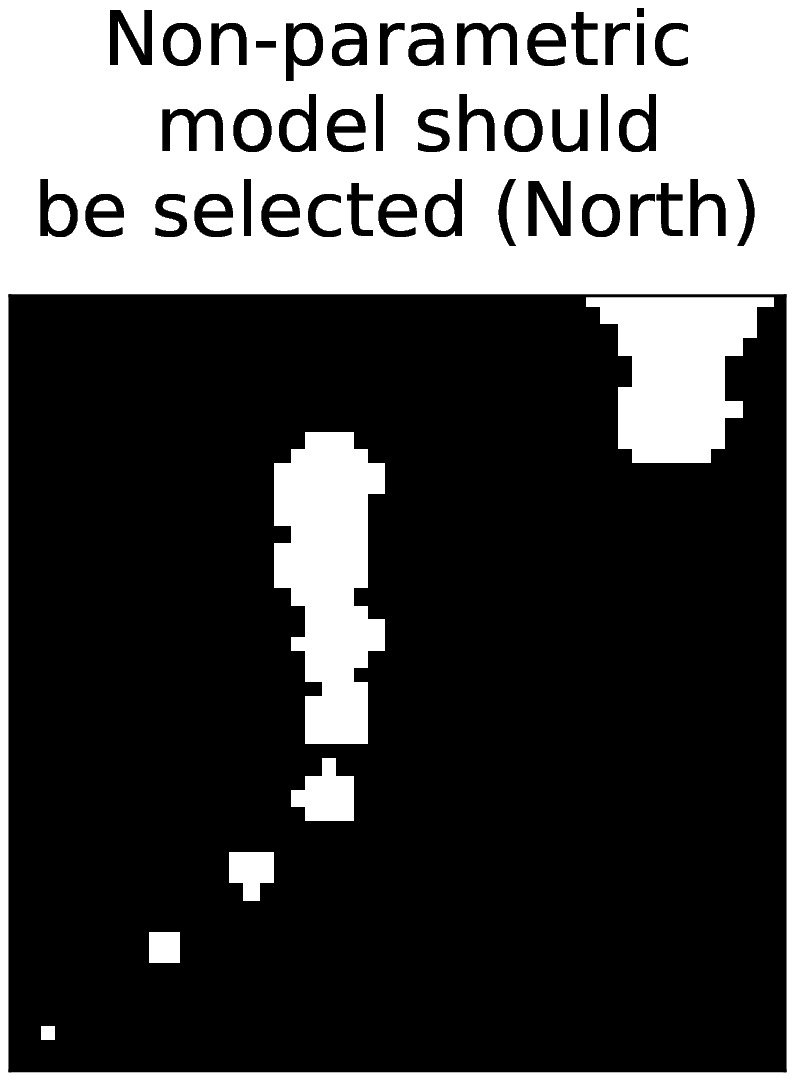}}
\subcaptionbox{\label{fig:model_errors:data_selected_N}}
{\includegraphics[width=0.115\textwidth]{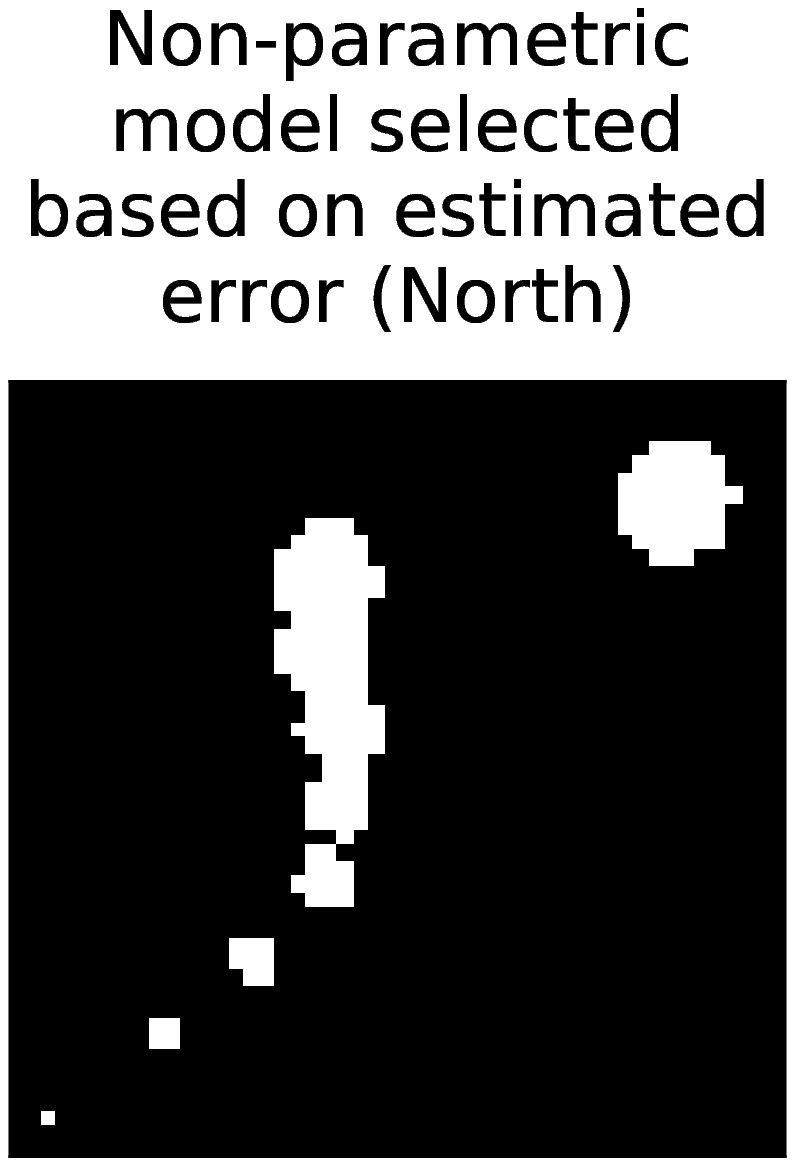}}
\subcaptionbox{\label{fig:model_errors:data_selected_true_E}}
{\includegraphics[width=0.115\textwidth]{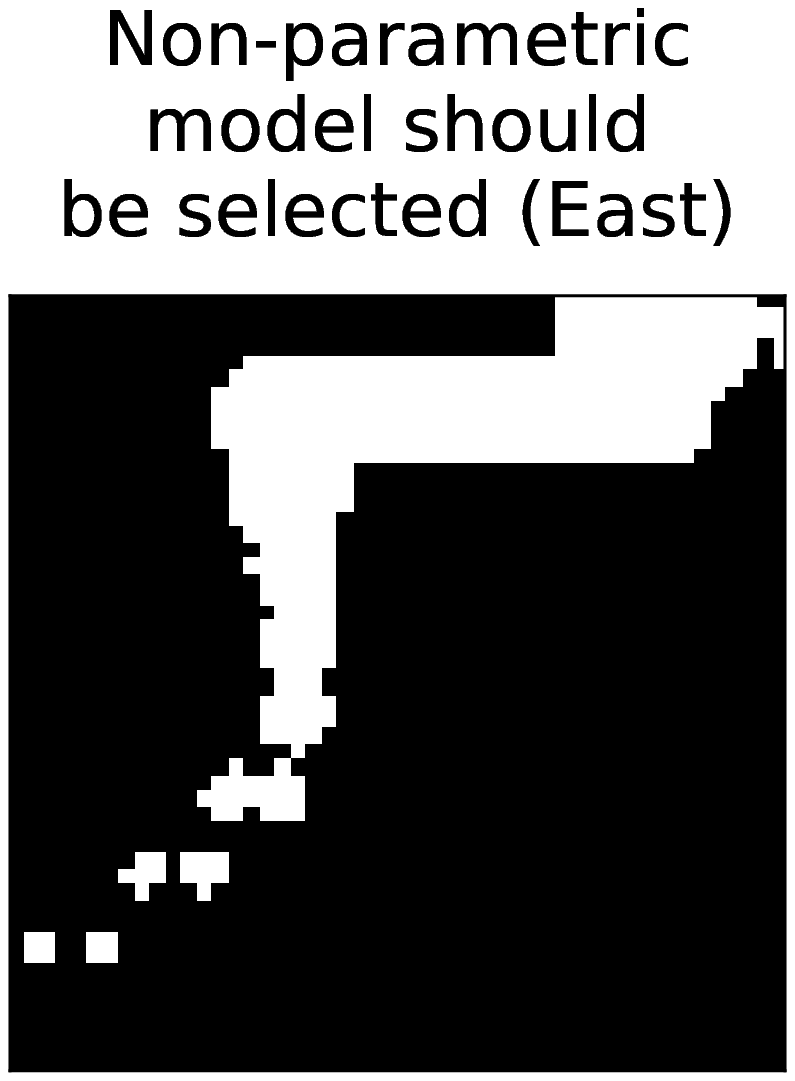}}
\subcaptionbox{\label{fig:model_errors:data_selected_E}}
{\includegraphics[width=0.115\textwidth]{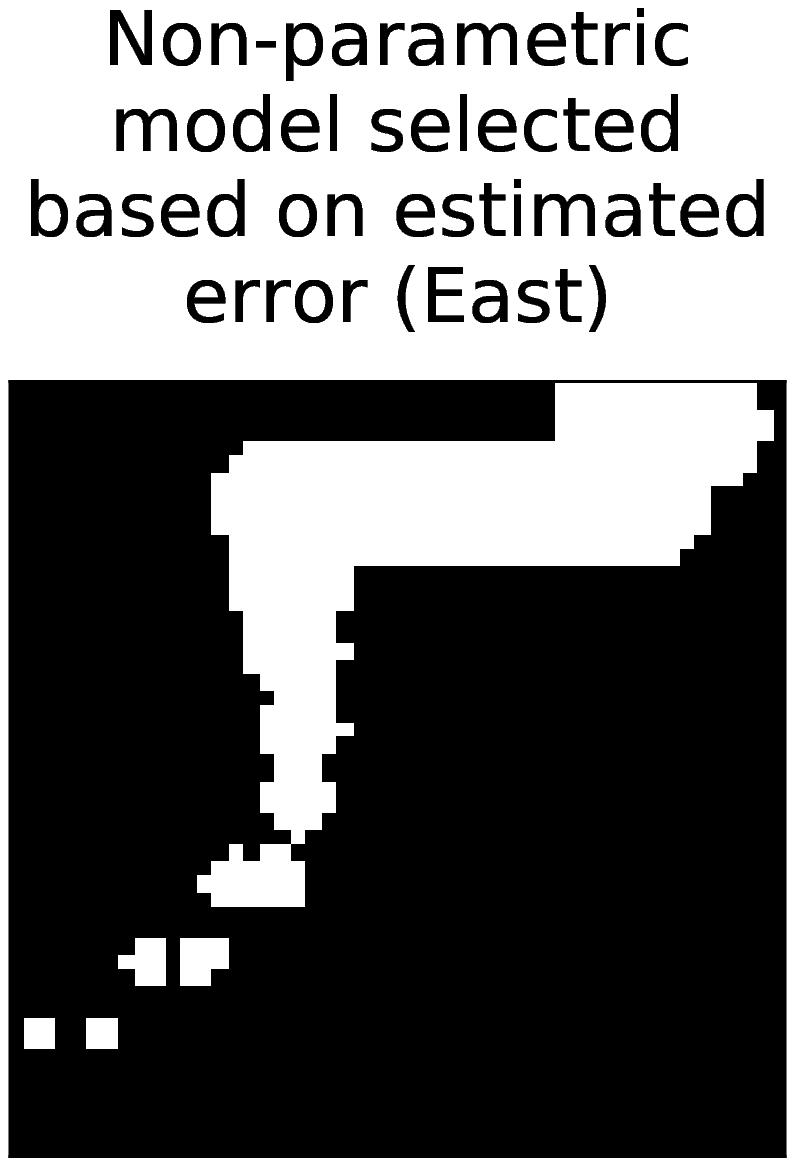}}
\subcaptionbox{\label{fig:model_errors:correct_selection_N}}
{\includegraphics[width=0.115\textwidth]{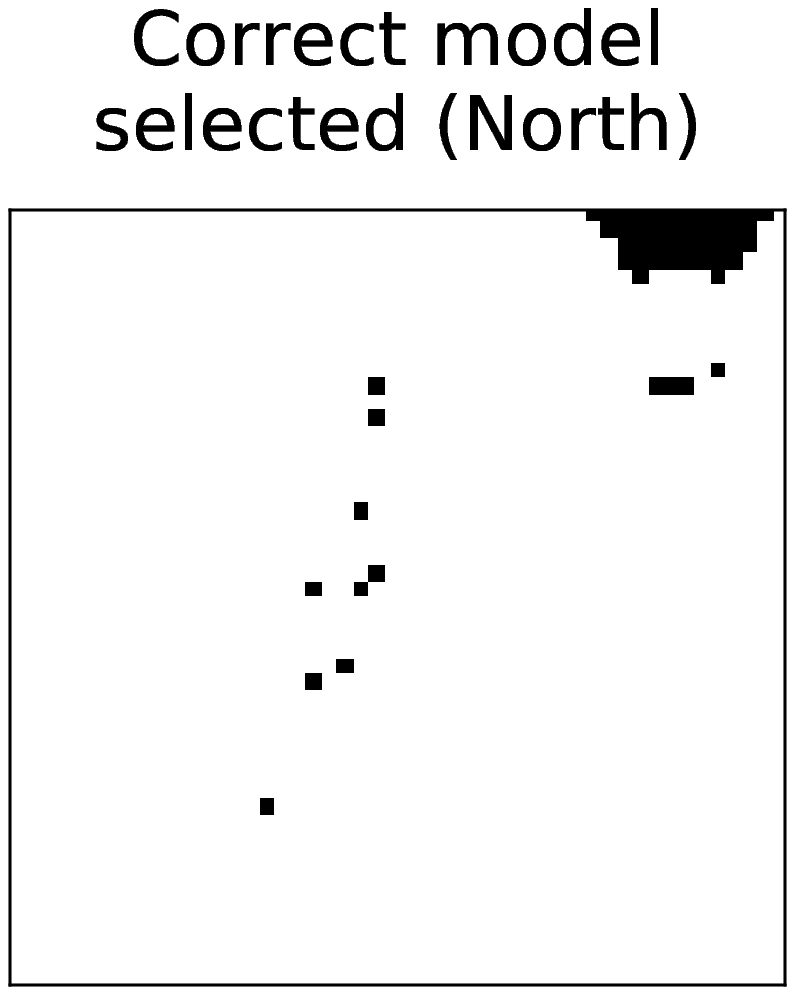}}
\subcaptionbox{\label{fig:model_errors:correct_selection_E}}
{\includegraphics[width=0.115\textwidth]{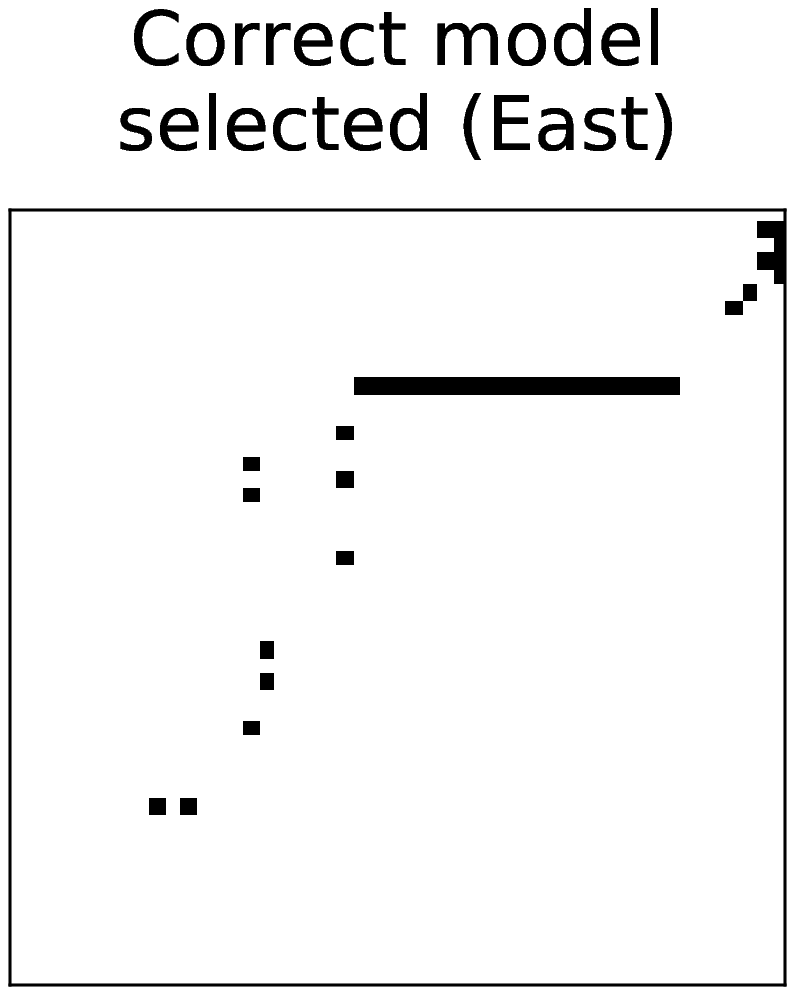}}
\caption{\textbf{Empirical evaluation of error estimates for model errors.}  For the 2D gridworld example described in Section \ref{sec:2d_gridworld} the estimators we use for the model errors resemble the true errors of both the parametric and non-parametric models (a-h). More importantly, these estimators allow the MoE model to correctly select the model with the lower prediction error on the transition (i-n). (All heatmaps figures are presented in the same color scale)}\label{fig:model_errors}
\end{figure}

\section{Experimental Details}
\label{sec:experimental_details}

The dynamics of the cancer domain follow the ODEs presented in \citep{ribba2012tumor} which model the response of cancer cells to treatment. The state space consists of 4 features representing cell counts and medication concentrations, and each time step represent a month in which a clinician may choose between administering a particular treatment or avoiding treatment. The reward at each time step is the total change in diameter of cancerous cells.
To learn the parametric model we fit a linear regression model to predict the dynamics of the states given each action.

The HIV domain is described in \citet{ernst2006clinical}, and consists of 6 parameters describing the state of the patient and 4 possible actions. As the reward function we use the reward described in \citet{ernst2006clinical}. As the parametric model we use a feed-forward neural network with two layers, each consisting of 50 hidden units and a $\tanh$ activation function.

\paragraph{Evaluation and behavior policies.}

For the cancer domain, we test an evaluation policy which treats the patient every month for 10 months, and then stops treatment. As behavior policy we use an $\epsilon$-greedy version of the evaluation policy. For each value of $\epsilon$ we run 500 experiments in which we generate 10 trajectories for learning the models.

In the HIV domain, we use fitted Q iterations to learn an optimal policy. Under this policy --- whose trajectory is shown in Figure \ref{fig:hiv_optimal_policy_true_traj} as the time evolution of the 6 state dimensions --- patients start in a state with a high viral load, which decreases over roughly 70 treatment steps. After the patient is brought to a steady state with low viral load, the continued treatment keeps the patient stabilized. As a behavior policy, we use a policy which is identical to the evaluation policy when the patient is far away from the stable state, and switch to an $\epsilon$-greedy policy around the steady state. This can be thought of as a likely real world scenario where clinicians know how to treat severely ill patients, but are less certain about how to keep them stable in the long run when their condition is not critical. More explicitly, the behavior policy follows the evaluation policy for $\log E<4$, where $E$ is the number of immune effectors, whose evolution is shown in the bottom right plot in Figure \ref{fig:hiv_optimal_policy_true_traj} , and switches to $\epsilon$-greedy when $\log E>4$. For each value of $\epsilon$ we run 100 experiments in which 5 trajectories are generated and used for learning the models.

\begin{figure}[t]
\centering
\subcaptionbox{HIV\label{fig:hiv_optimal_policy_true_traj}}
{\includegraphics[width=0.23\textwidth]{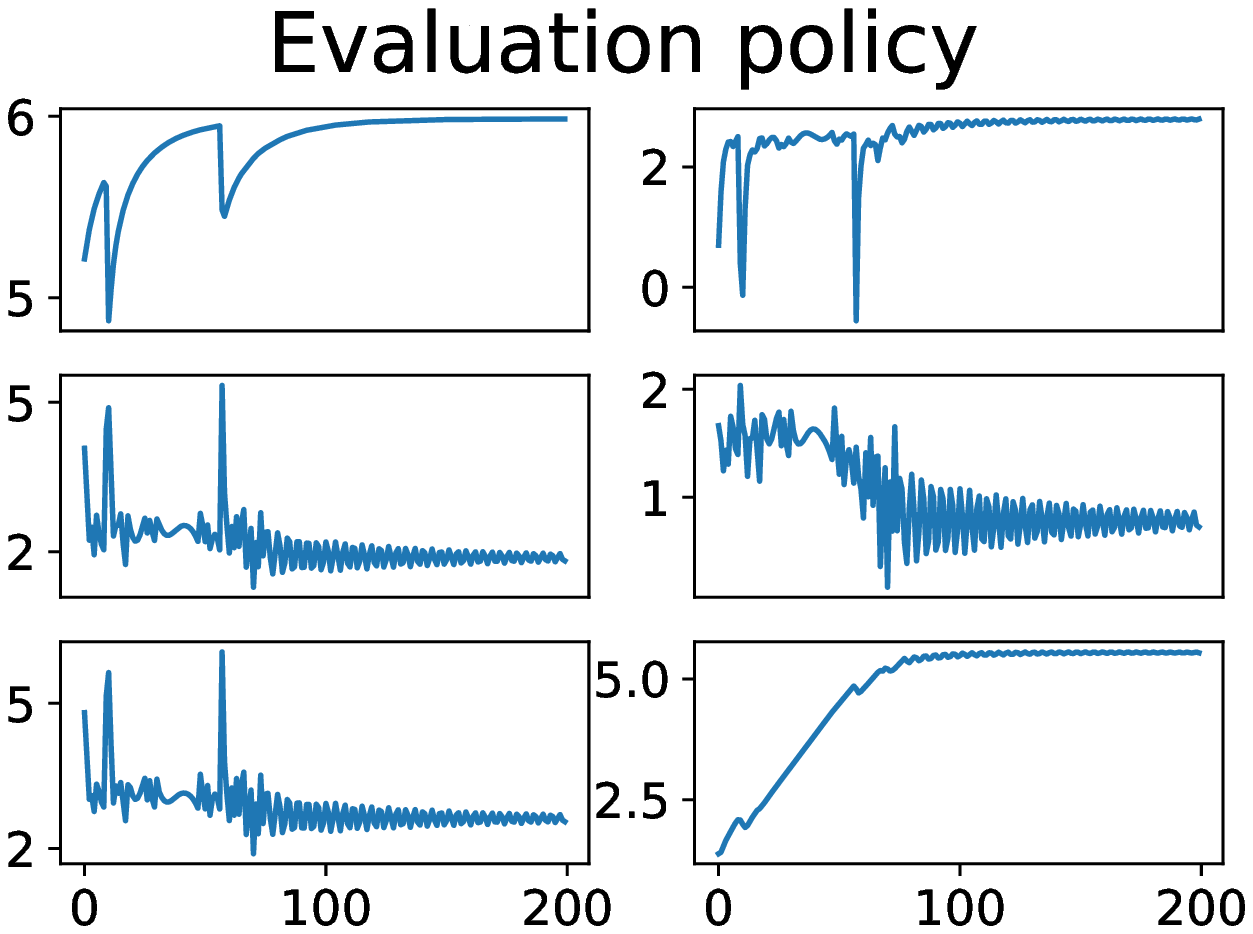}}
\subcaptionbox{HIV\label{fig:hiv_random_behavior_in_ss_behavior_traj}}
{\includegraphics[width=0.23\textwidth]{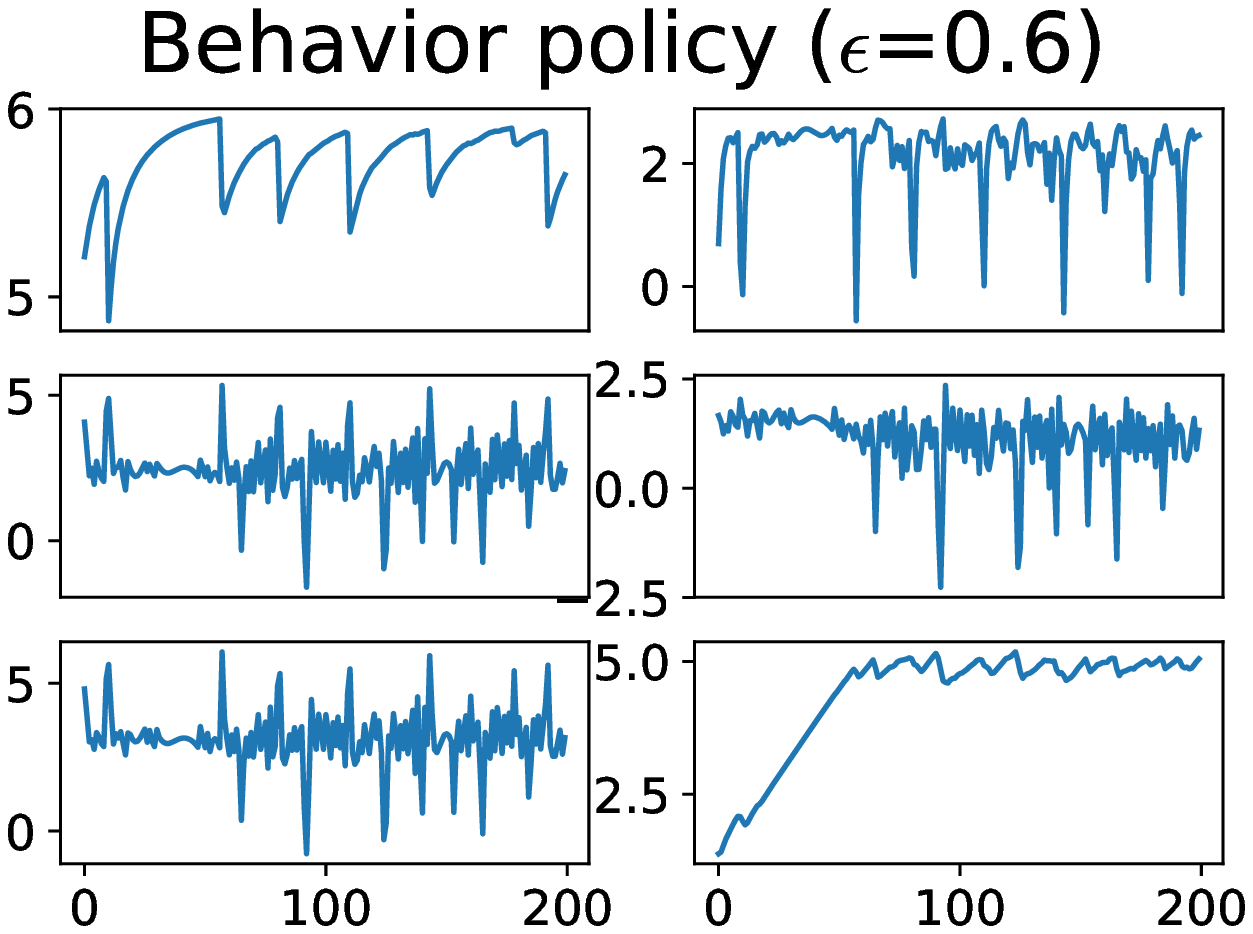}}
\caption{\textbf{Trajectories in the true environment generated by the evaluation and behavior policies.} The behavior policy is similar to the evaluation policy for the initial part of the trajectory (roughly for the first 70 steps) and becomes $\varepsilon$-greedy near the steady state, as can be seen by the more erratic nature of the trajectories for late time steps.}
\label{fig:hiv_eval_and_behavior_trajectories}
\end{figure}

\subsection{Comparison with IS methods}
\label{sec:IS_extended}

\begin{table*}
\vspace{-0.5em}
\caption{$\sqrt{\mathbb{E}[(v^{\pi_e}-\hat{v}^{\pi_e})^2]}/v^{\pi_e}$ ; $(\epsilon=0.4)$}
\label{table:comparison_with_is_extended}
\small
\centering
\begin{tabular}{c|cccccccccc}
\toprule
\multicolumn{1}{c}{} & 
\multicolumn{1}{c}{$M_{\text{p}}$} &
\multicolumn{1}{c}{$M_{\text{np}}$} &
\multicolumn{1}{c}{$M_{\text{MoE}}$} &
\multicolumn{1}{c}{$M_{\text{MCTS-MoE}}$} &
\multicolumn{1}{c}{IS} &
\multicolumn{1}{c}{WIS} &
\multicolumn{1}{c}{PDIS} &
\multicolumn{1}{c}{CWPDIS} &
\multicolumn{1}{c}{DR} &
\multicolumn{1}{c}{WDR} \\
\midrule
Cancer & 0.021 & 0.027 & 0.020 & \textbf{0.019} & 1.0 & 1.0 & 0.55 & 0.22 & 0.87 & 0.22 \\
HIV & 0.65 & 0.88 & 0.64 & \textbf{0.63} & 1.0 & 1.0 & 0.66 & 0.99 & 89.2 & 0.99 \\
\bottomrule
\end{tabular}
\end{table*}

In section \ref{sec:results_medical_simulators} we compared the parametric and nonparametric models, as well as our greedy MoE model to two common importance sampling estimators. In Table \ref{table:comparison_with_is_extended} here we provide additional results for more importance sampling based estimators - standard importance sampling (IS), weighted importance sampling (WIS), per-decision importance sampling (PDIS), consistent weighted per-decision importance sampling (CWPDIS), doubly robust (DR) and weighted doubly robust (WDR) \citep{precup2000eligibility, jiang2016doubly, thomas2016data, thomas2015safe}. The DR and WDR estimators require independent estimates of state values, which we obtain using the parametric model. These results demonstrate that for regimes with limited amount of data, even for moderate trajectory lengths (30 steps for the cancer simulator), all IS based estimators fail due to extremely small effective sample sizes \citep{liu2018breaking, gottesman2018evaluating}, and therefore we must resort to model based estimators.

\subsection{Empirical test for consistency}
\label{sec:consistency_experiments}

In this section we empirically test the consistency of the MoE simulators and demonstrate in Figure \ref{fig:consistency_experiments} that as the number of observed trajectories increases, the value estimation error for both domains decreases across all models. In the cancer domain we see that with access to the true error, the MCTS-MoE consistently outperforms all other methods. For the HIV domain we observe that minimizing the trajectory simulation accuracy does not imply minimizing the value estimation error due to improper choice of metric, as discussed in Appendix \ref{sec:different_metric_for_hiv}.

\begin{figure}[t]
\centering
\subcaptionbox{Cancer\label{fig:cancer_consistency_experiments}}
{\includegraphics[width=0.23\textwidth]{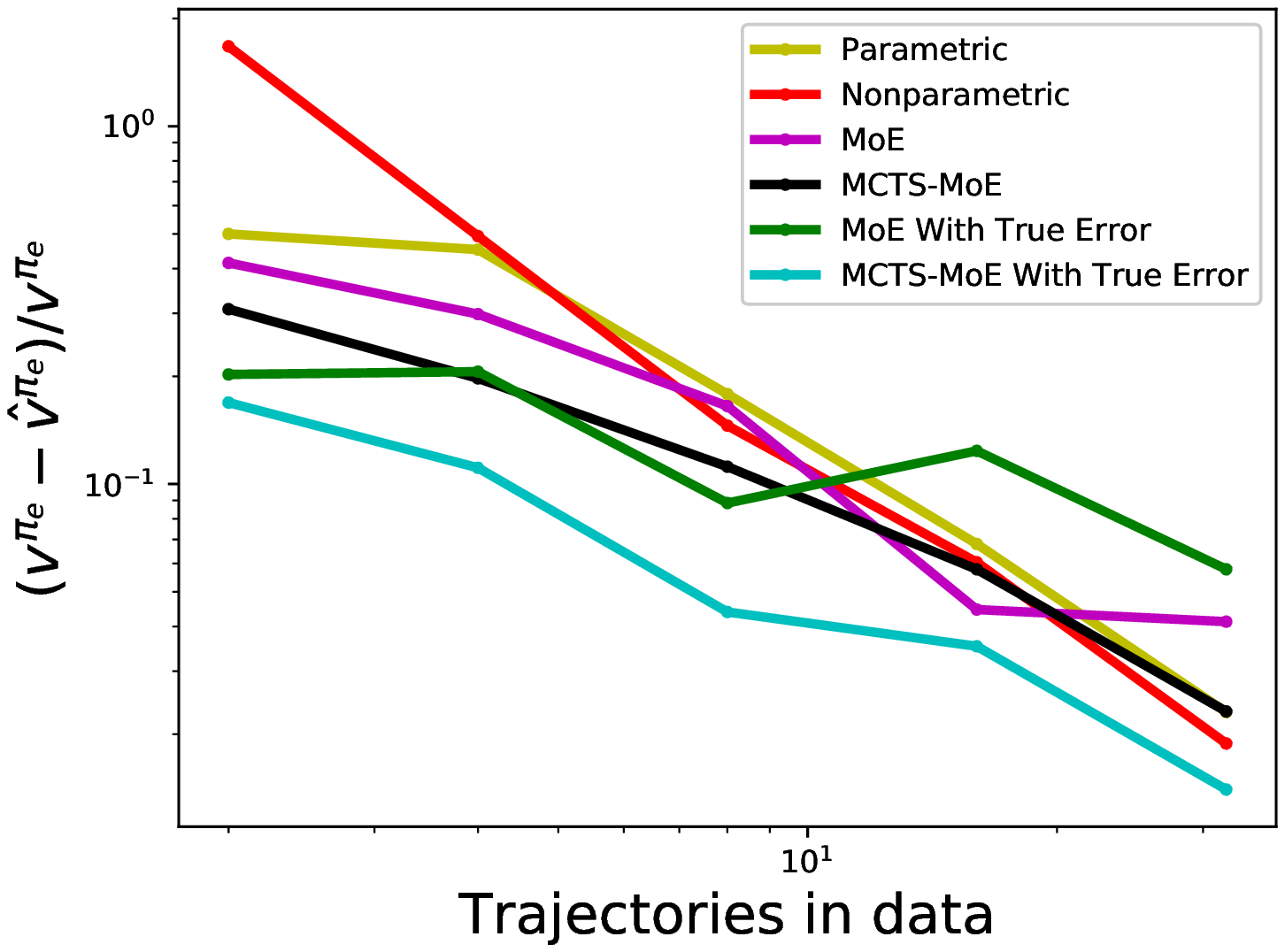}}
\subcaptionbox{HIV\label{fig:hiv_consistency_experiments}}
{\includegraphics[width=0.23\textwidth]{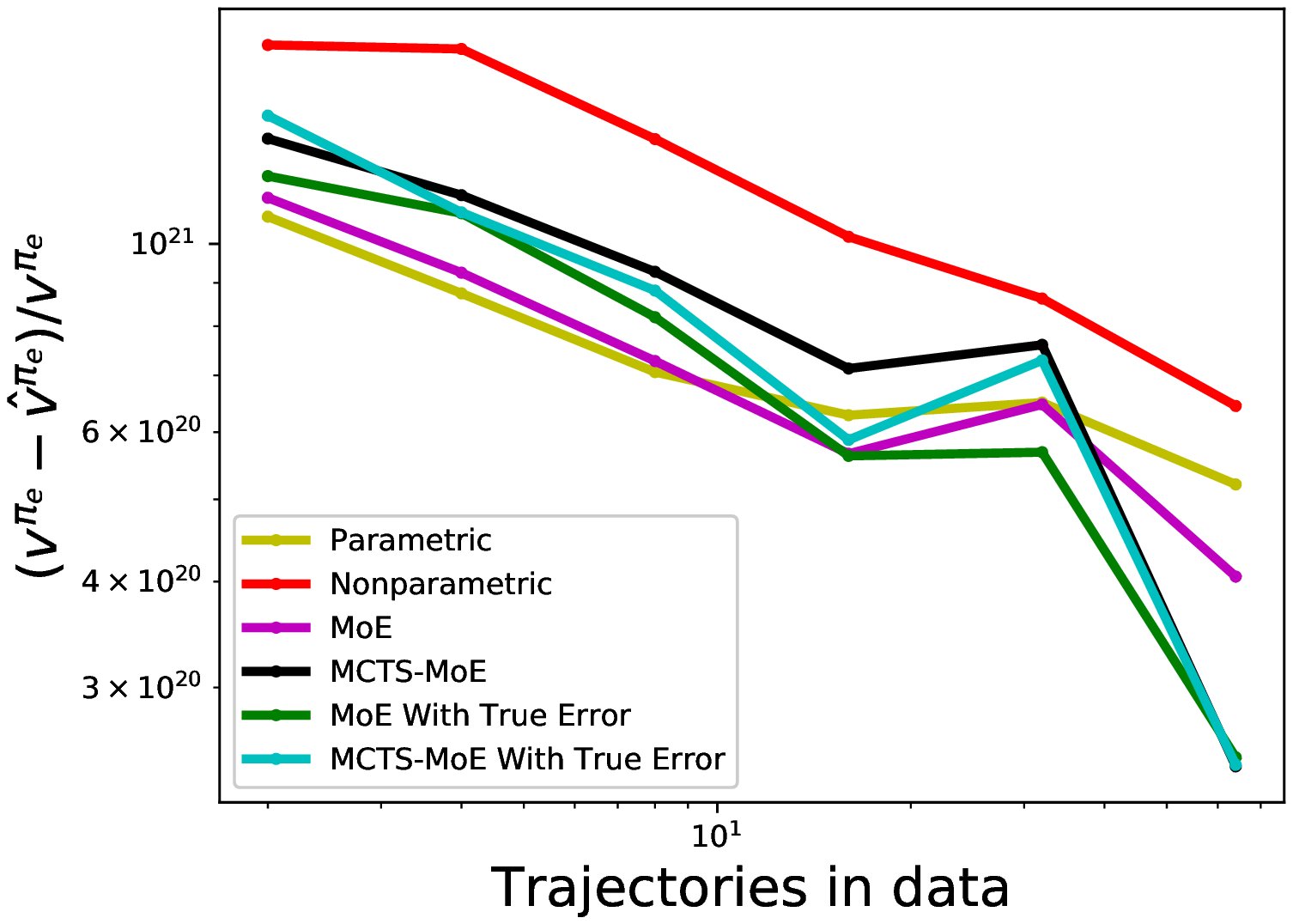}}
\caption{\textbf{Empirical check of consistency.} For both medical simulators the value estimation error decreases as the number of observed trajectories is increased. For both domains we the behavior policy is the $\epsilon$-greed policy with $\epsilon=0.4$.}
\label{fig:consistency_experiments}
\end{figure}

\subsection{Effect of the metric on value estimation for HIV}
\label{sec:different_metric_for_hiv}

When presenting the results for the HIV simulator, we noted that for high values of randomness in the behavior policy, the MCTS-MoE is outperformed by the parametric model and the greedy MoE, despite performing well in terms of the trajectory error. We argued this effect can be attributed to the distance metric used to quantify the transition error, which does not take into account the fact that some dimensions are more strongly correlated with the reward than others. This claim is further supported by the observation that in the regime where the MCTS-MoE performs poorly in terms of value estimation, the nonparametric performs significantly worse than all other methods, despite performing reasonably well in terms of trajectory error.

To further investigate the effect of the metric we ran our experiments again but used a metric which gives 20 times more weight to the $6^{th}$ dimension in the state space. This dimension (bottom right plot in Figures \ref{fig:hiv_optimal_policy_true_traj}) and \ref{fig:hiv_random_behavior_in_ss_behavior_traj}) represents the number of immune effectors in the patient's body and is most strongly correlated with the reward. In Figure \ref{fig:different_metric_for_hiv} we present the results for OPE on the HIV simulator with this new metric and demonstrate that indeed using this new metric improves the performance of the MCTS-MoE simulator in terms of value estimation, at the cost of degrading the trajectory error.

\begin{figure}[t]
\centering
\subcaptionbox{HIV\label{fig:hiv_traj_err_diff_metric}}
{\includegraphics[width=0.23\textwidth]{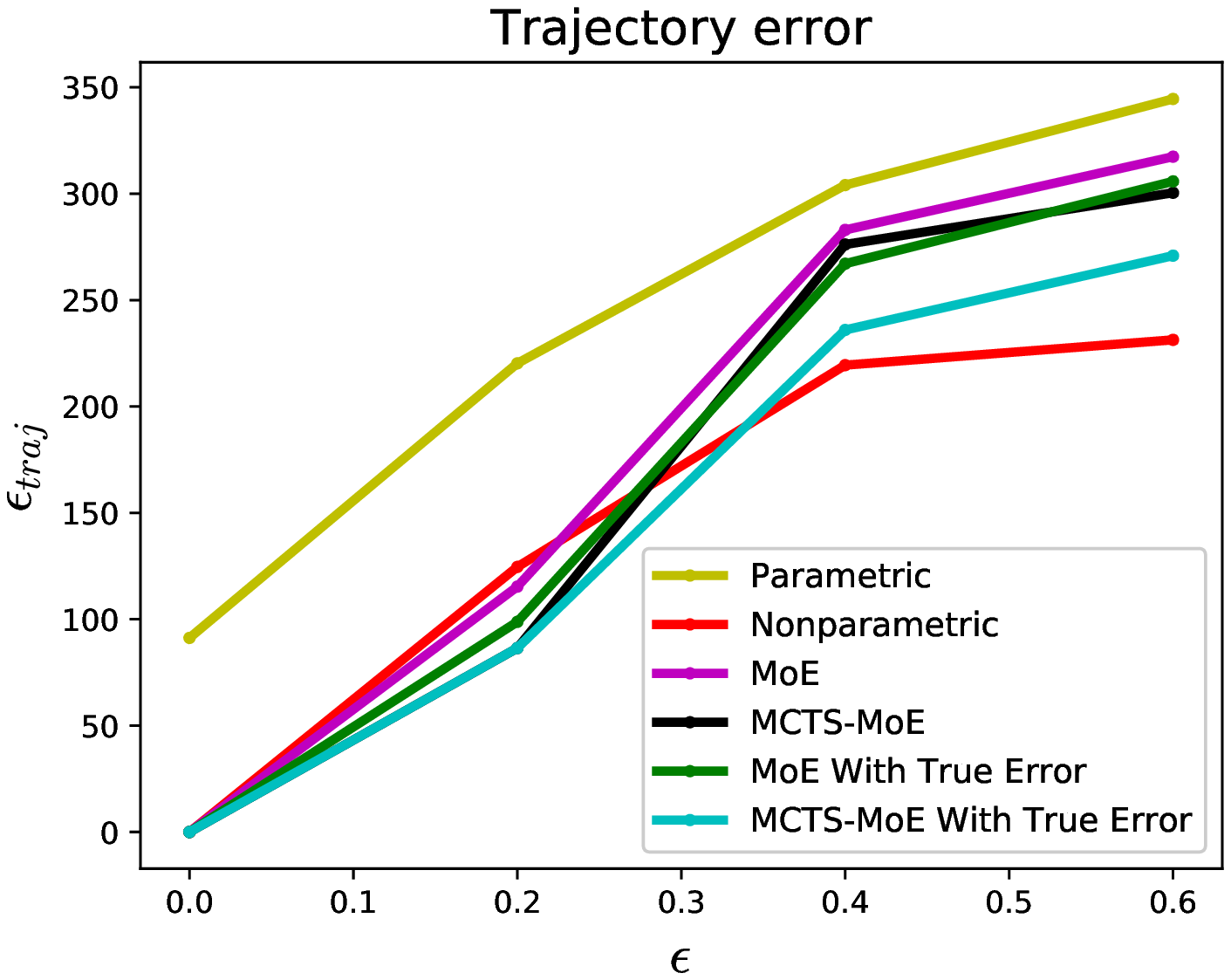}}
\subcaptionbox{HIV\label{fig:hiv_eval_policy_value_est_err_squared_diff_metric}}
{\includegraphics[width=0.23\textwidth]{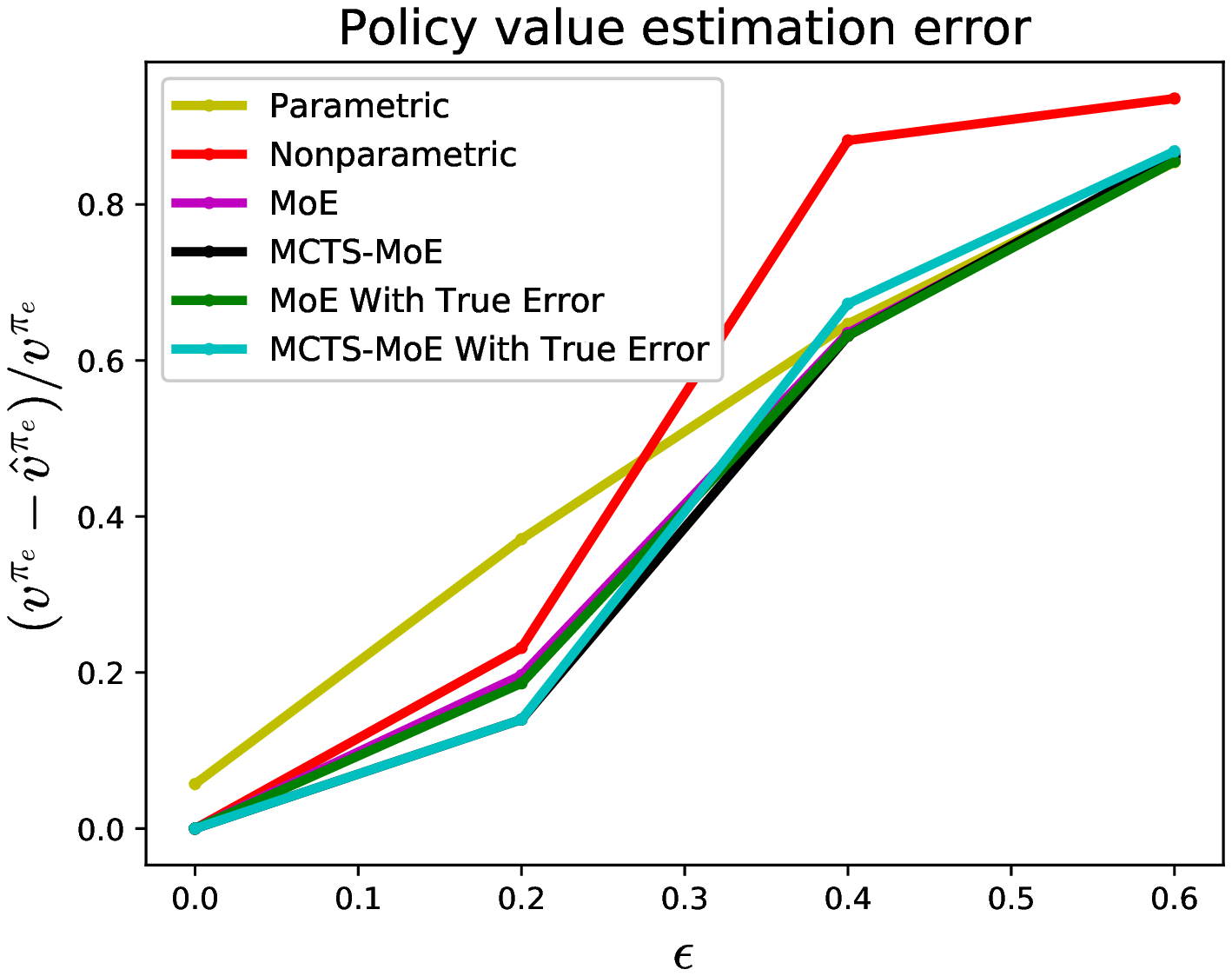}}
\caption{\textbf{Effect of the metric.} By replacing the Euclidean distance with a metric that puts more weight on state dimensions which are strongly correlated with reward, the value estimation performance of the MCTS-MoE can be improved at the cost of trajectory error.}
\label{fig:different_metric_for_hiv}
\end{figure}

\end{appendices}

\end{document}